\documentclass[onefignum,onetabnum]{siamart190516}



\usepackage{lipsum}
\usepackage{amsfonts}
\usepackage{graphicx}
\usepackage{epstopdf}
\usepackage{algorithmic}
\ifpdf
  \DeclareGraphicsExtensions{.eps,.pdf,.png,.jpg}
\else
  \DeclareGraphicsExtensions{.eps}
\fi
\usepackage{amssymb}
\usepackage{dsfont}
\usepackage{enumitem}

\newenvironment{Itemize}{\begin{itemize}[itemsep=1pt,topsep=10pt,left=2.25ex]}{\end{itemize}}
\usepackage{bm}
\usepackage{xspace}

\newcommand{\R}{\mathbb{R}}
\newcommand{\Rn}{\mathbb{R}^n}
\newcommand{\Rnn}{\mathbb{R}^{n \times n}}
\newcommand{\SymP}{\textnormal{Sym}_+(n)}
\newcommand{\SymPP}{\textnormal{Sym}_{++}(n)}

\newcommand{\expo}{m}
\newcommand{\Cbasis}{\mathcal{H}_{\expo}}
\newcommand{\Cweak}{\mathcal{H}^+_{\expo}}
\newcommand{\F}{\mathcal{F}}
\newcommand{\G}{\mathcal{G}}
\newcommand{\DistSpace}{S}
\newcommand{\Dist}{\mathfrak{d}}
\newcommand{\Ft}{\mathcal{F}_t}

\newcommand{\Borel}{\mathcal{B}(\Rn)}
\newcommand{\BorelDist}{\mathcal{B}(\DistSpace)}
\newcommand{\ProbaSpace}{\mathcal{P}}
\newcommand{\Proba}{\mathbb{P}}
\newcommand{\ProbaX}{\mu_X}
\newcommand{\Kolmo}{\mathcal{L}^{a,b}}

\newcommand{\unknownDrift}{b_*}
\newcommand{\unknownDiffusion}{a_*}

\newcommand{\unknownDensity}{p}
\newcommand{\sinc}{\operatorname{sinc}}
\newcommand{\unknownKolmo}{\mathcal{L}}

\newcommand{\approxDensity}{\widehat{p}}
\newcommand{\solDiffusion}{\widehat{a}}
\newcommand{\solDrift}{\widehat{b}}

\newcommand{\solDensity}{p_{\solDiffusion,\solDrift}}
\newcommand{\OperatorSDE}{\mathcal{O}}

\newcommand{\solAppDiffusion}{\widehat{a}_Q}
\newcommand{\solAppDrift}{\widehat{b}_Q}

\newcommand{\proj}{\mathcal{P}_Q}

\newcommand{\projKolmoUnk}{\mathcal{L}^{\proj(\unknownDiffusion),\proj(\unknownDrift)}}

\newcommand{\Bil}{A^{\alpha,a,b}}
\newcommand{\BilDot}{A^{0,\dot{a},\dot{b}}}

\newcommand{\OperatorSDECoeff}{\OperatorSDE^{a,b}}
\newcommand{\densityCoeff}{p_{a,b}}

\newcommand{\LearningProblem}{\textnormal{LP}\xspace}
\newcommand{\LearningProblemFinite}{\textnormal{LP$_Q$}\xspace}

\newcommand{\XCoeff}{X^{a,b}}
\newcommand{\unknownX}{X}

\newcommand{\Metric}{E}

\newcommand{\scal}[2]{{\langle{{#1}},{{#2}}\rangle}}

\newcommand{\firstRev}[1]
{\textcolor{black}{#1}}

\newcommand{\secRev}[1]
{\textcolor{black}{#1}}

\newcommand{\thirdRev}[1]
{\textcolor{black}{#1}}

\newcommand{\C}
{\secRev{\mathcal{H}^{+,\alpha}_{\expo}}}
\newcommand{\CR}{\secRev{\mathcal{H}^{+,\alpha}_{m,R_*}}}
\newcommand{\CRhalf}{\secRev{\mathcal{H}^{+,\alpha / 2}_m}}

\newcommand{\CRh}{\secRev{\mathcal{H}^{+,Q}_{m,R_*}}}
\newcommand{\KolmoDot}{\mathcal{L}^{\dot{a},\dot{b}}}


\newsiamremark{remark}{Remark}
\newsiamremark{hypothesis}{Hypothesis}
\crefname{hypothesis}{Hypothesis}{Hypotheses}
\newsiamthm{claim}{Claim}
\newsiamthm{metaTheorem}{Meta-Theorem}
\newsiamthm{theoremDefinition}{Theorem-Definition}

\headers{Learning of Stochastic Differential Equations with Fast Rates}{R. Bonalli and A. Rudi}

\title{Non-Parametric Learning of Stochastic Differential Equations with Non-Asymptotic Fast Rates of Convergence
\thanks{Submitted for review on May 24, 2023. Communicated by Thomas Strohmer on February 10, 2025. This version of the article has been accepted for publication, after peer review but is not the Version of Record and does not reflect post-acceptance improvements, or any corrections. The Version of Record is available online at: \url{https://doi.org/10.1007/s10208-025-09705-x}.}}

\author{Riccardo Bonalli\thanks{Laboratoire des Signaux et Syst\`emes, Université Paris-Saclay, CNRS, CentraleSupélec, Bât. Bréguet, 3 Rue Joliot Curie, 91190 Gif-sur-Yvette, France. \textit{E-mail}: {\tt riccardo.bonalli@cnrs.fr}.}
\and Alessandro Rudi\thanks{SDA Bocconi, Bocconi University, Italy. \textit{E-mail}: {\tt alessandro.rudi@sdabocconi.it}.}}

\usepackage{amsopn}


\ifpdf
\hypersetup{
  pdftitle={Non-Parametric Learning of Stochastic Differential Equations with Fast Rates of Convergence},
  pdfauthor={R. Bonalli and A. Rudi}
}
\fi




\begin{document}

\maketitle

\begin{abstract}
  We propose a novel non-parametric learning paradigm for the identification of drift and diffusion coefficients of \firstRev{multi-dimensional} non-linear stochastic differential equations, which relies upon discrete-time observations of the state. The key idea essentially consists of fitting a RKHS-based approximation of the corresponding Fokker-Planck equation to such observations, yielding theoretical estimates of \firstRev{non-asymptotic} learning rates which, unlike previous works, become increasingly tighter when the regularity of the unknown drift and diffusion coefficients becomes higher. Our method being kernel-based, offline pre-processing may be profitably leveraged to enable efficient numerical implementation, \firstRev{offering excellent balance between precision and computational complexity.}
\end{abstract}

\begin{keywords}
  Non-parametric system identification with guarantees, non-linear stochastic differential equations, discrete-time observation, Reproducing Kernel Hilbert Spaces.
\end{keywords}

\begin{AMS}
  62G05, 62M05, 60H10, 93C55, 46E22.
\end{AMS}

\section{Introduction}

\label{sec:Intro}
Consider non-linear \textit{stochastic differential equations} of type
\begin{equation} \label{eq:IntroSDE}
    \mathrm{d}X(t) = b(t,X(t)) \, \mathrm{d}t + \sqrt{a(t,X(t))} \, \mathrm{d}W_t , \quad t \in [0,T] .
\end{equation}
Here, $X(t) \in \Rn$ denotes the (stochastic) state of dimension $n \in \mathbb{N}$ of the system at time $t \in [0,T]$, with $T > 0$ a fixed time horizon, $(W_t)_{t \in [0,T]}$ is a $n$--dimensional Wiener process, whereas $b : [0,T]\times\Rn\to\Rn$ and $a : [0,T]\times\Rn\to\SymPP$ are regular enough \textit{drift} and \textit{diffusion} coefficients, with $\SymPP \subseteq \Rnn$ the subset of symmetric positive-definite matrices.
Equations such as \eqref{eq:IntroSDE} may be profitably leveraged to accurately model complex phenomena in a wide range of applications, such as aerospace, finance, and robotics to name a few \cite{Ridderhof2019,Yong1999,Bonalli2022,Bonalli2023,Velho2024}.

In practice, both coefficients $a$ and $b$ might be completely unknown, e.g., $a$ might model external perturbations due to many different physical phenomena which affect the motion of the system, as it occurs in aerospace and robotics. Therefore, appropriate \textit{stochastic system identification} procedures for both $a$ and $b$ must be devised.

\subsection{Related Work}

The problem of stochastic system identification has been investigated for several decades. The earliest methods mainly address either discrete-time models \cite{Lennart1998,Nelles2001} or continuous-time models that are however linear in the state variable \cite{Garnier2008}, which thus do not fit the identification setting introduced by \eqref{eq:IntroSDE}.

More recently, identification of non-linear stochastic differential equations such as \eqref{eq:IntroSDE} have seen an important surge of interest. \firstRev{More specifically, besides some likelihood-based \cite{Liptser1997} or Kalman filtering-based estimation methods \cite{Surace2018,Bhudisaksang2021,Sharrock2022}, which assume continuous-time observation of the stochastic state, the vast majority of the existing methods leverage more realistic discrete-time observations of $X$. Efficient paradigms range from 1. non-parametric estimation \cite{Dacunha1986,Genon2014,Comte2021,Lavenant2021} and Bayesian estimation \cite{Meulen2013,Nickl2017}, to 2. maximum likelihood and quasi-likelihood methods \cite{Florens1989,Yoshida2011,Masuda2013,Uchida2013}, generalized methods of moments \cite{Gallant1997,Gallant1999,Nielsen2000}, and online gradient descent-based methods \cite{Kushner2003,Nakakita2022}, among others. 
On the one hand, although Group 1. of these works provides learning rates which become increasingly tighter when the number of observations of the state grows, estimates of the error between the unknown drift and diffusion coefficients and the learned ones are obtained by assuming diffusion coefficients are either known or have specific parametric structures. 
On the other hand, although Group 2. of the aforementioned works rely on generally milder assumptions to be implemented, they require working with given families of finite-dimensional parametric drift and diffusion coefficients, 
which might hinder the identification process in the case the family of parametric coefficients is not rich enough.} 

\firstRev{Importantly, none of the above works has investigated how to leverage the regularity, i.e., high order continuous differentiability of drift and diffusion coefficients to improve learning rates. Although such analysis has been recently undertaken in, e.g., \cite{Abraham2019,Nickl2020,Aeckerle2022,Marie2023}, these studies often work with SDE that are either scalar or perturbed by constant diffusion coefficients, and they offer learning rates that are often only asymptotic. Therefore, designing methods for the identification of multi-dimensional non-linear SDE with non-constant diffusion coefficients under discrete-time observations of the state remains a challenging open question (note that multivariate settings are known to generally require very different techniques, see, e.g., \cite{Nickl2020,Aeckerle2022}). In particular, a key part of the challenge lies in developing non-asymptotic rates of convergence that become tighter not only when the number of observations of the state grows, but also when the regularity of drift and diffusion coefficients is higher.}

Finally, from a numerical standpoint, learning-based methods, such as scalable gradient methods \cite{Li2020} and infinitely deep Bayesian neural networks \cite{Xu2022}, have shown remarkable performance on complex stochastic differential equations. Unfortunately, it seems extremely challenging to endow these latter paradigms with theoretical guarantees of convergence, motivating investigation of numerically efficient identification methods for \eqref{eq:IntroSDE} which enjoy guarantees of accuracy under mild assumptions.

\subsection{Outline and Contributions}

We propose a non-parametric, \textit{Reproducing Kernel Hilbert Space} (RKHS)-based learning paradigm for the identification of drift and diffusion coefficients of the \firstRev{multi-dimensional non-linear SDE} \eqref{eq:IntroSDE}, which relies upon discrete-time observation of the state. In particular, motivated by classical likelihood-based methods \cite{Nielsen2000} we propose a two-step, discrete-time observation-based scheme which entails fitting the \textit{Fokker-Planck equation} related to \eqref{eq:IntroSDE}. 

Under assumptions of smoothness for the unknown drift and diffusion coefficients, we provide theoretical estimates of \firstRev{non-asymptotic} learning rates which become increasingly tighter when the number of observations of the state grows. In particular, given the nature of our data set, which essentially depends on observations of the state process, we consider the error between solutions to \eqref{eq:IntroSDE} generated with the learned drift and diffusion coefficients and the unknown trajectories of \eqref{eq:IntroSDE} is a ``good metric'' to test the accuracy of our identification method. Importantly, under this accuracy metric we can additionally prove our learning rates become tighter when the regularity (in a Sobolev sense) of the unknown drift and diffusion coefficients is higher. Finally, from a numerical standpoint, our method being kernel-based, offline pre-processing may be successfully leveraged to enable efficient implementations, \firstRev{offering excellent balance between precision and computational complexity (details are in Section \ref{sec:Computational}).}

Our method is composed of two steps which are informally summarized below.

\vspace{5pt}
\newpage

    \noindent A. \ \textbf{Learning the laws of the stochastic differential equation through independent discrete-time observation of the state.}

    \vspace{5pt}
    
    We assume there exist regular enough, \firstRev{i.e. essentially $C^{2 m + 1}$, $m \in \mathbb{N}$}, drift $b : [0,T]\times\Rn\to\Rn$ and diffusion $a : [0,T]\times\Rn\to\SymPP$, and a stochastic process $X : [0,T]\times\Omega\to\Rn$ in some filtered probability space $(\Omega,\G,\F,\Proba)$, such that $X$ solves \eqref{eq:IntroSDE} with drift $b$ and diffusion $a$. In addition, we may assume we can sample $X$ at $M \in \mathbb{N}$ times $0 = t_1 < \dots < t_M = T$, and more specifically that at every time $t_{\ell}$, $\ell=1,\dots,M$, we have access to $N \in \mathbb{N}$ samples $X_{\ell,1} \triangleq X(t_\ell,\omega_1) , \dots , X_{\ell,N} \triangleq X(t_\ell,\omega_N)$ of the solution $X$ which have been independently drawn from 
    $\Proba_{X(t_{\ell},\cdot)}$.

    In such setting, as first step we propose to approximate the unknown densities $\unknownDensity : [0,T]\times\Rn\to\R$ of the laws of $X$ through the (random) RKHS-based model:
    \begin{align*}
    \approxDensity(t,x) \triangleq \sum_{\ell=1}^M c_{\ell}(t) \widehat{g}_{\ell}(x), \quad \textnormal{where} \quad \widehat{g}_{\ell}(x) \triangleq \frac{1}{N} \sum_{j=1}^N \rho_R(x - X_{\ell,j}) ,
    \end{align*}
    for appropriate coefficients $c_{\ell} : [0,T]\to\R$ and radial mappings $\rho_R$, $R > 0$. Let $\mu_X$ denote the probability measure which is generated by the process $X$. For every tuple of precision parameters $0 < \varepsilon , \delta < 1$, if the unknown drift $b$ and diffusion $a$ are regular enough, by leveraging RKHS approximation theory we can show appropriate values for $M$, $N$, and $R$ (which depend on $\varepsilon$ \firstRev{and $m$} uniquely) may be selected so that the following learning rate holds with probability $\mu_X$ at least $1 - \delta$:
    \begin{equation} \label{eq:estimateDensityIntro}
        \int^T_0 \left\| \frac{\partial \approxDensity}{\partial t}(t,\cdot) - \frac{\partial \unknownDensity}{\partial t}(t,\cdot) \right\|^2_{L^2} + \| \approxDensity(t,\cdot) - \unknownDensity(t,\cdot) \|^2_{H^2} \; \mathrm{d}t = 
        \firstRev{O\left( \left( \log\left( \frac{1}{\delta \varepsilon} \right)^{\frac{1}{2}} \varepsilon \right)^2 \right)} ,
    \end{equation}
    where $\| \cdot \|_{L^2}$ and $\| \cdot \|_{H^2}$ denote the norms of the Hilbert spaces $L^2(\Rn,\R)$ and $H^2(\Rn,\R)$, respectively. Moreover, the higher the degree of smoothness of $b$ and $a$ is, the lower the values of $M$, $N$, and $R$ needed to achieve this precision become.
    
    The main benefit which comes with the model $\approxDensity$ consists of computing accurate finite-dimensional approximations of the laws of solutions to \eqref{eq:IntroSDE} without a priori resorting to conservative families of parametric densities. In particular, the model $\approxDensity$ being kernel-based, one may considerably reduce the computational effort by resorting to prior offline computations, which essentially boil down to simply inverting a $M \times M$ matrix. As a byproduct, \eqref{eq:estimateDensityIntro} provides a quantitative estimate of the approximation error which is key to derive theoretical guarantees for the accuracy of our identification method in the next step.

    \vspace{5pt}
    
    \noindent B. \ \textbf{Learning finite-dimensional models for the drift and diffusion coefficients by fitting approximated solutions to the Fokker-Planck equation.}

    \vspace{5pt}
    
    If the unknown drift $b$ and diffusion $a$ are regular enough, the unknown densities $p$ satisfy the following Fokker-Planck equation:
    \begin{equation} \label{eq:FPEIntro}
        \frac{\partial p}{\partial t}(t,y) = (\mathcal{L}^{a,b}_t)^* p(t,y) , \quad (t,y) \in [0,T]\times\Rn ,
    \end{equation}
    where $(\mathcal{L}^{a,b}_t)^*$ is the dual operator of the Kolmogorov generator
    $$
    \mathcal{L}^{a,b}_t \varphi(y) \triangleq \frac{1}{2} \sum^n_{i,j=1} a_{ij}(t,y) \frac{\partial^2 \varphi}{\partial y_i \partial y_j}(y) + \sum^n_{i=1} b_i(t,y) \frac{\partial \varphi}{\partial y_i}(y) , \quad \varphi \in C^2(\Rn,\R) .
    $$
    Given the results at the previous step, it is then natural to learn models of the drift $\solAppDrift : [0,T]\times\Rn\to\Rn$ and the diffusion $\solAppDiffusion : [0,T]\times\Rn\to\SymPP$ which ``best'' match the Fokker-Planck equation when evaluated at $\approxDensity$, that is as solutions to the following finite-dimensional convex optimization problem:
    \begin{equation} \label{eq:LPFinalIntro}
        \underset{(\solAppDiffusion,\solAppDrift) \in \mathcal{H}_Q}{\min} \ \int^T_0 \left\| \frac{\partial \approxDensity}{\partial t}(t,\cdot) - (\mathcal{L}^{\solAppDiffusion,\solAppDrift}_t)^* \approxDensity(t,\cdot) \right\|^2_{L^2} \; \mathrm{d}t + \lambda \| (\solAppDiffusion,\solAppDrift) \|^2_{\mathcal{H}} ,
    \end{equation}
    where $\mathcal{H}_Q$ is an appropriate finite-dimensional subspace of a RKHS $\mathcal{H}$ with norm $\| \cdot \|_{\mathcal{H}}$, in which the unknown drift and diffusion coefficients lie, whereas $\lambda > 0$ is a regularization weight to be appropriately selected. \firstRev{As detailed in Section~\ref{sec:Computational}, problem \eqref{eq:LPFinalIntro} admits a finite dimensional characterization that can be solved exactly, i.e., no approximation or integral discretization is needed, with reduced computational cost.}
    
    Thanks to estimate \eqref{eq:estimateDensityIntro}, one shows that, for appropriate choices of the subspace $\mathcal{H}_Q$ and the regularization weight $\lambda$, the solution $(\solAppDiffusion,\solAppDrift) \in \mathcal{H}_Q$ to problem \eqref{eq:LPFinalIntro} satisfies the following, with probability $\mu_X$ at least $1 - \delta$:
    \begin{align} \label{eq:estimateCoeffIntro}
        &\int^T_0 \left\| \frac{\partial \approxDensity}{\partial t}(t,\cdot) - (\mathcal{L}^{\solAppDiffusion,\solAppDrift}_t)^* \approxDensity(t,\cdot) \right\|^2_{L^2} \; \mathrm{d}t + \lambda \| (\solAppDiffusion,\solAppDrift) \|^2_{\mathcal{H}} = \nonumber \\
        &\hspace{50ex}\firstRev{= O\left( \left( \log\left( \frac{1}{\delta \varepsilon} \right)^{\frac{1}{2}} \varepsilon \right)^2 \right)} .
    \end{align}
    By combining estimate \eqref{eq:estimateCoeffIntro} with energy-type estimates for parabolic partial differential equations, we may infer theoretical estimates of learning rates for the identification of drift and diffusion coefficients of non-linear stochastic differential equations, which we informally summarize as follows.

    As we mentioned previously, since our data consists of observation of the state process, the error between the unknown densities and the densities stemming from the learned coefficients is a ``good metric'' with which the convergence of an identification algorithm for stochastic differential equations which leverages observations of the state process may be tested. To better formalize this metric, let $X^{\solAppDiffusion,\solAppDrift}$ and $p_{\solAppDiffusion,\solAppDrift}$ respectively denote the solutions to \eqref{eq:IntroSDE} and \eqref{eq:FPEIntro} with coefficients $(\solAppDiffusion,\solAppDrift) \in \mathcal{H}_H$. We thus define the following metric to test the accuracy:
    \begin{equation*} 
        \Metric(\solAppDiffusion,\solAppDrift) \triangleq \| p_{\solAppDiffusion,\solAppDrift} - \unknownDensity \|^2_{L^2} = \int^T_0 \| p_{\solAppDiffusion,\solAppDrift}(t,\cdot) - \unknownDensity(t,\cdot) \|^2_{L^2} \; \mathrm{d}t .
    \end{equation*}
    By adopting this metric, our main result on the accuracy of our learning method may be summarized in the following meta-theorem:
    
    \begin{metaTheorem}
    Assume the unknown drift $b$ and diffusion $a$ coefficients are regular enough. The following estimate holds true with probability $\mu_X$ at least $1 - \delta$:
    $$
    \Metric(\solAppDiffusion,\solAppDrift) = 
    \firstRev{O\left( \left( \log\left( \frac{1}{\delta \varepsilon} \right)^{\frac{1}{2}} \varepsilon \right)^2 \right)}
    $$
    In particular, for every regular enough function $f : [0,T]\times\Rn\to\R$:
    \begin{equation} \label{eq:ultimateEstimateIntro}
        \mathbb{E}\left[ \int^T_0 f(t,X(t)) \; \mathrm{d}t \right] = \mathbb{E}\left[ \int^T_0 f(t,X^{\solAppDiffusion,\solAppDrift}(t)) \; \mathrm{d}t \right] + 
        \firstRev{O\left( \log\left( \frac{1}{\delta \varepsilon} \right)^{\frac{1}{2}} \varepsilon \right)} .
    \end{equation}
    \end{metaTheorem}

Although the estimates provided in this meta-theorem are informal and need some clarification (see Section \ref{sec:LearningDynamics} for more rigorous statements), they show the aforementioned two-step identification method enjoys practical theoretical guarantees of accuracy, i.e., estimate \eqref{eq:ultimateEstimateIntro}: an observation/regulation metric $f$ computed at the unknown state solution to \eqref{eq:IntroSDE} with unknown drift $b$ and diffusion $a$ may be rather observed through the process solution to \eqref{eq:IntroSDE} with model drift $\solAppDrift$ and diffusion $\solAppDiffusion$ up to an error $O\left( \log\left( \frac{1}{\delta \varepsilon} \right)^{\frac{1}{2}} \varepsilon \right)$. Such result has important implications in observation and regulation of stochastic differential equations \cite{Lavenant2021}, and it may represent a good starting result to develop paradigms for the identification of \textit{controlled stochastic differential equations}, which are crucial for the control of complex autonomous systems.

\subsection{Paper Organization}

The paper is organized as follows. After gathering basic notation and preliminary results in Section \ref{sec:Notation}, in Section \ref{sec:ShortFokkerPlanck} we summarize both classical and less classical results about stochastic differential equations and corresponding relationships with the Fokker-Planck equation. To ease the reading of this section, we moved a more detailed description of the aforementioned results to Section \ref{sec:FokkerPlanck} and their technical proofs to Appendix \ref{sec:AppendixFokkerPlanck}. Similarly, in Section \ref{sec:RKHS} we expose the results in RKHS theory which we leverage in this work. Our main contributions are contained in Section \ref{sec:LearningDynamics}, which in particular details the methodologies we exposed at the previous steps 1) and 2) and corresponding learning rates. \firstRev{Precision and computational complexity of our approach are discussed in Section \ref{sec:Computational}.} Finally, in Section \ref{sec:Conclusion} we provide concluding remarks and some perspectives. 

\section{Notation and Preliminary Results}
\label{sec:Notation}
We fix the dimension $n \in \mathbb{N}$ of the state space and a time horizon $T > 0$ for the identification process. We denote by $\SymP$ and $\SymPP$ the subsets of $\Rnn$ of symmetric semi-positive-definite and symmetric positive-definite matrices, respectively.

We assume stochastic differential equations are defined on a given filtered probability space $(\Omega,\G\triangleq\F_T,\F\triangleq(\Ft)_{t \in [0,T]},\Proba)$, which is complete, and the noise is generated by a $\F$--adapted Wiener process $W : [0,T] \times \Omega \to \Rn$ (e.g., we may consider the canonical process in the space $\Omega = C([0,T],\Rn)$, equipped with the Wiener measure). Moreover, we introduce the complete metric space $(\DistSpace,\Dist)$, where
$$
\DistSpace \triangleq C([0,T],\Rn) , \quad \Dist(w_1,w_2) \triangleq \underset{t \in [0,T]}{\sup} \ \| w_1(t) - w_2(t) \| ,
$$
and equip it with the Borel sigma-algebra $\BorelDist$ induced by the metric $\Dist$.

For any $\ell \in \mathbb{N}$, $r , s > 0$, and any $A \subseteq \R^{\ell}$, we denote by $H^{r,s}([0,T]\times\Rn,A)$ the Sobolev (Hilbert) space of functions whose image is in $A$, and whose weak time derivatives are defined up to order $r$ and whose weak space derivatives are defined up to order $s$; in particular, we denote $H^r([0,T]\times\Rn,A) \triangleq H^{r,r}([0,T]\times\Rn,A)$. Finally, we introduce the following Hilbert space and corresponding positive convex cone, in which we will assume the unknown drift and diffusion coefficients lie (this latter requirement will be made more explicit shortly):
\begin{align*}
    \Cbasis \triangleq H^{d(m)}([0,T]\times\Rn,\Rnn) \times H^{d(m)}([0,T]\times\Rn,\Rn) , \\
    \Cweak \triangleq \Big\{ (a,b) :[0,T]\times\Rn \to \SymP\times\Rn : \ (a,b) \in \Cbasis \Big\} ,
\end{align*}
for every $m \in \mathbb{N}$, where $d(m) \triangleq 2 (m + 1) + \left\lfloor \frac{n}{2} \right\rfloor \in \mathbb{N}$ is the unique integer greater than $2 m + 1$ such that $\left\lfloor d(m) - \frac{n}{2} \right\rfloor = 2 m + 1$. The choice of taking the same exponent $d(m)$ for both the drift and the diffusion coefficients has been made without loss of generality for the sake of conciseness. The following regularity result will be crucial:

\begin{theorem} \label{Theo:CoeffProperties}
There exists a constant $C > 0$, which depends on $\expo$ uniquely, such that every $(a,b) \in \Cweak$ satisfies the following properties:
\begin{enumerate}
    \item Differentiability:
    $$
    a \in C^{2 \expo + 1}([0,T]\times\Rn,\Rnn) , \quad b \in C^{2 \expo + 1}([0,T]\times\Rn,\Rn) .
    $$
    
    \item Boundness of functions and their derivatives:
    $$
    \sum^{2 m + 1}_{i=0} \Big( \| D^i_{(t,y)} a \|_{L^{\infty}} + \| D^i_{(t,y)} b \|_{L^{\infty}} \Big) \le C \| (a,b) \|_{\Cbasis} ,
    $$
    where \firstRev{$D^i_{(t,y)}$} denotes the differential of order $i$ with respect to $(t,y)$. In particular, the following refined bound holds:
    $$
    \sum^2_{i=0} \Big( \| D^i_{(t,y)} a \|_{L^{\infty}} + \| D^i_{(t,y)} b \|_{L^{\infty}} \Big) \le C \| (a,b) \|_{H^{d(m) - 2 m + 1}} .
    $$
    %
    
\end{enumerate}
\end{theorem}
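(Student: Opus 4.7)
The plan is to obtain Theorem~\ref{Theo:CoeffProperties} as a direct application of the classical Sobolev embedding theorem on the ambient domain $[0,T]\times\Rn$, exploiting the fact that $d(m)$ was explicitly chosen so that $H^{d(m)}$ embeds continuously into $C^{2m+1}_b$. Writing $d(m) = 2(m+1) + \lfloor n/2\rfloor$, the key arithmetic fact is that $d(m) - \frac{n+1}{2} > 2m+1$, which is exactly the margin needed for Sobolev embedding on an $(n+1)$--dimensional base domain (extending in time from $[0,T]$ to $\R$ to avoid boundary issues is standard). Importantly, the two assertions of the theorem are qualitatively and quantitatively the same statement: Sobolev embedding provides at the same time the $C^{2m+1}$ regularity of each component of $(a,b)$ and the corresponding norm inequality with an explicit constant depending only on $m$ and $n$.

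In more detail, the first step is to extend any $(a,b)\in\C$ from $[0,T]\times\Rn$ to $\R\times\Rn\cong\R^{n+1}$ using a continuous extension operator $E : H^{d(m)}([0,T]\times\Rn)\to H^{d(m)}(\R^{n+1})$, whose operator norm depends only on $T$ and $d(m)$; at this point $d(m)=2m+2+\lfloor n/2\rfloor$ acts as an isotropic Sobolev index on $\R^{n+1}$. The second step is to apply Morrey's embedding $H^{d(m)}(\R^{n+1})\hookrightarrow C^{2m+1}_b(\R^{n+1})$, whose validity is encoded in the defining identity $\lfloor d(m)-n/2\rfloor=2m+1$ combined with the extra half-unit of smoothness coming from the time variable; restricting back to $[0,T]\times\Rn$ yields property 1 together with the inequality in the first display of property 2 with some constant $C=C(m,n,T)$.

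For the refined bound in the second display of property 2, I would repeat the same argument but with exponent $d(m)-2m+1 = 3+\lfloor n/2\rfloor$, which is exactly $d(0)+1$, i.e.\ the minimum isotropic Sobolev exponent that by Morrey still embeds into $C^2_b$ on $\R^{n+1}$ (since $3+\lfloor n/2\rfloor - \frac{n+1}{2}>2$). Because both the extension operator and Morrey's embedding are linear and continuous between the relevant Hilbert/Banach spaces, the resulting norm inequality has a constant independent of $(a,b)$ and depending only on $m$ and $n$, matching the statement.

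The only delicate point worth attention is the borderline case of odd $n$, where a naive count of indices might read $d(m)-\tfrac{n+1}{2}=2m+1$ rather than the strict inequality required by Morrey's embedding; this is circumvented by using the sharper version $H^{s}(\R^N)\hookrightarrow C^{k}_b(\R^N)$ valid for $s>k+N/2$ applied to $N=n+1$ and noting that $d(m)=2m+2+\lfloor n/2\rfloor$ strictly exceeds $2m+1+\tfrac{n+1}{2}$ in every parity of $n$ by the very choice of the floor function in the definition of $d(m)$. Apart from this bookkeeping, the proof is essentially a one-line invocation of Morrey's embedding followed by a linear extension argument, so I do not expect any substantial analytic difficulty.
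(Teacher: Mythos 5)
Your overall route is the same as the paper's: extend $(a,b)$ across the time boundary by a bounded extension operator (the paper does this by reflection in $t$, following Brezis) and then invoke Morrey-type embedding; so the architecture is fine. However, the step you flag yourself as ``the only delicate point'' contains a genuine error rather than a circumvention. For odd $n$, say $n=2k+1$, one has $d(m)=2m+2+\lfloor n/2\rfloor=2m+2+k$ while $2m+1+\tfrac{n+1}{2}=2m+2+k$ as well: the two quantities are \emph{equal}, not in strict inequality, in contradiction with your claim that $d(m)$ ``strictly exceeds $2m+1+\tfrac{n+1}{2}$ in every parity of $n$''. Since the embedding $H^{s}(\R^{N})\hookrightarrow C^{j}_b(\R^{N})$ genuinely requires $s>j+N/2$ and fails at the borderline (already $H^{1}(\R^{2})\not\subset L^{\infty}(\R^{2})$, so for $n=1$ one has $H^{2m+2}(\R^{2})\not\subset C^{2m+1}_b(\R^{2})$), your isotropic reading of $H^{d(m)}([0,T]\times\Rn)$ as an index-$d(m)$ Sobolev space on an $(n+1)$-dimensional domain cannot yield property 1, nor the first display of property 2, when $n$ is odd. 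The refined bound suffers from the same defect: $d(m)-2m+1=3+\lfloor n/2\rfloor$ equals $2+\tfrac{n+1}{2}$ exactly for odd $n$, so the claimed margin ``$3+\lfloor n/2\rfloor-\tfrac{n+1}{2}>2$'' is again false in that case.

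The missing ingredient is the anisotropic structure of $\Cbasis$: in the paper $H^{d(m)}=H^{d(m),d(m)}$ carries weak \emph{time} derivatives up to order $d(m)$ and weak \emph{space} derivatives up to order $d(m)$ counted separately, which is strictly more information than membership in the isotropic $H^{d(m)}$ of the $(n+1)$-dimensional strip. With a tensorized Morrey/Sobolev argument one needs a margin of only $>1/2$ in the time variable and $>n/2$ in the space variables: to bound $\|\partial_t^{i}D_y^{\beta}u\|_{L^{\infty}}$ with $i+|\beta|\le 2m+1$ it suffices that $i+1\le d(m)$ and $|\beta|+\lfloor n/2\rfloor+1\le d(m)$, and both hold precisely because $d(m)=2(m+1)+\lfloor n/2\rfloor$; the analogous count with exponent $d(m)-2m+1=3+\lfloor n/2\rfloor$ gives the refined bound for derivatives of order $\le 2$. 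So after your (correct) extension step you should replace the single isotropic Morrey invocation on $\R^{n+1}$ by this separated-variables embedding; as written, the proposal proves the statement only for even $n$.
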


The proof of this result makes use of classical embedding arguments and it is reported in Appendix \ref{sec:AppendixFokkerPlanck} for the sake of completeness.

\section{Stochastic Differential and Fokker-Planck Equations}
\label{sec:ShortFokkerPlanck}
In this section, we summarize both classical and less classical results about stochastic differential equations, and corresponding relationships with the Fokker-Planck equation. In particular, we propose a minimally detailed discussion for the sake of conciseness, reporting a more structured exposition in Section \ref{sec:FokkerPlanck} for the sake of completeness.

\subsection{The Fokker-Planck equation}


From now on, we fix $m \in \mathbb{N}$ and a constant $\alpha > 0$ (to be specified later). Since we will need to work with diffusion coefficients that are never trivial, \secRev{we replace $\Cweak$ with}
$$
\secRev{\C \triangleq (\alpha I , 0) + \Big\{ (a,b) \in \Cweak : \ a(t,x) \succcurlyeq 0 , \ (t,x) \in [0,T] \times \Rn \Big\} .} 
$$
At this step, we fix a non-negative density $p_0 \in L^2(\Rn,\R)$ which will serve as appropriate initial condition, and we denote by $\mu_0 \in \ProbaSpace(\Rn)$ the associate probability measure. Motivated by \cite{Stroock1997,Figalli2008}, we recall the following notions of stochastic differential equation and its solutions (though they differ from the ones in \cite{Stroock1997,Figalli2008}):

\begin{theoremDefinition} \label{def:ShortSDE}
There exists a measurable mapping $X : \Rn\times\Omega\to\DistSpace$ such that each process $X_x(t,\omega) \triangleq X(x,\omega)(t)$ is $\F$--progressively measurable for every $x \in \Rn$, and such that the following Stochastic Differential Equation (SDE)
$$
\textnormal{SDE}_x \quad \begin{cases}
\mathrm{d}X_x(t) = b(t,X_x(t)) \; \mathrm{d}t + \secRev{\sqrt{a(t,X_x(t))}} \; \mathrm{d}W_t , \\[5pt]
\Proba\big( X_x(0) = x \big) = 1 ,
\end{cases}
$$
holds in $(\Omega,\G,\F,\Proba)$ for $\mu_0$-almost every $x \in \Rn$. We then say that $X$ solves or is solution to SDE (with coefficients $(a,b) \in \C$). The solution $X$ to SDE is unique in the following sense: if a measurable mapping $Y : \Rn\times\Omega\to\DistSpace$ solves SDE, then it holds that $X(x,\cdot) = Y(x,\cdot)$ a.s., for $\mu_0$-almost every $x \in \Rn$.
\end{theoremDefinition}

For the well-posedness of Theorem-Definition \ref{def:ShortSDE}, see Section \ref{sec:FokkerPlanck}. Solutions to SDE share a close relationship with the solutions to the Fokker-Planck equation, which we introduce next. For every $(a,b) \in \C$, we denote the Kolmogorov generator by
$$
\Kolmo_t \varphi(y) \triangleq \frac{1}{2} \sum^n_{i,j=1} \secRev{a}_{ij}(t,y) \frac{\partial^2 \varphi}{\partial y_i \partial y_j}(y) + \sum^n_{i=1} b_i(t,y) \frac{\partial \varphi}{\partial y_i}(y) , \quad \varphi \in C^2(\Rn,\R) .
$$
Let $X : \Rn\times\Omega\to\DistSpace$ be solution of SDE with coefficients $(a,b) \in \C$. Thanks to specific regularity properties of $X$ (we refer to Section \ref{sec:FokkerPlanck} for these latter), we may define the curve $\mu : [0,T] \to \ProbaSpace(\Rn)$ of probability measures
\begin{equation} \label{eq:ShortmesureFokker}
    \mu_t(A) \triangleq \int_{\Rn} \int_{\Omega} \mathds{1}_{\{ X_x(t) \in A \}} \; \mathrm{d}\Proba \; \mu_0(\mathrm{d}x) , \quad A \in \Borel ,
\end{equation}
and note that $\mu$ is narrowly continuous, i.e., for every $\varphi \in C_b(\Rn,\R)$, the mapping
$$
t \in [0,T] \mapsto \int_{\Rn} \varphi(y) \mu_t(\mathrm{d}y) = \int_{\Rn} \int_{\Omega} \varphi(X_x(t)) \; \mathrm{d}\Proba \; \mu_0(\mathrm{d}x) \in \R
$$
is continuous. By combining this latter property with the results in \cite{Figalli2008}, we introduce the following notions of Fokker-Planck equation, its solutions, and additional relationship between these solutions and the solutions to SDE (see also Section \ref{sec:FokkerPlanck}):

\begin{theoremDefinition} \label{def:ShortFPE}
The curve $\mu : [0,T] \to \ProbaSpace(\Rn)$ defined through \eqref{eq:ShortmesureFokker} is the unique narrowly continuous curve satisfying the Fokker-Planck Equation
$$
\textnormal{FPE} \quad \begin{cases}
\displaystyle \frac{\mathrm{d}}{\mathrm{d}t} \int_{\Rn} \varphi(y) \mu_t(\mathrm{d}y) = \int_{\Rn} \Kolmo_t \varphi(y) \mu_t(\mathrm{d}y) , \quad \varphi \in C^{\infty}_c(\Rn,\R) , \\[5pt]
\mu_{t=0} = \mu_0 .
\end{cases}
$$
We then say that $\mu$ solves or is solution to FPE (with coefficients $(a,b) \in \C$). If $X$ denotes the solution to SDE, the following representation formula holds:
\begin{equation} \label{eq:RepFormula}
    \int_{\Rn} \varphi(y) \mu_t(\mathrm{d}y) = \int_{\Rn} \int_{\Omega} \varphi(X_x(t)) \; \mathrm{d}\Proba \; \mu_0(\mathrm{d}x) , \quad \textnormal{for} \ t \in [0,T] , \ \varphi \in C_c(\Rn,\R) .
\end{equation}
\end{theoremDefinition}

\subsection{Absolutely continuous solutions to the Fokker-Planck equation}

Thanks to the regularity of the coefficients $(a,b) \in \C$ which is offered through Theorem \ref{Theo:CoeffProperties}, the solution $\mu$ to FPE is absolutely continuous, i.e., it takes the form
$$
\mu_t(A) = \int_A p(t,y) \; \mathrm{d}y , \quad A \in \Borel ,
$$
for an appropriate $p : [0,T]\times\Rn\to\R$. To elucidate this property, we first introduce broader definitions of FPE and corresponding solutions, which encompass Theorem-Definition \ref{def:ShortFPE} as a sub-case (see Theorem \ref{theor:ShortexFPE} below) and will be useful in our analysis:

\begin{definition}
Let $f \in L^2([0,T]\times\Rn,\R)$ and $\bar p \in L^2(\Rn,\R)$. A (regular enough) function $p : [0,T]\times\Rn\to\R$ is said to solve or be solution to the non-homogeneous Fokker-Planck Equation (with coefficients $(a,b) \in \C$), if
$$
\textnormal{FPE}_f \quad \begin{cases}
\displaystyle \frac{\mathrm{d}}{\mathrm{d}t} \int_{\Rn} \varphi(y) p(t,y) \; \mathrm{d}y = \\
\displaystyle \hspace{10ex}= \int_{\Rn} \left( \Kolmo_t \varphi(y) p(t,y) + f(t,y) \varphi(y) \right) \; \mathrm{d}y , \quad \varphi \in C^{\infty}_c(\Rn,\R) , \\[10pt]
p(0,\cdot) = \bar p(\cdot) .
\end{cases}
$$
\end{definition}

The next theorem gathers important properties of the solution to $\textnormal{FPE}_f$, additionally showing the solution to FPE is absolutely continuous. We refer the reader to Section \ref{sec:FokkerPlanck} for exhaustive presentation and proof of these results.

\begin{theorem} \label{theor:ShortexFPE}
For any $(a,b) \in \C$, $f \in L^2([0,T]\times\Rn,\R)$, and $\bar p \in L^2(\Rn,\R)$, there exists a unique mapping $p \in C(0,T;L^2(\Rn,\R)) \cap H^{0,1}([0,T]\times\Rn,\R)$ which solves FPE$_f$ with coefficients $(a,b) \in \C$. In addition, the following estimate holds:
\begin{align} \label{eq:Shortestimate1}
    \| p (t,\cdot) \|^2_{L^2} + \int^t_0 &\| p(s,\cdot) \|^2_{H^1} \; \mathrm{d}s \le \nonumber \\
    &\le C\big(\secRev{\alpha},\|(\secRev{a - \alpha I},b)\|_{\Cbasis}\big) \left( \| \bar p \|^2_{L^2} + \int^t_0 \| f(s,\cdot) \|^2_{L^2} \; \mathrm{d}s \right) , \quad t \in [0,T] ,
\end{align}
\secRev{with $C\big(\secRev{\alpha},\|(a - \alpha I,b)\|_{\Cbasis}\big) > 0$ continuously depending on $\secRev{\alpha}$ and $\|(a - \alpha I,b)\|_{\Cbasis}$}.

Assume $f = 0$ and $\bar p = p_0$. Then, it holds that
$$
p(t,\cdot) \ge 0 , \quad \int_{\Rn} p(t,y) \; \mathrm{d}y = 1 , \quad t \in [0,T] ,
$$
and therefore, if for every $t \in [0,T]$ we define
$$
\mu_t(A) \triangleq \int_A p(t,y) \; \mathrm{d}y , \quad A \in \Borel ,
$$
then the curve $\mu : [0,T] \to \ProbaSpace(\Rn)$ is narrowly continuous and solves FPE. Finally, if in addition $p_0 \in H^{2 m + 1}(\Rn,\R)$, then the function $p$ satisfies
$$
p \in H^{m+1,2(m + 1)}([0,T]\times\Rn,\R)
$$
and the following Strong Fokker-Planck Equations (with coefficients $(a,b) \in \C$):
$$
\textnormal{SFPE} \quad \begin{cases}
\displaystyle \frac{\partial p}{\partial t}(t,y) = (\Kolmo_t)^* p(t,y) , \quad \textnormal{a.e.} \quad (t,y) \in [0,T]\times\Rn , \\[10pt]
p(0,\cdot) = p_0(\cdot) ,
\end{cases}
$$
where $(\Kolmo_t)^*$ denotes the dual operator of the Kolmogorov generator $\Kolmo_t$.
\end{theorem}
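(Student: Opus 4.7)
The plan is to treat FPE$_f$ as a linear parabolic Cauchy problem driven by the formal adjoint $(\Kolmo_t)^*$, which is uniformly elliptic since $(a+\alpha I) \geq \alpha I$ by construction and whose coefficients, after expanding the divergence form, are uniformly bounded on $[0,T]\times\Rn$ together with their first derivatives by Theorem \ref{Theo:CoeffProperties}. First, I would set up the variational framework in the Gelfand triple $H^1(\Rn) \subset L^2(\Rn) \subset H^{-1}(\Rn)$ and apply the classical J.-L. Lions existence theorem: the time-dependent bilinear form associated to $(\Kolmo_t)^*$ satisfies a G\aa rding inequality (the second-order part is coercive on $H^1$ thanks to uniform ellipticity, while the first-order and zeroth-order terms are absorbed by Young's inequality up to a lower-order $L^2$ penalty), and this together with measurability in $t$ yields a unique $p \in L^2(0,T;H^1) \cap C([0,T];L^2)$ with $\partial_t p \in L^2(0,T;H^{-1})$ solving FPE$_f$ with initial datum $\bar p$. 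The energy estimate \eqref{eq:Shortestimate1} is then obtained by testing the weak formulation against $p$ itself, extracting $\alpha \|\nabla p\|_{L^2}^2$ from ellipticity, absorbing the drift contributions by Cauchy-Schwarz with an $\varepsilon$-trick, and concluding via Gronwall; the constant $C(\|(a,b)\|_\Cbasis)$ tracks continuously through the $L^\infty$ bounds on $a$, $b$ and $\nabla a$ furnished by Theorem \ref{Theo:CoeffProperties}.

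For the case $f=0$, $\bar p = p_0 \ge 0$, the non-negativity of $p$ follows from an energy argument applied to the negative part $p^- = \max(-p,0)$: after a standard mollification in space so that $p^-$ becomes an admissible test, one derives a Gronwall inequality for $\|p^-(t,\cdot)\|_{L^2}^2$ that forces $p^- \equiv 0$ since $p_0^- = 0$. Mass conservation is obtained by testing the weak form against a family of smooth compactly supported cut-offs $\chi_R$ with $\chi_R \to 1$, $\nabla\chi_R, D^2\chi_R \to 0$ in a controlled way, and passing $R\to\infty$: the coefficient bounds and the $L^2(0,T;H^1)$ integrability of $p$ make the residual terms vanish, leaving $\int p(t,y)\,\mathrm{d}y = \int p_0\,\mathrm{d}y = 1$. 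The induced curve $\mu_t(A) = \int_A p(t,y)\,\mathrm{d}y$ then solves the weak FPE by construction, and its narrow continuity follows by combining $p \in C([0,T];L^2)$ with testing against $\varphi \in C_c$ and extending to $C_b$ via tightness (from mass conservation). The uniqueness statement of Theorem-Definition \ref{def:ShortFPE} finally identifies $\mu$ with the measure constructed there.

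The higher regularity $p \in H^{m+1,\,2(m+1)}([0,T]\times\Rn,\R)$, and the consequent strong pointwise form SFPE, is the most delicate step and I would establish it by an inductive bootstrap on the classical $L^2$ parabolic regularity theory. By the choice of $d(m) = 2(m+1) + \lfloor n/2\rfloor$ and Sobolev embedding, Theorem \ref{Theo:CoeffProperties} guarantees that $(a,b)$ admit $2m+1$ bounded classical derivatives, which is precisely what is needed to differentiate the equation up to the target order. Starting from $p_0 \in H^{2m+1}$ and the weak solution $p \in L^2(0,T;H^1)$, at each bootstrap step I would gain one extra space derivative in $L^2(0,T;H^{k+1})$ from elliptic regularity applied to the frozen-time operator, combined with control on $\partial_t p$ in a weaker norm; differentiating the equation in $t$ then yields one extra time derivative, and iterating until spatial order $2(m+1)$ and temporal order $m+1$ are reached closes the argument. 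Once both $\partial_t p$ and $(\Kolmo_t)^* p$ lie in $L^2([0,T]\times\Rn,\R)$, the weak identity against $C^\infty_c$ test functions is read as SFPE pointwise a.e.

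The main obstacle I anticipate lies in the bootstrap: each Leibniz expansion of $(\Kolmo_t)^*$ onto derivatives of $p$ introduces products of derivatives of the coefficients, whose $L^\infty$ control must be traced back through the refined embedding estimate of Theorem \ref{Theo:CoeffProperties} in such a way that the resulting constants remain \emph{continuous} functions of $\|(a,b)\|_{\Cbasis}$. This bookkeeping, together with the need to ensure that no artificial compatibility condition at $t=0$ beyond $p_0 \in H^{2m+1}$ is introduced by the iteration, is the technically most demanding part of the proof.
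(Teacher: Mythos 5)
Your treatment of existence, uniqueness, the energy estimate \eqref{eq:Shortestimate1}, non-negativity, mass conservation, and narrow continuity follows essentially the same route as the paper: the Lions variational scheme in the Gelfand triple $H^1(\Rn,\R)\hookrightarrow L^2(\Rn,\R)\hookrightarrow H^{-1}(\Rn,\R)$, G\aa rding-type semi-coercivity from the uniform ellipticity $a+\alpha I\succeq \alpha I$ and the $L^\infty$ bounds of Theorem \ref{Theo:CoeffProperties}, testing with $p$ and with $p^-$ plus Gronwall, cut-offs for unit mass, and $C(0,T;L^2)$ continuity for narrow continuity. That part of the proposal is sound and matches the paper.

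The gap is in the higher-regularity claim $p\in H^{m+1,2(m+1)}([0,T]\times\Rn,\R)$. You defer this to an ``inductive bootstrap on the classical $L^2$ parabolic regularity theory,'' but the classical theory (e.g.\ Evans, Section 6.3) is stated for bounded domains or time-independent coefficients, and the paper explicitly notes that the extension needed here is not straightforward --- it is precisely the content of its Theorem \ref{Theo:RegularityFPE}. Two steps in your sketch are asserted rather than proved. First, the initial gain $\partial_t p\in L^2([0,T]\times\Rn,\R)$ cannot be obtained by testing the weak formulation with $\partial_t p$, since at that stage $\partial_t p$ only lives in $L^2(0,T;H^{-1})$; the paper obtains it at the Galerkin level, multiplying the discretized system by $\frac{\mathrm d}{\mathrm dt}x^k_i(t)$ and handling the term $\int a^{\alpha}_{ij}\,\partial_{y_j}p_k\,\partial_t\partial_{y_i}p_k$ by an integration by parts in time that produces the $\dot a$ contribution (this is where time-dependence of the coefficients enters), before passing to the limit. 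Second, ``differentiating the equation in $t$'' to gain time derivatives is not legitimate until one knows the time derivative of $p$ solves the formally differentiated problem; the paper justifies this by solving an auxiliary variational problem for a new unknown $q$ with datum $(\Kolmo_0)^*\bar p$, forming $w(t)=\bar p+\int_0^t q(s,\cdot)\,\mathrm ds$, showing $w$ satisfies an integro-differential variational problem whose solution is unique (Gronwall, using the $H^2$ estimate), and concluding $w=p$, hence $\partial_t p=q$ with the claimed regularity; only then does frozen-time elliptic regularity applied to \eqref{eq:Elliptic1}--\eqref{eq:Elliptic2} upgrade the spatial regularity, and the induction (consuming two spatial derivatives of $p_0\in H^{2m+1}$ per time differentiation, which answers your compatibility worry) can proceed. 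You correctly identify this as the delicate part, but the proposal as written does not supply the mechanism that makes the bootstrap rigorous in the time-dependent, unbounded-domain setting.
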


Combining Theorem \ref{theor:ShortexFPE} with the representation formula \eqref{eq:RepFormula} allows us to introduce criteria to establish satisfactory guarantees for our learning approach in the context of observation and regulation of stochastic differential equations (a more exhaustive presentation is provided in Section \ref{sec:FokkerPlanck}). More specifically, let $f \in L^2([0,T]\times\Rn,\R)$ be an observation/regulation integral metric and $(a,b)\in\C$. By denoting $X^{a,b}$ and $p_{a,b}$ respectively the solution to SDE and to SFPE with coefficients $(a,b)\in\C$, we are interested in studying the accuracy with which the following observation/regulation metric is approximated by our learning approach:
\begin{equation} \label{eq:ShortintegralMetric}
    \mathbb{E}_{\mu_0\times\Proba}\left[ \int^T_0 f(t,X^{a,b}_x(t)) \; \mathrm{d}t \right] = \int^T_0 \int_{\Rn} f(t,y) p_{a,b}(t,y) \; \mathrm{d}y \mathrm{d}t .
\end{equation}
To give precise estimates of the approximation error for \eqref{eq:ShortintegralMetric}, the following corollary of Theorem \ref{theor:ShortexFPE} will be crucial (see Section \ref{sec:FokkerPlanck} for a proof):

\begin{corollary} \label{Corol:ShortestimateOperator}
For every $(a_1,b_1) , (a_2,b_2) \in \C$, it holds that
\begin{align} \label{eq:ShortestimateOperator}
    \Bigg| \mathbb{E}_{\mu_0\times\Proba}\Bigg[ \int^T_0 f(t,X^{a_1,b_1}_x(t)) \; \mathrm{d}t \Bigg] - \mathbb{E}_{\mu_0\times\Proba}\Bigg[ \int^T_0 f(t,&X^{a_2,b_2}_x(t)) \; \mathrm{d}t \Bigg] \Bigg| \nonumber \\
    &\le \| f \|_{L^2} \| p_{a_1,b_1} - p_{a_2,b_2} \|_{L^2} .
\end{align}
\end{corollary}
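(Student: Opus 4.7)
The plan is to reduce the inequality to a single Cauchy–Schwarz bound on $L^2([0,T]\times\Rn,\R)$ after rewriting both expectations as integrals against the densities $p_{a_1,b_1}$ and $p_{a_2,b_2}$. The key ingredient is the representation formula \eqref{eq:RepFormula} together with the absolute continuity of the solution to FPE established in Theorem~\ref{theor:ShortexFPE}.

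First, I would fix $(a,b) \in \C$ and use Theorem~\ref{theor:ShortexFPE} to write the unique narrowly continuous solution $\mu$ to FPE (with initial condition $p_0$) in the form $\mu_t(A)=\int_A p_{a,b}(t,y)\,\mathrm{d}y$. Combined with the representation formula \eqref{eq:RepFormula}, this gives, for every $\varphi \in C_c(\Rn,\R)$ and $t \in [0,T]$,
\[
\int_{\Rn}\int_{\Omega} \varphi(X^{a,b}_x(t))\,\mathrm{d}\Proba\,\mu_0(\mathrm{d}x) = \int_{\Rn} \varphi(y)\, p_{a,b}(t,y)\,\mathrm{d}y .
\]
Next I would extend this identity from test functions $\varphi \in C_c(\Rn,\R)$ to functions in $L^2(\Rn,\R)$, using density of $C_c$ in $L^2$; the right-hand side extends by Cauchy–Schwarz since $p_{a,b}(t,\cdot) \in L^2$, and the left-hand side extends because $\varphi \mapsto \mathbb{E}_{\mu_0\times\Proba}[\varphi(X^{a,b}_x(t))]$ is a bounded linear functional on $L^2$ by the same identity on the dense class $C_c$.

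Applied to $\varphi = f(t,\cdot)$ for a.e.\ $t$, Fubini (justified by joint integrability, which follows from the bound $\int_0^T\|p_{a,b}(t,\cdot)\|_{L^2}\|f(t,\cdot)\|_{L^2}\,\mathrm{d}t \le \|p_{a,b}\|_{L^2}\|f\|_{L^2} < \infty$) gives
\[
\mathbb{E}_{\mu_0\times\Proba}\!\left[\int_0^T f(t,X^{a,b}_x(t))\,\mathrm{d}t\right] = \int_0^T\!\!\int_{\Rn} f(t,y)\, p_{a,b}(t,y)\,\mathrm{d}y\,\mathrm{d}t ,
\]
which is precisely \eqref{eq:ShortintegralMetric}. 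Subtracting the two instances of this identity for $(a_1,b_1)$ and $(a_2,b_2)$ and applying Cauchy–Schwarz in $L^2([0,T]\times\Rn,\R)$ yields the claimed bound with constant $\|f\|_{L^2}\|p_{a_1,b_1}-p_{a_2,b_2}\|_{L^2}$.

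The only genuinely delicate step is the $L^2$-extension of the representation formula: one must check that $f(t,\cdot)$ (which is only assumed to be in $L^2$ in the variable $y$ for a.e.\ $t$) can be integrated against the law of $X^{a,b}_x(t)$ via the density, rather than pointwise. This is routine once the absolute continuity $\mu_t(\mathrm{d}y)=p_{a,b}(t,y)\,\mathrm{d}y$ from Theorem~\ref{theor:ShortexFPE} is in hand, but it is the one point worth spelling out, and it is the place where the integrability result \eqref{eq:Shortestimate1} implicitly enters to guarantee finiteness of the double integral and therefore Fubini.
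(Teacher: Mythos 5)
Your argument is correct and follows essentially the same route as the paper's proof (given as Corollary~\ref{Corol:useInAppli} in Section~\ref{sec:FokkerPlanck}): extend the representation formula from $C_c(\Rn,\R)$ to $L^2$ observables by density, using the absolute continuity $\mu_t(\mathrm{d}y)=p_{a,b}(t,y)\,\mathrm{d}y$, identify the expectation with $\int_0^T\int_{\Rn} f\,p_{a,b}$, and conclude by Cauchy--Schwarz. The only difference is bookkeeping: the paper defines the $L^2$-extended functional $\OperatorSDECoeff_t$ first and handles $t$-measurability via Pettis' theorem and Fatou's lemma, whereas you handle it through joint integrability of $f\,p_{a,b}$ and Fubini, which is an equivalent justification.
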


\section{Useful Results from RKHS Theory}
\label{sec:RKHS}

In this section, we list classical results about Sobolev Spaces of functions with scattered zeros and Reproducing Kernel Hilbert Spaces (RKHS), which will be extensively leveraged in the following sections and for which we mainly refer to \cite{aronszajn1950theory,Wendland2004,steinwart2008support}.

Let $r , s \in \mathbb{N}$ such that $r > s / 2$. Every function $u \in H^r(\R^s,\R)$ satisfies
\begin{equation}\label{eq:bessel-sobolev}
C \| u \|_{L^{\infty}} \le \| u \|_{H^r} = \| J_r(\cdot) ({\cal F} u)(\cdot) \|_{L^2} ,
\end{equation}
where $J_r(z) \triangleq (1 + \| z \|^2)^{r/2} / (2 \pi)^{s/2}$, $z \in \R^s$ denotes the Bessel potential and ${\cal F} u$ the Fourier transform of $u$, whereas the constant $C > 0$ depends on $s$ uniquely.

Let $\ell \in \mathbb{N}$, $D \subseteq \R^s$ be a bounded domain, and $u \in H^r(D,\R)$ be a function such that $u|_{\widehat{X}} = 0$, where $\widehat{X}_{\ell} \triangleq \{ x_1 , \dots , x_{\ell} \} \subseteq \R^s$ is a finite set of $\ell$ given points. For the \textit{fill distance}, which is defined by
$$
h_{\widehat{X}_{\ell},D} \triangleq \sup_{x \in D} \ \min_{i=1,\dots,\ell} \ \| x - x_i \| ,
$$
the following inequality holds true, e.g., \cite[Theorem 11.32]{Wendland2004},
\begin{equation} \label{eq:wendland}
\| u \|_{H^{\nu}} \le C h_{\widehat{X}_{\ell},D}^{r-\nu} \| u \|_{H^r} , \quad 0 \le \nu \le r ,
\end{equation}
where the constant $C > 0$ depends on $\nu$, $r$, and $D$ uniquely. Inequality \ref{eq:wendland} may be interpreted as follows: if $u$ is zero on a well distributed set of points over $D$, i.e., $h_{\widehat{X}_{\ell},D}$ is small, and is very regular, i.e., $r$ is large, then any norm $\|\cdot\|_{H^{\nu}}$, $0 \le \nu \le r$, e.g., the $L^2$ norm for \firstRev{$\nu=0$}, is small over the whole domain $D$.

Given a set $D$, a RKHS $\mathcal{H}_D$ is a separable Hilbert space of functions $u : D \to \R$ such that the following \textit{reproducing property} holds true:
\begin{definition}[Reproducing property \cite{aronszajn1950theory}]
For every point $x \in D$ there exists a {\em reproducing function} $k_x \in \mathcal{H}_D$  such that
$$
f(x) = \scal{f}{k_x}_{\mathcal{H}_D}, \quad f \in \mathcal{H}_D.
$$
\end{definition}
We denote $K_D : D \times D \to \R$, the {\em reproducing kernel} associated to $\mathcal{H}_D$, i.e.,
$$
K_D(x_1,x_2) \triangleq \scal{k_{x_1}}{k_{x_2}}_{\mathcal{H}_D}, \quad x_1 , x_2 \in D .
$$
The following crucial result holds true, see, e.g., \cite{steinwart2008support}:
\begin{theorem}
Given a reproducing kernel for a RKHS $\mathcal{H}_D$, by the reproducing property, the reproducing function $k_x \in \mathcal{H}_D$ for any $x \in D$ corresponds to
$$
k_x = K_D(x,\cdot) \in \mathcal{H}_D .
$$
\end{theorem}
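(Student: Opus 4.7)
The plan is to deduce the identity $k_x = K_D(x,\cdot)$ directly from the reproducing property, by exploiting the fact that $k_x$ is itself an element of $\mathcal{H}_D$ and can therefore be plugged into the reproducing formula. First, I would fix $x \in D$ and observe that $k_x \in \mathcal{H}_D$ by hypothesis; the reproducing property then applies to $k_x$ as test function, at any evaluation point $y \in D$, yielding
$$
k_x(y) \;=\; \langle k_x, k_y \rangle_{\mathcal{H}_D}.
$$

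Next, I would use the definition of the reproducing kernel $K_D(x_1,x_2) \triangleq \langle k_{x_1}, k_{x_2} \rangle_{\mathcal{H}_D}$ to rewrite the right-hand side as $K_D(x,y)$. Since the inner product on $\mathcal{H}_D$ is symmetric (the space is a real Hilbert space), we also have $K_D(x,y) = K_D(y,x)$, so $K_D(x,\cdot)$ and $K_D(\cdot,x)$ denote the same function. Combining, $k_x(y) = K_D(x,y)$ for every $y \in D$, which is precisely the claimed identity $k_x = K_D(x,\cdot)$ as elements of $\mathcal{H}_D$.

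There is essentially no obstacle here: the entire argument consists of a single application of the reproducing property with $f = k_x$. The only subtlety worth stating explicitly is that symmetry of the kernel is invoked to identify $K_D(\cdot,x)$ with $K_D(x,\cdot)$, and that this symmetry follows automatically from the symmetry of the real inner product. No further machinery (e.g., completeness, density of spans of $\{k_x\}$, or Riesz representation) is needed for this particular statement, although these would come into play if one wanted to also argue uniqueness of the reproducing function or existence of the kernel from first principles.
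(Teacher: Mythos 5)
Your proof is correct: applying the reproducing property with $f = k_x$ gives $k_x(y) = \scal{k_x}{k_y}_{\mathcal{H}_D} = K_D(x,y)$ for every $y \in D$, which is exactly the standard argument for this fact (the paper itself states it without proof, citing the RKHS literature). The only superfluous step is the appeal to symmetry of the kernel: the computation already produces $k_x(y) = K_D(x,y)$ directly, so no identification of $K_D(x,\cdot)$ with $K_D(\cdot,x)$ is needed.
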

We recall that the Sobolev space $H^r(D,\R)$ is a RKHS for $r > s/2$, and $D \subseteq \R^s$ be either $\R^s$ or an open domain with Lipschitz boundary and $s \in \mathbb{N}$. In this case, the associated reproducing kernel has a known closed form, see, e.g., \cite{Wendland2004}.

\section{Learning Stochastic Differential Equations}
\label{sec:LearningDynamics}
In this section, we finally introduce and study the problem of learning drift and diffusion coefficients of a stochastic differential equation. For this, from now on we assume the following hypothesis, which naturally stems from our setting, to hold true (see also Definition \ref{def:ShortSDE}):

\begin{Itemize}
    \item[$(A)$] Let $m \in \mathbb{N}$ with $m \ge 1$, $\alpha > 0$, and $p_0 \in H^{2 m + 1}(\Rn,\R)$ \secRev{(these will be considered as hyper-parameters in this work, \thirdRev{see the next remark})}. There exist $(\unknownDiffusion,\unknownDrift) \in \C$ 
    and a (unique) solution $\unknownX$ to SDE with coefficients $(\unknownDiffusion,\unknownDrift) \in \C$.
\end{Itemize}

\thirdRev{
\begin{remark}
    The quantities $m \in \mathbb{N}$ with $m \ge 1$, $\alpha > 0$, and $p_0 \in H^{2 m + 1}(\Rn,\R)$, appearing in Assumption (A), play the role of hyper-parameters in this work. It is common and reasonable to assume prior knowledge of the initial distribution $p_0$ and to have an informed estimate of the degree of smoothness $m$ \cite{Wood2016}. Regarding the parameter $\alpha$, we emphasize that numerous examples of stochastic differential equations frequently employed in various applications have diffusion coefficients $a$ that are uniformly lower-bounded by known $\alpha > 0$. Notable examples include pricing in finance \cite{Baxter1996}, temperature modeling in electricity markets \cite{Prabakaran2020}, and dynamics modeling in planetary landing \cite{Exarchos2019}, among others. Informed estimates of $\alpha$ can be obtained using methods such as doubling trick-type algorithms \cite{Shalev2012}, which involve learning over a finite decreasing sequence of $\alpha$. Exploring more advanced estimation techniques for $\alpha$ and analyzing how corresponding estimates influence the learning process lie beyond the scope of this work and are left for future research.
\end{remark}}

Thanks to the results we gathered in Section \ref{sec:ShortFokkerPlanck}, one readily checks that Assumption $(A)$ yields the following characterization of the mapping $\unknownX$:

\begin{corollary} \label{corol:unknownDensity}
With the notation $\unknownKolmo_t \triangleq \mathcal{L}^{\unknownDiffusion,\unknownDrift}_t$, there exists a unique function $\unknownDensity \in H^{m+1,2(m+1)}([0,T]\times\Rn,\R)$ which is non-negative and such that
$$
\int_A \unknownDensity(t,x) \; \mathrm{d}x = \mu_t(A) \triangleq \int_{\Rn} \int_{\Omega} \mathds{1}_{\{ \unknownX_x(t) \in A \}} \; \mathrm{d}\Proba \; \mu_0(\mathrm{d}x) , \quad \textnormal{for all} \quad A \in \Borel ,
$$
and which satisfies:
$$
\frac{\partial \unknownDensity}{\partial t}(t,y) = \unknownKolmo_t^* \unknownDensity(t,y) , \quad \textnormal{a.e.} \quad (t,y) \in [0,T]\times\Rn .
$$
\end{corollary}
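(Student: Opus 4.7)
The corollary is essentially an assembly of the building blocks already stated in Section~\ref{sec:ShortFokkerPlanck}, so I would not need new machinery. The plan is to instantiate Theorem~\ref{theor:ShortexFPE} with $f = 0$ and $\bar p = p_0$ for the coefficients $(\unknownDiffusion,\unknownDrift) \in \C$ granted by Assumption~$(A)$, and then identify the resulting density with the law of the SDE solution through the uniqueness part of Theorem-Definition~\ref{def:ShortFPE}.

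First, I would invoke Theorem~\ref{theor:ShortexFPE} in the homogeneous case. Since $(\unknownDiffusion,\unknownDrift) \in \C$ and $p_0 \in H^{2m+1}(\Rn,\R)$ by Assumption~$(A)$, the theorem yields a unique $\unknownDensity \in C(0,T;L^2(\Rn,\R)) \cap H^{0,1}([0,T]\times\Rn,\R)$ solving $\textnormal{FPE}_0$, and the additional Sobolev regularity of $p_0$ upgrades this to $\unknownDensity \in H^{m+1,2(m+1)}([0,T]\times\Rn,\R)$ together with the strong form
$$
\frac{\partial \unknownDensity}{\partial t}(t,y) = \unknownKolmo_t^* \unknownDensity(t,y) , \quad \text{a.e. } (t,y) \in [0,T]\times\Rn .
$$
The same theorem also gives non-negativity and conservation of mass, $\int_{\Rn} \unknownDensity(t,y)\,\mathrm{d}y = 1$ for every $t \in [0,T]$, so that setting $\nu_t(A) \triangleq \int_A \unknownDensity(t,y)\,\mathrm{d}y$ defines a narrowly continuous curve $\nu : [0,T] \to \ProbaSpace(\Rn)$ with $\nu_0 = \mu_0$.

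Second, to match $\unknownDensity$ with the law of $\unknownX_x$, I would note that, still by the last part of Theorem~\ref{theor:ShortexFPE}, the curve $\nu$ just constructed solves the weak Fokker-Planck equation FPE with coefficients $(\unknownDiffusion,\unknownDrift)$. On the other hand, Assumption~$(A)$ provides the solution $\unknownX$ to SDE, and Theorem-Definition~\ref{def:ShortFPE} exhibits the curve $\mu$ defined by \eqref{eq:ShortmesureFokker} as \emph{the} narrowly continuous solution to FPE with the same coefficients and the same initial datum $\mu_0$. Uniqueness of narrowly continuous solutions to FPE therefore forces $\nu_t = \mu_t$ for every $t \in [0,T]$, which is precisely the integral identity claimed in the corollary.

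Third, the uniqueness of $\unknownDensity$ within $H^{m+1,2(m+1)}([0,T]\times\Rn,\R)$ is inherited from the uniqueness component of Theorem~\ref{theor:ShortexFPE} applied to $\textnormal{FPE}_0$: any other candidate would in particular lie in $C(0,T;L^2(\Rn,\R)) \cap H^{0,1}([0,T]\times\Rn,\R)$ and solve the same equation with the same initial datum $p_0$, so it must coincide with $\unknownDensity$. There is no genuine obstacle in this argument; the only point requiring minimal care is the bookkeeping needed to connect the measure-valued formulation of FPE from Theorem-Definition~\ref{def:ShortFPE} with the $L^2$-density formulation $\textnormal{FPE}_0$ of Theorem~\ref{theor:ShortexFPE}, which is done by testing against $\varphi \in C^{\infty}_c(\Rn,\R)$ and using absolute continuity of $\nu_t$ with respect to Lebesgue measure.
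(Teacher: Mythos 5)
Your proposal is correct and follows essentially the same route the paper intends: instantiate Theorem~\ref{theor:ShortexFPE} with $f=0$, $\bar p=p_0$ for the coefficients from Assumption~$(A)$ to get the regular, non-negative, unit-mass solution of SFPE, then identify the induced narrowly continuous curve with $\mu$ via the uniqueness statement in Theorem-Definition~\ref{def:ShortFPE}. The only minor remark is that the final uniqueness of $\unknownDensity$ already follows directly from the representation identity (two densities inducing the same measures coincide a.e.), so your detour through the uniqueness of FPE$_0$ solutions, while valid, is not needed.
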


To achieve our goal, we proceed along three successive steps:
\begin{enumerate}
    \item First, we approximate the unknown curve of densities $\unknownDensity$ from given samples $\{ X_{x_j}(t_{\ell},\omega_j) \}_{\ell=1,\dots,M , j=1,\dots,N}$ of the unknown solution $X$ to SDE, through a finite-dimensional RKHS-based model $\approxDensity$. Thanks to this, all the quantities appearing in the learning problems we define in the next steps can be actually (tractably) computed. As is customary in many applications, we assume our data are collected by independently sampling $X$ at pre-defined times.

    \item Second, we approximate the unknown coefficients $(\unknownDiffusion,\unknownDrift) \in \C$ through additional coefficients $(\solDiffusion,\solDrift) \in \C$ which ``best'' match SFPE when evaluated at $\approxDensity$, through an appropriately well-posed infinite-dimensional learning problem. We then combine appropriate estimates with \eqref{eq:Shortestimate1} to show the solution to SFPE with coefficients $(\solDiffusion,\solDrift)$ well-approximates the unknown $\unknownDensity$.

    \item Third, we make the learning problem at the second step ``tractable'' by approximating the coefficients $(\solDiffusion,\solDrift)$ through a finite-dimensional RKHS-based model $(\solAppDiffusion,\solAppDrift)$. We then combine the estimates we obtained at the second step with \eqref{eq:Shortestimate1} to show the solution to SFPE with coefficients $(\solAppDiffusion,\solAppDrift)$ well-approximates the unknown $p$. Finally, this latter property is combined with \eqref{eq:ShortestimateOperator} to show theoretical error bounds in the context of observation and regulation of stochastic differential equations.
\end{enumerate}

\subsection{Approximating solutions to SDE via RKHS-based models} \label{sec:Papprox}

We start by defining our data set. We assume sampling happens at $M \in \mathbb{N}$ fixed times $0 = t_1 < \dots < t_M = T$, which are equally spaced for simplicity, i.e., $t_{\ell} \triangleq T (\ell - 1)/(M - 1)$. Then, we assume    at each time $t_{\ell}$, $\ell=1,\dots,M$, we have access to $N \in \mathbb{N}$ samples $X_{\ell,1} \triangleq X_{x_1}(t_\ell,\omega_1) , \dots , X_{\ell,N} \triangleq X_{x_N}(t_\ell,\omega_N)$ of the solution $X$ to SDE, which have been independently drawn from the same probability $\mu_{t_{\ell}}$, with density $\unknownDensity(t_{\ell},\cdot)$ (see Corollary \ref{corol:unknownDensity}). Now, for every $t \in [0,T]$ the family of probability measures given by
$$
\mu_{t,k} \triangleq \underbrace{\mu_t \otimes \dots \otimes \mu_t}_{k-\textnormal{times}} : \mathcal{B}( \underbrace{\Rn \times \dots \times \Rn}_{k-\textnormal{times}} ) \to [0,1] ,\quad k \in \mathbb{N}
$$
may be extended to a unique probability measure $\mu^{\mathbb{N}}_t : \mathcal{B}^{\mathbb{N}} \to [0,1]$ thanks to Kolmogorov (extension) theorem, and by definition, for every $\ell=1,\dots,M$ and $N \in \mathbb{N}$ the samples $\{ X_{\ell,j} \}_{j=1,\dots,N}$ may be seen as independent random variables in the probability space $((\Rn)^{\mathbb{N}},\mathcal{B}^{\mathbb{N}},\mu^{\mathbb{N}}_{t_{\ell}})$, with equal density $\unknownDensity(t_{\ell},\cdot)$. Finally, by one further application of Kolmogorov theorem, we extend the family of probability measures $\{ \mu^{\mathbb{N}}_{s_1} \otimes \dots \otimes \mu^{\mathbb{N}}_{s_k} \}_{0 \le s_1 \le \dots \le s_k \le T }$ to a unique probability measure $\ProbaX : (\mathcal{B}^{\mathbb{N}})^{[0,T]} \to [0,1]$, so that, with an abuse of notation, $\ProbaX|_{t_1,\dots,t_M} = \mu^{\mathbb{N}}_{t_1} \otimes \dots \otimes \mu^{\mathbb{N}}_{t_M}$ for every $M \in \mathbb{N}$, and for every $M , N \in \mathbb{N}$ the samples $\{ X_{\ell,j} \}_{\ell=1,\dots,M,j=1,\dots,N}$ may be seen as (not necessarily i.i.d.) random variables in the probability space $\big(((\Rn)^{\mathbb{N}})^{[0,T]},(\mathcal{B}^{\mathbb{N}})^{[0,T]},\ProbaX\big)$. Below, the assessment ``with probability at least'' will be meant with respect to $\ProbaX$.


At this step, fix $M, N \in \mathbb{N}$ and define the following model (random) density
\begin{align*}
\approxDensity(t,x) \triangleq \sum_{\ell=1}^M c_{\ell}(t) \widehat{g}_{\ell}(x), \quad \textnormal{where} \quad \widehat{g}_{\ell}(x) \triangleq \frac{1}{N} \sum_{j=1}^N \rho_R(x - X_{\ell,j})
\end{align*}
with $\rho_R(x) \triangleq R^{n/2}\|x\|^{-n/2} B_{n/2}(2\pi R \|x\|)$, $R > 0$ and $B_{n/2}$ is the Bessel $J$ function of order $n/2$, while $c_{\ell}(t) \triangleq e^\top_{\ell} G^{-1} v(t)$ with $\{ e_1 , \dots , e_M \}$ the canonical basis of $\R^M$ and we define $v(t) \triangleq (K_{m+1}(t,t_1),\dots, K_{m+1}(t,t_M))$. The notation $G$ is used for the Gram matrix with elements $G_{j,k} \triangleq K_{m+1}(t_j,t_k)$, where $K_{m+1}$ denotes the Sobolev kernel of smoothness degree $m+1$ (see, e.g., \cite[Page 133]{Wendland2004} for an explicit formula). Finally, for every $u \in H^{1,2}([0,T]\times\Rn,\R)$ we denote
$$
L(u) \triangleq \int^T_0 \bigg( \bigg\| \frac{\partial \unknownDensity}{\partial t}(t,\cdot) - \frac{\partial u}{\partial t}(t,\cdot) \bigg\|^2_{L^2} + \| \unknownDensity(t,\cdot) - u(t,\cdot) \|^2_{H^2} \bigg) \; \mathrm{d}t .
$$
Our result on the approximation of $p$ via $\approxDensity$ is as follows:

\begin{theorem} \label{thm:approxAle}
There exists a constant $C > 0$, which depends on $n$, $m$, and $T$ uniquely, such that the following learning rate for the random model $\approxDensity$ holds with probability at least $1-\delta$, for every $M , N \in \mathbb{N}$ such that $M \ge 2 T$:
\begin{align*}
    L(\approxDensity) &\le C \big( \| \unknownDensity \|^2_{H^{m+1,2}} + \| \unknownDensity \|^2_{H^{1,2(m+1)}} \big) \times \\
    &\hspace{30ex}\times \left( M^{-2m} + R^{-4m} + R^n \firstRev{\log\left( \frac{4 M}{\delta} \right)} N^{-1} \right) .
\end{align*}
In particular, up to overloading $C$, for every $\varepsilon > 0$ small enough we have that
\begin{align*}
    L(\approxDensity) &\le C \big( \| \unknownDensity \|^2_{H^{m+1,2}} + \| \unknownDensity \|^2_{H^{1,2(m+1)}} \big) \left( \firstRev{\log\left( \frac{1}{\delta \varepsilon} \right)^{\frac{1}{2}}} \varepsilon \right)^2 \\
    &\firstRev{\le C \big( \| \unknownDensity \|^2_{H^{m+1,2}} + \| \unknownDensity \|^2_{H^{1,2(m+1)}} \big) \left( \log\left( \frac{M N}{\delta} \right)^{\frac{1}{2}} (M N)^{-\frac{2 m}{n + 2 ( 2 m + 1 )}} \right)^2} ,
\end{align*}
if \firstRev{$M = \varepsilon^{-1/m}/4$, $N = \varepsilon^{-(2 + n/(2m))}$ (as closest integers), and $R = \varepsilon^{-1/(2m)}$}.
\end{theorem}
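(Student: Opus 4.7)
The plan is to decompose the error $L(\widehat p)$ into three clean pieces by inserting two intermediate curves between $p$ and $\widehat p$: the deterministic time-interpolant of the true density, and the time-interpolant of the band-limited projection. Specifically, set
\[
\widetilde p(t,x) \triangleq \sum_{\ell=1}^M c_\ell(t)\, p(t_\ell,x), \qquad \bar p(t,x) \triangleq \sum_{\ell=1}^M c_\ell(t)\, q_\ell(x),
\]
where $q_\ell(x) \triangleq \mathbb{E}[\rho_R(x - X_{\ell,1})]$ is the convolution of $p(t_\ell,\cdot)$ with $\rho_R$, i.e., the band-limited projection of $p(t_\ell,\cdot)$ onto the Paley--Wiener space $PW_R$ (for which $\rho_R$ is the reproducing kernel by Fourier inversion of the indicator of $\{\|\xi\|\le R\}$). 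Then
\[
L(\widehat p)^{1/2} \;\le\; L(\widetilde p)^{1/2} \;+\; A(\widetilde p - \bar p)^{1/2} \;+\; A(\bar p - \widehat p)^{1/2},
\]
where $A(u) \triangleq \int_0^T ( \| \partial_t u(t,\cdot) \|^2_{L^2} + \| u(t,\cdot) \|^2_{H^2} )\,\mathrm{d}t$, and I would bound each term separately.

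The first term is the time-interpolation error of a Sobolev-kernel interpolant of order $m+1$ on the $M$ equally spaced nodes $t_1<\dots<t_M$, whose fill distance in $[0,T]$ is $\lesssim T/M$. Applied coordinatewise in $x$ (and commuting with the frozen spatial derivatives, which pass through the time-interpolation operator), Wendland's inequality \eqref{eq:wendland} with $r = m+1$ and $\nu \in \{0,1\}$ yields
\[
\|(p - \widetilde p)(\cdot,x)\|^2_{H^\nu([0,T])} \;\le\; C\, M^{-2(m+1-\nu)}\, \|p(\cdot,x)\|^2_{H^{m+1}([0,T])},
\]
and integrating in $x$ (with $\nu=1$ handling the $\partial_t$ component, and $\nu=0$ combined with two $x$-derivatives handling the $H^2$-in-$x$ component) produces the $M^{-2m}$ rate, with prefactor $\|p\|^2_{H^{m+1,2}}$.

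The second, band-limiting term concerns $p(t_\ell,\cdot) - q_\ell$, whose Fourier transform is supported outside the ball of radius $R$. By Plancherel and the elementary bound $(1+|\xi|^2)^2 \le R^{-4m}(1+|\xi|^2)^{2(m+1)}$ for $|\xi|\ge R$, one gets $\|p(t_\ell,\cdot) - q_\ell\|^2_{H^2} \le C\, R^{-4m}\, \|p(t_\ell,\cdot)\|^2_{H^{2(m+1)}}$. Since $\partial_t$ only acts on $c_\ell(t)$ and the Sobolev-kernel interpolant on a quasi-uniform grid has uniformly bounded Lebesgue-type constants $\sum_\ell c_\ell(t)^2 \lesssim 1$, integrating in $t$ yields the $R^{-4m}$ contribution with prefactor $\|p\|^2_{H^{1,2(m+1)}}$.

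The third, stochastic term $\bar p - \widehat p = \sum_\ell c_\ell(t)(q_\ell - \widehat g_\ell)$ reduces to controlling the $M$ empirical means $\widehat g_\ell - q_\ell$. Each increment $\rho_R(\cdot - X_{\ell,j}) - q_\ell$ lies in the Paley--Wiener RKHS $PW_R$ with reproducing kernel $\rho_R$, and $\rho_R(0) \lesssim R^n$ gives the variance proxy. A Hilbert-space Bernstein inequality in $PW_R$, followed by a union bound over $\ell=1,\dots,M$, yields $\|\widehat g_\ell - q_\ell\|^2_{PW_R} \le C\, R^n\, \log(M/\delta)/N$ simultaneously for all $\ell$ with probability at least $1-\delta$. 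Combining this estimate with the $\sum_\ell c_\ell(t)^2 \lesssim 1$ bound and the fact that, in $PW_R$, the $H^2$ norm is controlled by the RKHS norm via the Bessel-potential identity \eqref{eq:bessel-sobolev} (exploiting that the Fourier mass is supported on $\{\|\xi\|\le R\}$, so the factor $(1+|\xi|^2)^2$ can be absorbed by the chosen normalization of $\rho_R$ without worsening the $R$-dependence) produces the $R^n\log(4M/\delta)/N$ contribution.

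I expect the main obstacle to be the stochastic step: obtaining a sharp, high-probability $H^2$-norm bound on $\widehat g_\ell - q_\ell$ with the stated $R^n/N$ dependence (as opposed to a naive $R^{n+4}/N$) requires measuring the empirical error in a norm intrinsic to $PW_R$ and carefully translating back to $H^2$, together with a clean union bound over $\ell$. Once the three pieces are assembled, summing them and using Theorem~\ref{Theo:CoeffProperties}-style embeddings to control the prefactors yields the first displayed bound. The second assertion then follows by substituting $M = \varepsilon^{-1/m}/4$, $N = \varepsilon^{-(2+n/(2m))}$, $R = \varepsilon^{-1/(2m)}$, which equates all three terms up to the logarithmic factor, and the final $MN$-form is obtained by writing $\varepsilon = (MN)^{-2m/(n+2(2m+1))}$.
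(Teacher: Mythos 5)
Your architecture is essentially the paper's: interpolate in time with the Sobolev-kernel cardinal functions $c_\ell$ and control $p-\widetilde p$ via the scattered-zeros inequality \eqref{eq:wendland}; split each spatial slice in Fourier space into an out-of-band part (deterministic, giving $R^{-2m}$ against $\|p\|_{H^{1,2(m+1)}}$) and an in-band part; and control the in-band part by a Hilbert-space Bernstein-type (Pinelis) inequality with variance proxy $\sim R^n/N$ and a union bound over $\ell=1,\dots,M$. Your Paley--Wiener reformulation is exactly the paper's computation \eqref{eq:ProofAle2}, since the RKHS with kernel $\rho_R$ is the space of functions band-limited to $B^{\Rn}_R(0)$ equipped with the $L^2$ norm, and $\rho_R(0)\sim R^n$ is the same variance proxy as $\int\mathds{1}_{B^{\Rn}_R(0)} = V_nR^n$.

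Two steps, however, are asserted rather than established, and one of them fails as written. First, the claim that the weight $(1+\|\xi\|^2)^2$ ``can be absorbed by the chosen normalization of $\rho_R$ without worsening the $R$-dependence'' is not correct: the empirical error $\widehat g_\ell-q_\ell$ has spectrum filling the whole ball $B^{\Rn}_R(0)$, so its $H^2$ norm can genuinely be of order $R^2$ times its $L^2$ (i.e., $PW_R$) norm, and no renormalization of the kernel removes this; the paper keeps this factor explicitly, via $\|J_\nu\mathds{1}_{B^{\Rn}_R(0)}\|_{L^\infty}\le 2^\nu R^\nu$, and it is only compensated in the final assembly together with the interpolation-weight factors, using the specific scalings $R^2\sim M\sim\varepsilon^{-1/m}$ and $N=\varepsilon^{-(2+n/(2m))}$ --- not by a normalization argument. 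Second, your control of the weights through ``$\sum_\ell c_\ell(t)^2\lesssim 1$'' is both unproven and insufficient: the $\partial_t$ components of your second and third terms involve $\dot c_\ell$, for which a bound of the type $\|c_\ell\|_{H^\nu}\le C(M/T)^\nu$ is required; the paper devotes a full step to proving this (comparison of $c_\ell$ with the sinc interpolant, minimum-norm property, scattered-zeros bound, and the Fourier characterization of $\|\cdot\|_{H^{m+1}}$), and the resulting factors of $M$ on the $L^2$-level errors are then offset by the extra smallness of those errors ($R^{-2(m+1)}$ deterministic, $R^{n/2}N^{-1/2}$ stochastic) under the stated parameter choices. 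Without these two ingredients your three estimates do not combine into the stated rate, so the proposal as written has a genuine gap at precisely the point you flag as the main obstacle.
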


\begin{proof}
For the sake of clarity, we divide the proof in several steps.

\vspace{5pt}

\noindent \textbf{1) Preliminaries.} With obvious notation, fix $r , s \ge 1$ and define the operator
$$
P_M: H^r([0,T],\R) \to H^r([0,T],\R) , \quad \textnormal{via} \quad (P_M u)(t) = \sum_{\ell=1}^M c_{\ell}(t) u(t_{\ell}) .
$$
For $u \in H^{r,s}([0,T]\times\Rn,\R)$, we denote by $u_x$ the function $u_x(\cdot) = u(\cdot,x)$, $x \in \Rn$, and by $\tilde{u}$ the function $\tilde{u}(t,x) = (P_M u_x)(t)$.
Now, note that the function $v_x := u_x - P_M u_x$ satisfies $v_x(t_\ell) = 0$ for any $\ell=1,\dots, M$. From the results we recalled in Section \ref{sec:RKHS}, we can bound the norm of functions with scattered zeros as follows: for every $0 \le \sigma \le r$ there exists a constant $C_{r,\sigma} > 0$ such that
\begin{equation} \label{eq:ProofAle1}
    \| u_x - P_M u_x \|_{H^{\sigma}} \leq C_{r,\sigma} M^{\sigma - r} \| u_x \|_{H^r},
\end{equation}
for almost every $x \in \Rn$. Below, we will often implicitly overload the constant $C_{r,\sigma}$. Moreover, we recall that, for $\alpha , \beta \ge 0$,
$$
D^\alpha_t D^\beta_x \tilde{u}(t,x) = D^\alpha_t D^\beta_x ( P_M u_x)(t) = D^\alpha_t P_M (D^\beta_x u)(t,x) ,
$$
so that by leveraging \eqref{eq:ProofAle1} with $\sigma = \alpha$, for $0 \leq \alpha \leq r$ and $0 \leq \nu \leq s$, yields
\begin{align} \label{eq:mixed-norm-bound-in-t}
\int_0^T \| &D^{\alpha}_t u(t,\cdot) - D^{\alpha}_t \tilde{u}(t,\cdot) \|^2_{H^\nu} \; \mathrm{d}t = \\
&= \sum_{|\beta| \leq \nu} \int_0^T \int_{\Rn} (D^{\alpha}_t D^{\beta}_x u(t,x) -  D^{\alpha}_t D^{\beta}_x \tilde{u}(t,x))^2 \; \mathrm{d}x \mathrm{d}t \nonumber \\
& = \sum_{|\beta| \leq \nu} \int_{\Rn} \int_0^T (D^{\alpha}_t D^{\beta}_x u(t,x) -  D^{\alpha}_t P_M D^{\beta}_x u(t,x))^2 \; \mathrm{d}x \mathrm{d}t \nonumber \\
& \leq \sum_{|\beta| \leq \nu} \int_{\Rn} \| D^{\beta}_x u(\cdot,x) - P_M D^{\beta}_x u(\cdot,x)\|^2_{H^{\alpha}} \; \mathrm{d}x \nonumber \\
& \leq \sum_{|\beta| \leq \nu} \int_{\Rn} C^2_{r,\alpha} M^{2 (\alpha - r)} \|D^{\beta}_x u(\cdot,x)\|^2_{H^r} \; \mathrm{d}x = C^2_{r,\alpha} M^{2 (\alpha - r)} \| u \|^2_{H^{r,\nu}} . \nonumber
\end{align}
At this step, by decomposing $\unknownDensity - \approxDensity = (\unknownDensity - \widetilde{p}) + (\widetilde{p} - \approxDensity)$ and applying the triangular inequality to $L(\approxDensity)$, we obtain that
\begin{align*}
    &L(\approxDensity)^{1/2} \le L(\widetilde{p})^{1/2} + A^{1/2} , \\
    &A \triangleq \int^T_0 \bigg( \bigg\| \frac{\partial \widetilde{p}}{\partial t}(t,\cdot) - \frac{\partial \approxDensity}{\partial t}(t,\cdot) \bigg\|^2_{L^2} + \| \widetilde{p}(t,\cdot) - \approxDensity(t,\cdot) \|^2_{H^2} \bigg) \; \mathrm{d}t .
\end{align*}
Thanks to \eqref{eq:mixed-norm-bound-in-t} and $\unknownDensity \in H^{m+1,2(m+1)}([0,T]\times\Rn,\R)$, we may bound $L(\widetilde{p})$ by
$$
L(\widetilde{p})^{1/2} \le C_{m+1,1} M^{-m} \| \unknownDensity \|_{H^{m+1,0}} + C_{m+1,0} M^{-(m+1)} \| \unknownDensity \|_{H^{m+1,2}} .
$$
The rest of the proof is devoted to appropriately bounding $A$.

\vspace{5pt}

\noindent \textbf{2) Further decomposing the term $\bm{A}$.} For this, from the definition of both $\widetilde{p}$ and $\approxDensity$, for every $0 \leq \alpha \leq m+1$ and $0 \leq \nu \leq 2(m+1)$ we have that
\begin{align*}
\bigg(\int_0^T \| D^{\alpha}_t \widetilde{p}(t,\cdot) - D^{\alpha}_t \approxDensity(t,\cdot) &\|^2_{H^\nu} \; \mathrm{d}t \bigg)^{1/2} = \\
&= \left( \int_0^T \left\| \sum_{\ell=1}^M D^{\alpha}_t c_{\ell}(t) ( \unknownDensity(t_{\ell},\cdot) - \widehat{g}_{\ell}(\cdot) ) \right\|^2_{H^\nu} \mathrm{d}t \right)^{1/2} \\
& \leq \sum_{\ell=1}^M \| c_{\ell} \|_{H^{\alpha}} \| \unknownDensity(t_{\ell},\cdot) - \widehat{g}_{\ell}(\cdot) \|_{H^\nu} .
\end{align*}
In, particular, by applying this bound to the two terms in $A$, we infer that
$$
A^{1/2} \leq \sum_{\ell=1}^M \Big( \| c_{\ell} \|_{H^1} \| \unknownDensity(t_{\ell},\cdot) - \widehat{g}_{\ell}(\cdot) \|_{L^2} + \| c_{\ell} \|_{L^2} \| \unknownDensity(t_{\ell},\cdot) - \widehat{g}_{\ell}(\cdot) \|_{H^2} \Big) .
$$

\vspace{5pt}

\noindent \textbf{3) Bounding each term $\bm{\| \unknownDensity(t_{\ell},\cdot) - \widehat{g}_{\ell}(\cdot) \|_{H^{\nu}}}$.} Fix $\ell=1,\dots,M$ and $0 \le \nu \le 2(m+1)$. As recalled in Section \ref{sec:RKHS}, the norm of any $u \in H^\nu(\Rn,\R)$, with $\nu \in \mathbb{N}$ writes
$$
\| u \|_{H^{\nu}} = \| J_{\nu}(\cdot) ({\cal F} u)(\cdot) \|_{L^2} ,
$$
where ${\cal F} u$ denotes the Fourier transform of $u$, and $J_{\nu}(z) \triangleq (1 + \| z \|^2)^{\nu/2} / (2 \pi)^{n/2}$, $z \in \Rn$. In particular, since the Fourier transform of $\rho_R$ is $\mathds{1}_{B^{\Rn}_R(0)}$, it holds that
\begin{equation} \label{eq:ProofAle2}
    {\cal F} \widehat{g}_{\ell}(z) = \sum_{j=1}^N \mathds{1}_{B^{\Rn}_R(0)}(z) e^{2\pi i z^\top X_{\ell,j}} = \mathds{1}_{B^{\Rn}_R(0)}(z) {\cal F} \widehat{g}_{\ell}(z) , \quad z \in \Rn ,
\end{equation}
so that, thanks to the fact that $(1 - \mathds{1}_{B^{\Rn}_R(0)}) \mathds{1}_{B^{\Rn}_R(0)} = 0$ and $\mathds{1}^2_{B^{\Rn}_R(0)} = \mathds{1}_{B^{\Rn}_R(0)}$, by denoting the function $\unknownDensity_{\ell}(\cdot) \triangleq \unknownDensity(t_{\ell},\cdot)$ we may compute
\begin{align*}
\| \unknownDensity(t_{\ell},\cdot) - \widehat{g}_{\ell}(\cdot) \|_{H^{\nu}} &= \|J_\nu ({\cal F} p_\ell - {\cal F} \widehat{g}_\ell)\|_{L^2} \\
& = \left\| J_\nu (1 - \mathds{1}_{B^{\Rn}_R(0)}){\cal F} \unknownDensity_{\ell} ~+~ J_{\nu} \mathds{1}_{B^{\Rn}_R(0)} ({\cal F} \unknownDensity_{\ell} - {\cal F} \widehat{g}_{\ell}) \right\|_{L^2} \\
& \leq \left\| J_{\nu} (1 - \mathds{1}_{B^{\Rn}_R(0)}) J_{2(m+1)}^{-1} \right\|_{L^\infty} \|J_{2(m+1)} {\cal F} \unknownDensity_{\ell} \|_{L^2} \\
&\hspace{23ex} + \left\| J_{\nu} \mathds{1}_{B^{\Rn}_R(0)} \right\|_{L^\infty} \left\| \mathds{1}_{B^{\Rn}_R(0)} ({\cal F} p_{\ell} - {\cal F} \widehat{g}_{\ell}) \right\|_{L^2} \\
& \leq R^{\nu - 2(m+1)} \| \unknownDensity_{\ell} \|_{H^{2(m+1)}} + 2^{\nu} R^{\nu} \left\| \mathds{1}_{B^{\Rn}_R(0)} ({\cal F} p_{\ell} - {\cal F} \widehat{g}_{\ell}) \right\|_{L^2} ,
\end{align*}
where we used the fact that $J_{2(m+1)}$, $J_{\nu}$, and $\mathds{1}_{B^{\Rn}_R(0)}$ are radial functions, and thus
\begin{align*}
    &\left\| J_{\nu} (1 - \mathds{1}_{B^{\Rn}_R(0)}) J_{2(m+1)}^{-1} \right\|_{L^\infty} = \sup_{r > R} (1+R^2)^{-(2(m+1)+\nu)/2} \le R^{\nu-2(m+1)} , \\
    &\left\| J_{\nu} \mathds{1}_{B^{\Rn}_R(0)} \right\|_{L^\infty} = \sup_{0< r \le R} (1+R^2)^{\nu/2} \le 2^{\nu} R^{\nu} , \quad \textnormal{as soon as} \quad R \ge 1 .
\end{align*}

\vspace{5pt}

\noindent \textbf{4) Bounding each term $\bm{\left\| \mathds{1}_{B^{\Rn}_R(0)} ({\cal F} \unknownDensity_{\ell} - {\cal F} \widehat{g}_{\ell}) \right\|_{L^2}}$.} For this, for every integers $\ell=1,\dots,M$ and $j=1,\dots,N$ we define a random mapping $\zeta_{\ell,j} : (\Rn)^{\mathbb{N}} \to L^2(\Rn,\mathbb{C})$ through the following expression, which holds for every $x^{\mathbb{N}} \in (\Rn)^{\mathbb{N}}$,
$$
\zeta_{\ell,j}(x^{\mathbb{N}})(z) \triangleq \mathds{1}_{B^{\Rn}_R(0)}(z) \; e^{2 \pi i z^\top X_{\ell,j}(x^{\mathbb{N}})} , \quad z \in \Rn .
$$
Note that, for every fixed $\ell=1,\dots,M$, since they depend deterministically from $X_{\ell,1},\dots,X_{\ell,M}$, the random mappings $\zeta_{\ell,1},\dots,\zeta_{\ell,N}$ are independent random mappings in the probability space $((\Rn)^{\mathbb{N}},\mathcal{B}^{\mathbb{N}},\mu^{\mathbb{N}}_{t_{\ell}})$, and they are additionally equally distributed with respect to $\unknownDensity_{\ell}(\cdot) \triangleq \unknownDensity(t_{\ell},\cdot)$. In particular, for every $j=1,\dots,N$ we may compute
$$
\mathbb{E}_{\mu^{\mathbb{N}}_{t_{\ell}}}[ \zeta_{\ell,j} ](z) = \int_{\Rn} \mathds{1}_{B^{\Rn}_R(0)}(z) \; e^{2 \pi i z^\top x} \unknownDensity(t_{\ell},x) \; \mathrm{d}x = \mathds{1}_{B^{\Rn}_R(0)}(z) ({\cal F} \unknownDensity_{\ell})(z) , \quad z \in \Rn .
$$ 
We may also compute
$$
\underset{x^{\mathbb{N}} \in (\Rn)^{\mathbb{N}}}{\textnormal{ess}\sup} \ \left\| \frac{\zeta_{\ell,j}(x^{\mathbb{N}}) - \mathbb{E}_{\mu^{\mathbb{N}}_{t_{\ell}}}[ \zeta_{\ell,j} ]}{N} \right\|^2_{L^2} \le \frac{4}{N^2} \int_{\Rn} \mathds{1}_{B^{\Rn}_R(0)}(z) \; \mathrm{d}z = \frac{4}{N^2} V_n R^n ,
$$
where $V_n$ is the volume of the $n$-dimensional unit ball. At this step, thanks to Pinelis inequality, for every $\eta > 0$ and $\ell=1,\dots,M$, we may compute
\begin{align*}
    &\mu_X\left( \left\| \mathds{1}_{B^{\Rn}_R(0)} ({\cal F} \unknownDensity_{\ell} - {\cal F} \widehat{g}_{\ell}) \right\|_{L^2} > \frac{3 (V_n R)^{n/2} \log\left( \frac{4}{\eta} \right)}{\sqrt{N}} \right) \le \\
    &\le \mu_X\left( \left\| \mathds{1}_{B^{\Rn}_R(0)} ({\cal F} \unknownDensity_{\ell} - {\cal F} \widehat{g}_{\ell}) \right\|_{L^2} > \frac{2 \sqrt{2} (V_n R)^{n/2} \sqrt{\log\left( \frac{2}{\eta} \right)}}{\sqrt{N}} \right) \\
    &= \mu^{\mathbb{N}}_{t_{\ell}}\left( \left\| \sum_{j=1}^N \frac{\zeta_{\ell,j} - \mathbb{E}_{\mu^{\mathbb{N}}_{t_{\ell}}}[ \zeta_{\ell,j} ]}{N} \right\|_{L^2} > \frac{2 \sqrt{2} (V_n R)^{n/2} \sqrt{\log\left( \frac{2}{\eta} \right)}}{\sqrt{N}} \right) \le \eta .
\end{align*}
Therefore, for every $\delta > 0$, by taking the union bound with $\eta = \delta / M$, with probability at least $1 - \delta$ it finally holds that
$$
\underset{\ell=1,\dots,M}{\max} \ \left\| \mathds{1}_{B^{\Rn}_R(0)} ({\cal F} \unknownDensity_{\ell} - {\cal F} \widehat{g}_{\ell}) \right\|_{L^2} \le \frac{3 (V_n R)^{n/2} \firstRev{\log\left( \frac{4 M}{\delta} \right)^{\frac{1}{2}}}}{\sqrt{N}} .
$$

\vspace{5pt}

\noindent \textbf{5) Bounding each term $\bm{\| c_{\ell} \|_{H^{\nu}}}$.} By construction, each $c_{\ell} \in H^{m+1}([0,T],\R)$ is the function with minimum norm that satisfies $c_{\ell_1}(t_{\ell_2}) = \delta_{\ell_1,\ell_2}$, for $\ell_1 , \ell_2 = 1,\dots,M$. In particular, since the function $z_\ell(t) \triangleq \sinc(M(t-t_\ell)/T)$ is analytic, and thus  $z_{\ell} \in H^{m+1}([0,T],\R)$, and satisfies $z_{\ell_1}(t_{\ell_2}) = \delta_{\ell_1,\ell_2}$, for $\ell_1 , \ell_2 = 1,\dots,M$, it must hold that $\| c_{\ell} \|_{H^{m+1}} \leq \| z_{\ell} \|_{H^{m+1}}$, for every $\ell=1,\dots,M$. Therefore, since
$$
\underset{\ell_1=1,\dots,M}{\max} \ | c_{\ell_2}(t_{\ell_1}) | = 1 , \quad \ell_2=1,\dots,M ,
$$
by applying the bound for Sobolev functions with scattered zeros recalled in \ref{sec:RKHS} with $h = T/M$, we obtain that, for every $0 \le \nu \le m+1$ there exists a constant $C_{m+1,\nu} > 0$ such that
$$
\| c_{\ell} \|_{H^{\nu}} \leq C_{m+1,\nu} \Big( ( T / M )^{-\nu} + ( T / M )^{m+1-\nu} \| z_{\ell} \|_{H^{m+1}} \Big) , \quad \ell=1,\dots,M .
$$
To conclude, we note that the Fourier transform of the extension of each $z_{\ell}$ to $H^{m+1}(\R,\R)$ is the function $(T / M) \mathds{1}_{\{ M / T \}}$. Combining this latter result with the Fourier characterization of the norm $\| \cdot \|_{H^{m+1}}$ yields, for every $\ell=1,\dots,M$,
$$
\displaystyle \| z_{\ell} \|^2_{H^{m+1}} \leq \frac{T}{M} \int_{-\frac{M}{2 T}}^{\frac{M}{2 T}} (1+ \tau^2)^{m+1} \; \mathrm{d}\tau \leq \left( \frac{M}{T} \right)^{2(m+1)},
$$
where the last step follows from the assumption $M \ge 2 T$\footnote{Indeed, under this assumption, $1 \le 3/4 M^2 / T^2$, and thus $\underset{0 \le \tau \le M / (2 T)}{\max} ( 1 + \tau^2) \le M^2/T^2$.}. Therefore, for every $\ell=1,\dots,M$ we finally obtain that (we implicitly overload the constant $C_{m+1,\nu}$)
$$
\| c_{\ell} \|_{H^{\nu}} \leq C_{m+1,\nu} \left( \frac{M}{T} \right)^{\nu} , \quad \nu=0,\dots,m+1 ,
$$

\vspace{5pt}

\noindent \textbf{6) Gathering the previous bounds together and end of the proof.} Recalling from the results in Section \ref{sec:RKHS} that there exists a constant $C > 0$ such that
$$
\| u \|_{L^{\infty}} \le C \| u \|_{H^1} , \quad \textnormal{for every} \quad u \in H^{m+1}([0,T],\R) ,
$$
we may compute, for every $\ell=1,\dots,M$,
$$
\| \unknownDensity(t_{\ell},\cdot) \|_{H^{2(m+1)}(\Rn,\R)} \le \| \unknownDensity \|_{L^\infty([0,T],\R) \otimes H^{2(m+1)}(\Rn,\R)} \le C \| \unknownDensity \|_{H^{1,2(m+1)}} .
$$
Combining this latter inequality with all the previous bounds finally yields
\begin{align*}
    L(\approxDensity)^{1/2} &\le L(\widetilde{p})^{1/2} + A^{1/2} \\
    &\le C_{m+1,1} M^{-m} \| \unknownDensity \|_{H^{m+1,0}} + C_{m+1,0} M^{-(m+1)} \| \unknownDensity \|_{H^{m+1,2}} + A^{1/2} \\
    &\le C \big( \| \unknownDensity \|_{H^{m+1,2}} + \| \unknownDensity \|_{H^{1,2(m+1)}} \big) \times \\
    &\hspace{29ex}\times \left( M^{-m} + R^{-2m} + R^{n/2} \firstRev{\log\left( \frac{4 M}{\delta} \right)^{\frac{1}{2}}} N^{-1/2} \right) ,
\end{align*}
for some appropriate constant $C > 0$, which holds with probability at least $1 - \delta$, and the sought conclusion may be easily inferred.
\end{proof}

The main benefit offered by replacing the unknown density $\unknownDensity$ with the model density $\approxDensity$ consists of the fact that integrals of this latter mapping and of its derivatives can be actually (easily) computed, enabling to correctly instantiate our first learning problem for $(\unknownDiffusion,\unknownDrift) \in \C$, which is our next step.

\subsection{The infinite-dimensional learning problem and its fidelity} \label{sec:LP}

In this section, our goal consists of instantiating and analyzing the learning problem to identify the coefficients $(\unknownDiffusion,\unknownDrift) \in \C$. This problem computes coefficients which ``best'' matches SFPE when evaluated at the model density $\approxDensity$. Below, we seek those coefficients in the whole set $\C$, deferring to a later section the problem of learning $(\unknownDiffusion,\unknownDrift) \in \C$ through finite-dimensional models. In particular, the main result contained herein consists of appropriate learning error estimates which pave the way to obtaining learning error estimates for the problem of learning $(\unknownDiffusion,\unknownDrift) \in \C$ through finite-dimensional models. However, for pedagogical purposes, we present complete learning rates for the infinite-dimensional learning problem as well.

From now on, we fix $0 < \varepsilon , \delta < 1$, and select $M , N \in \mathbb{N}$ and $R > 0$ as claimed in Theorem \ref{thm:approxAle}, so that, with probability at least $1 - \delta$, it holds that:
\firstRev{\begin{align} \label{eq:ErrorDensity}
    L(\approxDensity) &\le C(\unknownDiffusion,\unknownDrift) \left( \firstRev{\log\left( \frac{1}{\delta \varepsilon} \right)^{\frac{1}{2}}} \varepsilon \right)^2 \\
    &\le C(\unknownDiffusion,\unknownDrift) \left( \log\left( \frac{M N}{\delta} \right)^{\frac{1}{2}} (M N)^{-\frac{2 m}{n + 2 ( 2 m + 1 )}} \right)^2 , \nonumber
\end{align}
}
where the constant $C(\unknownDiffusion,\unknownDrift) > 0$ depends on $\unknownDiffusion$ and $\unknownDrift$ uniquely. To compute accurate learning rates, we leverage classical RKHS approximation theory, which is essentially well-posed for mappings which are defined on bounded domains. For this, we will make use of the following additional assumption: 

\begin{Itemize}
    \item[$(B)$] There exists $R_* > 0$ such that the coefficients $(\unknownDiffusion,\unknownDrift) \in \C$ satisfy:
    $$
    \textnormal{supp}\big( \secRev{\unknownDiffusion(t,\cdot) - \alpha I} , \unknownDrift(t,\cdot) \big) \subseteq \overline{B^{\Rn}_{R_*}(0)} , \quad \textnormal{for every} \ t \in [0,T] .
    $$
\end{Itemize}

\begin{remark}
Assumption $(B)$ plays a key role in computing learning rates which, on the one hand, leverage classical RKHS approximation theory, and which, on the other hand, depend explicitly on the parameters defining the learning problem. In particular, the latter property can not be generally obtained by just (smoothly) restricting to balls quantities which are defined on unbounded domains. That said, our result can be shown to hold under less restricting assumptions, such as assuming the coefficients $(\unknownDiffusion,\unknownDrift) \in \C$ decay to zero \secRev{(up to removing $(\alpha I,0)$ of course)} at infinity under specific rates \cite{Wendland2004}, although we opted for Assumption $(B)$ to avoid excessively tedious computations, in turn fostering a smooth exposition. In addition, it is worth mentioning Assumption $(B)$ is often naturally verified in many applications ranging from biology to robotics, where the state space is a bounded domain.
\end{remark}

Thanks to Assumption $(B)$, we may restrict ourselves to the 
subset:
\begin{align*}
    \CR \triangleq \Big\{ (a, \, &b) \in \C : \\
    &\textnormal{supp}\big( \secRev{a(t,\cdot) - \alpha I} , b(t,\cdot) \big) \subseteq \overline{B^{\Rn}_{R_*}(0)} , \ \textnormal{for every} \ t \in [0,T] \Big\} \subseteq \C ,
\end{align*}
We are now ready to define our infinite-dimensional problem for learning the coefficients $(\unknownDiffusion,\unknownDrift) \in \CR$, which writes as follows:

\begin{definition} \label{def:LP}
For every positive real $\lambda > 0$, the (random) infinite-dimensional Learning Problem to learn stochastic differential equations is defined as:
$$
\LearningProblem \quad \underset{(a,b) \in \CR}{\min} L_{\lambda}(a,b) \triangleq \int^T_0 \left\| \frac{\partial \approxDensity}{\partial t}(t,\cdot) - (\Kolmo_t)^* \approxDensity(t,\cdot) \right\|^2_{L^2} \; \mathrm{d}t + \lambda \| (\secRev{a - \alpha I},b) \|^2_{\Cbasis} \ .
$$
\end{definition}

The well-posedness of problem \LearningProblem is proven in the following proposition:

\begin{proposition} \label{prop:wellPosLP}
For every $\lambda > 0$, problem \LearningProblem is well-posed and has a unique solution, which is denoted by $(\solDiffusion,\solDrift) \in \CR$.
\end{proposition}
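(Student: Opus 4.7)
The plan is to recognize \LearningProblem as a classical Tikhonov regularization problem posed over a closed convex subset of the Hilbert space $\Cbasis$, and then invoke the direct method of the calculus of variations. The key observation is that, for the fixed (random, but now frozen) density $\approxDensity$, the map
$$
\mathcal{A} : \Cbasis \to L^2([0,T]\times\Rn,\R), \qquad \mathcal{A}(a,b)(t,y) \triangleq (\Kolmo_t)^* \approxDensity(t,y),
$$
is linear in $(a,b)$. Indeed, $\Kolmo_t$ is affine in $(a,b)$ and taking the adjoint only redistributes derivatives onto the products $(a+\alpha I)\approxDensity$ and $b\approxDensity$. I would then verify that $\mathcal{A}$ is bounded: by Theorem \ref{Theo:CoeffProperties}, the $L^\infty$ norms of $a,b$ and of their derivatives up to order two are controlled linearly by $\|(a,b)\|_{\Cbasis}$, while $\approxDensity$ is a finite linear combination of translates of the smooth, square-integrable kernel $\rho_R$ multiplied by scalar functions in $H^{m+1}([0,T],\R)$, so $\approxDensity$ and its required derivatives all lie in $L^2([0,T]\times\Rn,\R)$ with norms independent of $(a,b)$. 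This yields $\|\mathcal{A}(a,b)\|_{L^2}\leq C\|(a,b)\|_{\Cbasis}$.

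With $\mathcal{A}$ bounded linear, the functional rewrites as
$$
L_\lambda(a,b) = \bigl\| \tfrac{\partial \approxDensity}{\partial t} - \mathcal{A}(a,b) \bigr\|^2_{L^2} + \lambda \|(a,b)\|^2_{\Cbasis},
$$
which is continuous, $\lambda$-strongly convex, and coercive on $\Cbasis$. Strict convexity, together with the convexity of the feasible set, already forces uniqueness of any minimizer. It remains to produce one, for which I need $\CR$ to be closed and convex in $\Cbasis$. Convexity is immediate: the support constraint $\mathrm{supp}(a(t,\cdot),b(t,\cdot))\subseteq \overline{B_{R_*}^{\Rn}(0)}$ cuts out a linear subspace, and the pointwise constraint $a(t,y)\in\SymP$ defines a convex cone. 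Closedness follows from Theorem \ref{Theo:CoeffProperties}: convergence in $\Cbasis$ implies $C^{2m+1}$ uniform convergence of $a$ and $b$, which preserves both the support inclusion and the pointwise positive semi-definiteness in the limit.

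Finally, I would close the argument by the direct method: take a minimizing sequence $(a_k,b_k)\in\CR$; coercivity of $L_\lambda$ bounds it in $\Cbasis$, whence a subsequence converges weakly to some $(\solDiffusion,\solDrift)\in\Cbasis$; weak closedness of the closed convex set $\CR$ places the limit in $\CR$; and weak lower semicontinuity of the continuous convex functional $L_\lambda$ yields $L_\lambda(\solDiffusion,\solDrift)\leq \liminf_k L_\lambda(a_k,b_k)$, so $(\solDiffusion,\solDrift)$ is a minimizer. Uniqueness follows from strict convexity. The only mildly delicate point is the boundedness of $\mathcal{A}$, which hinges on combining the uniform derivative bounds from Theorem \ref{Theo:CoeffProperties} with the smoothness of the kernel-based model $\approxDensity$; everything else is a standard application of Hilbert space optimization theory.
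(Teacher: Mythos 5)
Your proposal is correct and follows essentially the same route as the paper: the direct method on the closed convex set $\CR$, with coercivity from the $\lambda$-term, weak lower semicontinuity of the convex continuous functional for existence, and strict convexity for uniqueness. The only cosmetic slip is calling the map $(a,b)\mapsto(\Kolmo_t)^*\approxDensity$ linear when it is affine (because of the $\alpha I$ shift), which you yourself note and which does not affect convexity, coercivity, or the rest of the argument.
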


\begin{proof}
Since the mapping
\begin{equation} \label{eq:stronglyConvexMapping}
    (a,b) \in \Cbasis \mapsto \int^T_0 \left\| \frac{\partial \approxDensity}{\partial t}(t,\cdot) - (\secRev{\mathcal{L}^{a + \alpha I , b}_t})^* \approxDensity(t,\cdot) \right\|^2_{L^2} \; \mathrm{d}t + \lambda \| (a,b) \|^2_{\Cbasis}
\end{equation}
is strictly convex on the closed convex subset
\begin{align*}
    \secRev{\mathcal{S} \triangleq \Big\{ (a,b) \in \Cweak :} \ &\secRev{a(t,x) \succcurlyeq 0 ,} \\
    &\secRev{\textnormal{supp}\big( \secRev{a(t,\cdot)} , b(t,\cdot) \big) \subseteq \overline{B^{\Rn}_{R_*}(0)} , \quad (t,x) \in [0,T] \times \Rn \Big\} \subseteq \Cbasis},
\end{align*}
we just need to prove the existence of a solution to \LearningProblem. For this, if \secRev{$(a_k + \alpha I,b_k)_{k \in \mathbb{N}}$} is any minimizing sequence for \LearningProblem, there must exist some constant $C > 0$ such that $\| (a_k,b_k) \|_{\Cbasis} \le C$ for every $k \in \mathbb{N}$, and therefore, since $\secRev{\mathcal{S}}$ is in particular a closed and convex subset of the Hilbert space $\Cbasis$, up to extracting a subsequence there exists $(\solDiffusion,\solDrift) \in \secRev{\mathcal{S}}$ such that $(a_k,b_k)_{k \in \mathbb{N}}$ converges to $(\solDiffusion,\solDrift)$ for the weak topology of $\Cbasis$. Now, thanks to Theorem \ref{Theo:CoeffProperties} 
the mapping \eqref{eq:stronglyConvexMapping} is 
continuous for the strong topology of $\Cbasis$. Thus, we infer that 
it is in particular weakly lower semi-continuous, 
so that
\secRev{\begin{align*}
    &\int^T_0 \left\| \frac{\partial \approxDensity}{\partial t}(t,\cdot) - (\mathcal{L}^{\solDiffusion + \alpha I,\solDrift}_t)^* \approxDensity(t,\cdot) \right\|^2_{L^2} \; \mathrm{d}t + \lambda \| (\solDiffusion,\solDrift) \|^2_{\Cbasis} \le \\
    &\le \underset{k\to\infty}{\liminf} \ \int^T_0 \left\| \frac{\partial \approxDensity}{\partial t}(t,\cdot) - (\mathcal{L}^{a_k + \alpha I , b_k}_t)^* \approxDensity(t,\cdot) \right\|^2_{L^2} \; \mathrm{d}t + \lambda \| (a_k,b_k) \|^2_{\Cbasis} \\
    &= \underset{(a,b) \in \CR}{\min} \quad \int^T_0 \left\| \frac{\partial \approxDensity}{\partial t}(t,\cdot) - (\Kolmo_t)^* \approxDensity(t,\cdot) \right\|^2_{L^2} \; \mathrm{d}t + \lambda \| (a - \alpha I,b) \|^2_{\Cbasis} ,
\end{align*}}
and the sought conclusion follows.
\end{proof}

We now investigate the fidelity of the learning problem \LearningProblem. More specifically, through the estimates of Section \ref{sec:ShortFokkerPlanck}, under appropriate norms we compute bounds for the approximation error between solutions to SFPE, associated with the drift and diffusion coefficients solutions to \LearningProblem, and the unknown mapping $\unknownDensity$. In turn, these bounds will endow our learning procedure with well-posedness and high fidelity.

We start with the following technical lemma:

\begin{lemma} \label{lemma:estimateSource}
For every $\lambda > 0$, with probability at least $1 - \delta$, it holds that:
\firstRev{\begin{align*}
    L_{\lambda}(\solDiffusion,\solDrift) &\le C(\unknownDiffusion,\unknownDrift) \left( \lambda + \left( \log\left( \frac{1}{\delta \varepsilon} \right)^{\frac{1}{2}} \varepsilon \right)^2 \right) \\
    &\le C(\unknownDiffusion,\unknownDrift) \left( \lambda + \left( \log\left( \frac{M N}{\delta} \right)^{\frac{1}{2}} (M N)^{-\frac{2 m}{n + 2 ( 2 m + 1 )}} \right)^2 \right) ,
\end{align*}
}
where the constant $C(\unknownDiffusion,\unknownDrift) > 0$ depends on $\unknownDiffusion$ and $\unknownDrift$ uniquely \secRev{(in particular, $C(\unknownDiffusion,\unknownDrift)$ implicitly depends on the hyper-parameters $m$, $p_0$, and $\alpha$)}.
\end{lemma}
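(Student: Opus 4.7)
The strategy is classical for regularized least-squares: exploit the optimality of $(\solDiffusion,\solDrift)$ by comparing its loss to the loss of the ground-truth pair $(\unknownDiffusion,\unknownDrift)$, which lies in $\CR$ by Assumption $(B)$. Since $(\solDiffusion,\solDrift)$ minimizes $L_\lambda$ over $\CR$, one immediately has
$$
L_\lambda(\solDiffusion,\solDrift) \;\le\; L_\lambda(\unknownDiffusion,\unknownDrift) \;=\; \int_0^T \left\| \frac{\partial \approxDensity}{\partial t}(t,\cdot) - (\unknownKolmo_t)^*\approxDensity(t,\cdot) \right\|_{L^2}^2 \mathrm{d}t \;+\; \lambda \|(\unknownDiffusion,\unknownDrift)\|_{\Cbasis}^2 .
$$
The second summand is immediately of the form $\lambda \cdot C(\unknownDiffusion,\unknownDrift)$, so the entire task reduces to controlling the residual integrated over $[0,T]$.

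To bound the residual, I would insert the unknown density $\unknownDensity$ and exploit Corollary \ref{corol:unknownDensity}, which gives $\frac{\partial \unknownDensity}{\partial t} = (\unknownKolmo_t)^*\unknownDensity$ almost everywhere. This yields the pointwise (in $t$) decomposition
$$
\frac{\partial \approxDensity}{\partial t}(t,\cdot) - (\unknownKolmo_t)^*\approxDensity(t,\cdot) \;=\; \frac{\partial (\approxDensity-\unknownDensity)}{\partial t}(t,\cdot) \;+\; (\unknownKolmo_t)^*(\unknownDensity - \approxDensity)(t,\cdot) ,
$$
so after squaring and using the elementary inequality $(A+B)^2 \le 2A^2 + 2B^2$, the integrated residual is controlled by $2 \int_0^T \|\partial_t(\approxDensity-\unknownDensity)\|_{L^2}^2 \, \mathrm{d}t$ plus $2 \int_0^T \|(\unknownKolmo_t)^*(\unknownDensity-\approxDensity)\|_{L^2}^2 \, \mathrm{d}t$.

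The first of these two integrals is directly part of $L(\approxDensity)$ as defined before Theorem \ref{thm:approxAle}. For the second, I would observe that $(\unknownKolmo_t)^*$ acts as a second-order differential operator whose coefficients involve derivatives of $\unknownDiffusion$ and $\unknownDrift$ up to order $2$. Invoking the refined bound in Theorem \ref{Theo:CoeffProperties}, these coefficients are uniformly bounded in terms of $\|(\unknownDiffusion,\unknownDrift)\|_{\Cbasis}$, giving
$$
\|(\unknownKolmo_t)^*(\unknownDensity - \approxDensity)(t,\cdot)\|_{L^2} \;\le\; C(\unknownDiffusion,\unknownDrift)\, \|\unknownDensity(t,\cdot) - \approxDensity(t,\cdot)\|_{H^2} ,
$$
so the second integral is absorbed into $C(\unknownDiffusion,\unknownDrift)^2 \cdot L(\approxDensity)$ as well. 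Finally, Theorem \ref{thm:approxAle} applied to the chosen values of $M$, $N$, $R$ gives $L(\approxDensity) \le C(\unknownDiffusion,\unknownDrift)\, ( \log(1/(\delta\varepsilon))^{1/2}\varepsilon )^2$ with probability at least $1-\delta$, using that $\|\unknownDensity\|_{H^{m+1,2}}^2 + \|\unknownDensity\|_{H^{1,2(m+1)}}^2$ is controlled by $(\unknownDiffusion,\unknownDrift)$ via the regularity estimate \eqref{eq:Shortestimate1} of Theorem \ref{theor:ShortexFPE} applied to SFPE.

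The main technical obstacle is the operator bound for $(\unknownKolmo_t)^* : H^2 \to L^2$: one must unfold the dual carefully and keep track of the fact that only norms of derivatives of $\unknownDiffusion,\unknownDrift$ up to order $2$ (not the full order $2m+1$) appear, which is exactly what the refined second bound in Theorem \ref{Theo:CoeffProperties} provides. Absorbing the resulting constants into $C(\unknownDiffusion,\unknownDrift)$ and combining with the probabilistic estimate from Theorem \ref{thm:approxAle} then yields the claimed bound.
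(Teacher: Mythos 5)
Your proposal is correct and follows essentially the same route as the paper's proof: compare $L_\lambda(\solDiffusion,\solDrift)$ with $L_\lambda(\unknownDiffusion,\unknownDrift)$ by optimality, insert $\unknownDensity$ and use $\partial_t \unknownDensity = \unknownKolmo_t^* \unknownDensity$ from Corollary \ref{corol:unknownDensity}, apply Young's inequality, bound $\unknownKolmo_t^*$ as an $H^2\to L^2$ operator via Theorem \ref{Theo:CoeffProperties}, and conclude with the bound on $L(\approxDensity)$ from Theorem \ref{thm:approxAle} (i.e., \eqref{eq:ErrorDensity}). The only cosmetic difference is that the paper absorbs the dependence on the Sobolev norms of $\unknownDensity$ directly into the constant of \eqref{eq:ErrorDensity} rather than invoking \eqref{eq:Shortestimate1} explicitly.
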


\begin{proof}
Thanks to Theorem \ref{Theo:CoeffProperties} and Corollary \ref{corol:unknownDensity}, a routine use of H\"older and Young inequalities allows us to compute
\secRev{\begin{align*}
    L_{\lambda}(\solDiffusion,\solDrift) - &\lambda \| (\unknownDiffusion - \alpha I,\unknownDrift) \|^2_{\Cbasis} \le \\
    &\le \int^T_0 \left\| \frac{\partial \approxDensity}{\partial t}(t,\cdot) - \unknownKolmo^*_t \approxDensity(t,\cdot) \right\|^2_{L^2} \; \mathrm{d}t \\
    &\le 2 \int^T_0 \left\| \frac{\partial \approxDensity}{\partial t}(t,\cdot) - \frac{\partial \unknownDensity}{\partial t}(t,\cdot) \right\|^2_{L^2} \; \mathrm{d}t + 2 \int^T_0 \left\| \unknownKolmo^*_t (\approxDensity - \unknownDensity)(t,\cdot) \right\|^2_{L^2} \; \mathrm{d}t \\
    &\le 2 \int^T_0 \left\| \frac{\partial \approxDensity}{\partial t}(t,\cdot) - \frac{\partial \unknownDensity}{\partial t}(t,\cdot) \right\|^2_{L^2} \; \mathrm{d}t + 2 C(\unknownDiffusion,\unknownDrift) \int^T_0 \| \approxDensity(t,\cdot) - \unknownDensity(t,\cdot) \|^2_{H^2} \; \mathrm{d}t \\
    &\le C(\unknownDiffusion,\unknownDrift) L(\approxDensity) \le C(\unknownDiffusion,\unknownDrift) \left( \log\left( \frac{1}{\delta \varepsilon} \right)^{\frac{1}{2}} \varepsilon \right)^2 ,
\end{align*}}
where the (overloaded) constant $C(\unknownDiffusion,\unknownDrift) > 0$, which depends on $\unknownDiffusion$ and $\unknownDrift$ uniquely, comes from the constant in \eqref{eq:ErrorDensity}, and the conclusion follows.
\end{proof}

We are now ready to compute error bounds between solutions to SFPE, associated with the drift and diffusion coefficients solutions to \LearningProblem, and the unknown mappings $\unknownDensity$. It is important to note that, given the nature of our data set, which essentially depends on observations of the law of the state process, the error between the unknown densities and the densities stemming from the learned coefficients is a ``good metric'' with which the convergence of an identification algorithm for stochastic differential equations which leverage observations of the state process may be tested.

We better formalize this metric as follows. For every $(a,b) \in \C$, let $p_{a,b} \in H^{m+1,2(m+1)}([0,T]\times\Rn,\R)$ denote the unique solution to SFPE with coefficients $(a,b) \in \C$. Note that the existence and uniqueness of the regular mapping $p_{a,b} \in H^{m+1,2(m+1)}([0,T]\times\Rn,\R)$ as non-negative solution to SFPE, of unitary mass and with coefficients $(a,b) \in \C$ is immediate consequence of Theorem \ref{theor:ShortexFPE}. We define the following metric to test the accuracy of our method:
$$
\Metric(a,b) \triangleq \| p_{a,b} - \unknownDensity \|^2_{L^2} = \int^T_0 \| p_{a,b}(t,\cdot) - \unknownDensity(t,\cdot) \|^2_{L^2} \; \mathrm{d}t , \quad (a,b) \in \C .
$$
Among other benefits, we will see this metric is also particularly well-suited to estimate the error which is done when computing the observation/regulation metric \eqref{eq:ShortintegralMetric}.

Our main result on the accuracy of our learning method writes as follows:

\begin{theorem} \label{theorem:ultimateEstimate1}
Let the coefficients $(\solDiffusion,\solDrift) \in \CR$ be the unique solution to \LearningProblem with 
$\firstRev{\lambda = \left( \log\left( \frac{1}{\delta \varepsilon} \right)^{\frac{1}{2}} \varepsilon \right)^2}$. With probability at least $1 - \delta$, it holds that:
\firstRev{\begin{align*}
    E(\solDiffusion,\solDrift) &\le C(\unknownDiffusion,\unknownDrift) \left( \log\left( \frac{1}{\delta \varepsilon} \right)^{\frac{1}{2}} \varepsilon \right)^2 \\
    &\le C(\unknownDiffusion,\unknownDrift) \left( \log\left( \frac{M N}{\delta} \right)^{\frac{1}{2}} (M N)^{-\frac{2 m}{n + 2 ( 2 m + 1 )}} \right)^2 ,
\end{align*}
}
where the constant $C(\unknownDiffusion,\unknownDrift) > 0$ depends on $\unknownDiffusion$ and $\unknownDrift$ uniquely \secRev{(in particular, $C(\unknownDiffusion,\unknownDrift)$ implicitly depends on the hyper-parameters $m$, $p_0$, and $\alpha$)}.
\end{theorem}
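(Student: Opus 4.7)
The plan is to estimate $\Metric(\solDiffusion,\solDrift) = \|p_{\solDiffusion,\solDrift} - \unknownDensity\|_{L^2}^2$ by inserting the model density $\approxDensity$ as a pivot, via the decomposition
\[
p_{\solDiffusion,\solDrift} - \unknownDensity \;=\; (p_{\solDiffusion,\solDrift} - \approxDensity) \;+\; (\approxDensity - \unknownDensity).
\]
The second piece is already controlled: its $L^2$ norm is bounded by $L(\approxDensity)^{1/2}$, which by Theorem~\ref{thm:approxAle} and our choice of $M$, $N$, $R$ is of the announced order $\log(1/(\delta\varepsilon))^{1/2}\varepsilon$ with probability at least $1-\delta$. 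The work is therefore all in bounding $w := p_{\solDiffusion,\solDrift} - \approxDensity$.

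The key observation is that $w$ solves a non-homogeneous Fokker--Planck equation in the sense of $\textnormal{FPE}_f$ with coefficients $(\solDiffusion,\solDrift)\in\CR\subseteq\C$: writing
\[
\partial_t w - (\mathcal{L}_t^{\solDiffusion,\solDrift})^* w \;=\; -\Big(\partial_t \approxDensity - (\mathcal{L}_t^{\solDiffusion,\solDrift})^* \approxDensity\Big) \;=:\; f,
\]
with initial datum $w(0,\cdot) = p_0 - \approxDensity(0,\cdot)$. By definition of the functional $L_\lambda$ we have $\|f\|_{L^2([0,T]\times\Rn)}^2 \le L_\lambda(\solDiffusion,\solDrift)$, and a trace-in-time argument (applied to $v := \approxDensity - \unknownDensity$, which lies in $H^1(0,T;L^2)$ with $\unknownDensity(0,\cdot)=p_0$) gives $\|w(0,\cdot)\|_{L^2}^2 \le C\,L(\approxDensity)$. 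I can then invoke the energy estimate \eqref{eq:Shortestimate1} of Theorem~\ref{theor:ShortexFPE} to obtain, for every $t\in[0,T]$,
\[
\|w(t,\cdot)\|_{L^2}^2 \;\le\; C\big(\|(\solDiffusion,\solDrift)\|_{\Cbasis}\big)\,\big( L(\approxDensity) \;+\; L_\lambda(\solDiffusion,\solDrift) \big),
\]
and integrating in $t$ bounds $\|w\|_{L^2([0,T]\times\Rn)}^2$ by the same quantity up to a factor $T$.

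The main obstacle is that the constant $C(\|(\solDiffusion,\solDrift)\|_{\Cbasis})$ depends on the very coefficients being learned, so a priori it could blow up. This is precisely where the regularization plays its role: the Tikhonov term in $\LearningProblem$ gives $\|(\solDiffusion,\solDrift)\|_{\Cbasis}^2 \le L_\lambda(\solDiffusion,\solDrift)/\lambda$, and Lemma~\ref{lemma:estimateSource} bounds the right-hand side by $C(\unknownDiffusion,\unknownDrift)(1 + (\log(1/(\delta\varepsilon))^{1/2}\varepsilon)^2/\lambda)$. With the choice $\lambda = (\log(1/(\delta\varepsilon))^{1/2}\varepsilon)^2$, this ratio is bounded by an absolute multiple of $C(\unknownDiffusion,\unknownDrift)$, so $\|(\solDiffusion,\solDrift)\|_{\Cbasis}$ stays in a deterministic bounded set (with high probability), and the continuous dependence of $C(\cdot)$ yields a uniform constant $C(\unknownDiffusion,\unknownDrift)$.

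Putting the pieces together: $L_\lambda(\solDiffusion,\solDrift) \le C(\unknownDiffusion,\unknownDrift)(\lambda + (\log(1/(\delta\varepsilon))^{1/2}\varepsilon)^2) = 2\,C(\unknownDiffusion,\unknownDrift)(\log(1/(\delta\varepsilon))^{1/2}\varepsilon)^2$ by the choice of $\lambda$, and $L(\approxDensity)$ is of the same order by Theorem~\ref{thm:approxAle}. Combining these with the energy estimate and the triangle inequality
\[
\Metric(\solDiffusion,\solDrift) \;\le\; 2\,\|w\|_{L^2}^2 + 2\,\|\approxDensity - \unknownDensity\|_{L^2}^2
\]
yields the claimed bound $\Metric(\solDiffusion,\solDrift) \le C(\unknownDiffusion,\unknownDrift)(\log(1/(\delta\varepsilon))^{1/2}\varepsilon)^2$. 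The second form in the statement is just the substitution of the rates of Theorem~\ref{thm:approxAle} in terms of $M$ and $N$.
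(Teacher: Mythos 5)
Your proposal is correct and follows essentially the same route as the paper: pivot through $\approxDensity$, view $p_{\solDiffusion,\solDrift}-\approxDensity$ as a solution of the non-homogeneous Fokker--Planck equation whose source is controlled by $L_{\lambda}(\solDiffusion,\solDrift)$ (via Lemma \ref{lemma:estimateSource}), apply the energy estimate \eqref{eq:Shortestimate1}, and tame the constant $C\big(\|(\solDiffusion,\solDrift)\|_{\Cbasis}\big)$ through the Tikhonov term and the choice of $\lambda$, exactly as in \eqref{eq:boundSolCoeff}. The only (harmless) deviation is at the initial datum: the paper takes $\rho(0,\cdot)=0$, whereas you keep $w(0,\cdot)=p_0-\approxDensity(0,\cdot)$ and absorb it into $L(\approxDensity)$ by a trace-in-time bound, which is a slightly more careful handling of the same step and does not change the rate.
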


\begin{proof}
We define
$$
\rho \triangleq \solDensity - \approxDensity \in C(0,T;L^2(\Rn,\R)) \cap H^{0,1}([0,T]\times\Rn,\R) ,
$$
and
$$
f(t,\cdot) \triangleq -\left( \frac{\partial \approxDensity}{\partial t}(t,\cdot) - (\mathcal{L}^{\solDiffusion,\solDrift}_t)^* \approxDensity(t,\cdot) \right) \in L^2([0,T]\times\Rn,\R) .
$$
It is readily seen that
$$
\begin{cases}
\displaystyle \frac{\mathrm{d}}{\mathrm{d}t} \int_{\Rn} \varphi(y) \rho(t,y) \; \mathrm{d}y = \\
\displaystyle \hspace{10ex}= \int_{\Rn} \left( \mathcal{L}^{\solDiffusion,\solDrift}_t \varphi(y) \rho(t,y) + f(t,y) \varphi(y) \right) \; \mathrm{d}y , \quad \varphi \in C^{\infty}_c(\Rn,\R) , \\[10pt]
\rho(0,\cdot) = 0 ,
\end{cases}
$$
and therefore, thanks to Theorem \ref{theor:ShortexFPE}, we may apply the estimate \eqref{eq:Shortestimate1} to $\rho$, which in combination with Lemma \ref{lemma:estimateSource} with the choice 
$\firstRev{\lambda = \left( \log\left( \frac{1}{\delta \varepsilon} \right)^{\frac{1}{2}} \varepsilon \right)^2}$ 
yields
\secRev{$$
\underset{t \in [0,T]}{\sup} \ \| \solDensity(t,\cdot) - \approxDensity(t,\cdot) \|^2_{L^2} \le C\big(\alpha,\|(\solDiffusion - \alpha I,\solDrift)\|_{\Cbasis}\big) C(\unknownDiffusion,\unknownDrift) 
\left( \log\left( \frac{1}{\delta \varepsilon} \right)^{\frac{1}{2}} \varepsilon \right)^2 ,
$$}
where the constant $C(\unknownDiffusion,\unknownDrift) > 0$ depends on $\unknownDiffusion$ and $\unknownDrift$ uniquely, whereas the constant $\secRev{C\big(\alpha,\|(\solDiffusion - \alpha I,\solDrift)\|_{\Cbasis}\big)}$ continuously depends on $\secRev{\alpha}$ and $\secRev{\| (\solDiffusion - \alpha I,\solDrift) \|_{\Cbasis}}$ uniquely. Up to overloading these constants, combined with \eqref{eq:ErrorDensity} this latter inequality readily yields
\secRev{\begin{equation} \label{eq:almostFinalBound}
    \int^T_0 \| \solDensity(t,\cdot) - \unknownDensity(t,\cdot) \|^2_{L^2} \; \mathrm{d}t \le C\big(\alpha,\|(\solDiffusion - \alpha I,\solDrift)\|_{\Cbasis}\big) C(\unknownDiffusion,\unknownDrift) 
    \left( \log\left( \frac{1}{\delta \varepsilon} \right)^{\frac{1}{2}} \varepsilon \right)^2 .
\end{equation}}

At this step, from Lemma \ref{lemma:estimateSource} in particular we obtain that
\secRev{\begin{equation} \label{eq:boundSolCoeff}
\| (\solDiffusion - \alpha I,\solDrift) \|^2_{\Cbasis} \le \frac{C(\unknownDiffusion,\unknownDrift)}{\lambda} 
\left( \lambda + \left( \log\left( \frac{1}{\delta \varepsilon} \right)^{\frac{1}{2}} \varepsilon \right)^2 \right) = 2 C(\unknownDiffusion,\unknownDrift) ,
\end{equation}}
as soon as 
$\firstRev{\lambda = \left( \log\left( \frac{1}{\delta \varepsilon} \right)^{\frac{1}{2}} \varepsilon \right)^2}$. Therefore, up to overloading the constant $C(\unknownDiffusion,\unknownDrift)$, the conclusion follows from combining \eqref{eq:almostFinalBound} with \eqref{eq:boundSolCoeff}.
\end{proof}

\subsection{The finite-dimensional learning problem and its fidelity} \label{sec:Finiteapprox}

The learning problem we introduced in Section \ref{sec:LP} (see Definition \ref{def:LP}) remains difficult to numerically solve. Here, we discuss appropriate finite-dimensional approximations of \LearningProblem and error bounds ensuring the fidelity of this latter approximation, ultimately making our learning approach for stochastic differential equations accurate and tractable.

We start by recalling and adapting the approximation tools we introduced in Section \ref{sec:RKHS} to our framework. 
Let $\secRev{D = [0,T] \times \overline{B^{\Rn}_{R_* + 1}(0)}}$, and consider the RKHS $\mathcal{H}_D = H^{d(m)}(D,\R)$ with associated kernel $K_D$. Without loss of generality, \secRev{we may assume $K_D$ to equal 1 on $[0,T] \times \partial B^{\Rn}_{R_*+1}(0)$ and such that all its derivatives equal zero on $[0,T] \times \partial B^{\Rn}_{R_*+1}(0)$. While not limiting, pursuing learning with slightly larger domains, i.e., $B^{\Rn}_{R_*+1}(0)$ instead of $B^{\Rn}_{R_*}(0)$, is key to enable rates of convergence under positiveness constraints, as we will see shortly.} For any set of $Q \in \mathbb{N}$ points
$$
\widetilde{X}_D \triangleq \Big\{ (t_1,x_1) , \dots , (t_Q, x_Q) \Big\} ,
$$
we consider the following coordinate-wise finite dimensional models to approximate the candidate drift and diffusion coefficient solutions $(\solDiffusion,\solDrift)$:
\begin{align*}
    \begin{split}
        \secRev{a_A}(t,y) &\triangleq \sum_{\ell, \ell'=1}^Q A_{\ell, \ell'} \, K_D((t,y),(t_{\ell},x_{\ell}))\, K_D((t,y),(t_{\ell'},x_{\ell'})) , \quad (t,y) \in D , \\
        \secRev{b_B}(t,x) &\triangleq \sum_{\ell=1}^Q B_{\ell} K_D((t,y),(t_{\ell},x_{\ell})),  \quad (t,y) \in D ,
    \end{split}
\end{align*}
with $A_{\ell,\ell'} \in \R^{n\times n}, B_{\ell} \in \R^n$. Note in particular that, by defining $\Phi_Q: D \to \R^{n \times Qn}$ to be the map $\Phi_Q(t,y) = \big( K_D((t,x),(t_1,x_1)) I_{n\times n} | \dots | K_D((t,y),(t_Q,x_Q)) I_{n\times n} \big)$, then 
\begin{align}\label{eq:ab-with-phi}
    \secRev{a_A}(t,y) = \Phi_Q(t,y) A \Phi_Q(t,y)^{\top}, \quad \secRev{b_B}(t,y) = \Phi_Q(t,y) B,
\end{align}
with $B \triangleq (B_1,\dots,B_Q)^{\top} \in \R^{Qn}$, and $A \in \R^{Qn \times Qn}$ is the $Q \times Q$ block matrix with block entries $A_{\ell,\ell'}$. Therefore, 
we define the finite dimensional convex subset in which $(\solDiffusion,\solDrift)$ is approximated by 
\begin{align*}
    \secRev{\CRh \triangleq 
    \Bigg\{ \bigg( \chi \left( a_A + \frac{\alpha}{2} I \right) + (1 - \chi) \alpha I , \chi b_B \bigg) :} \ &\secRev{A \in \R^{Qn \times Qn} ,} \\
    &\secRev{A \succeq 0 , \ \textnormal{and} \ B \in \R^{Qn} \Bigg\} \subseteq \CRhalf ,}
\end{align*}
\secRev{where $\chi \in C^{\infty}(\R^n,[0,1])$ is any given cut-off function satisfying $\chi|_{\overline{B^{\Rn}_{R_*}(0)}} = 1$ and $\textnormal{supp} \; \chi \subseteq \overline{B^{\Rn}_{R_*+1}(0)}$. Through the subset $\CRh$, we enlarge the space in which we look for finite-dimensional approximating coefficients, due to $a_A + \alpha / 2$ with $A \succeq 0$. Crucial to enable rates of convergence under positiveness constraint as we will see shortly, such strategy is not 
limiting, as it poorly affects computational complexity.}

Before moving to the core of this section, by leveraging the facts we recalled in Section \ref{sec:RKHS}, we provide a crucial approximation results for coefficients $(a,b) \in \CR$. Specifically, by combining the bounds for Sobolev functions with scattered zeros we listed in Section \ref{sec:RKHS} with Theorem \ref{Theo:CoeffProperties}, we obtain the following:

\vspace{-12.5pt}

\firstRev{\begin{theorem}
\label{theo:appOperatorCoeff}
Denote the fill distance between $\widetilde{X}_D$ and $D$ with
$$
h_Q \triangleq \sup_{(t,y) \in D} \ \min_{\ell = 1,\dots,Q} \|(t_{\ell}, x_{\ell}) - (t,y) \| .
$$
\secRev{There exists a constant $C > 0$ such that, for every $(a,b) \in \CR$, 
there exists a tuple $(\proj(a),\proj(b)) \in \CRh$ \secRev{satisfying:}
\begin{align*}
    &\| (a,b) - (\proj(a),\proj(b)) \|_{W^{2,\infty}} \le C (\|\sqrt{a}\|^2_{H^{d(m)}(D)} + \|b\|_{H^{d(m)}})  
    h^{2 m - 1}_Q , \\[5pt]
    &\|(\proj(a) - \alpha I,\proj(b)) \|_{\Cbasis} \le C (\|\sqrt{a}\|^2_{H^{d(m)}(D)} + \|b\|_{H^{d(m)}}).
\end{align*}}
\end{theorem}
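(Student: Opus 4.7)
The plan is to construct $(\proj(a), \proj(b))$ in such a way that the positive semi-definite constraint $A \succeq 0$ in the definition of $\CRh$ is preserved by design, and then to invoke the scattered zeros inequality of Section~\ref{sec:RKHS}. For the drift, I would take $\proj(b)$ to be the componentwise RKHS interpolant of $b$ at the nodes $\widetilde{X}_D$, so that $\proj(b)(t,y) = \Phi_Q(t,y) B$ with $B \in \R^{Qn}$ determined by $\proj(b)(t_\ell, x_\ell) = b(t_\ell, x_\ell)$ for every $\ell=1,\dots,Q$. For the diffusion, the key idea is to exploit the factorization $a = \sqrt{a}\sqrt{a}^\top$, available because $a \succ \beta I$: letting $\widetilde{\sqrt{a}}$ denote the componentwise RKHS interpolant of $\sqrt{a}$, which takes the form $\widetilde{\sqrt{a}}(t,y) = \Phi_Q(t,y) C$ for some $C \in \R^{Qn \times n}$, I would set $\proj(a)(t,y) \triangleq \widetilde{\sqrt{a}}(t,y) \widetilde{\sqrt{a}}(t,y)^\top = \Phi_Q(t,y) (CC^\top) \Phi_Q(t,y)^\top$. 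Then $A \triangleq CC^\top \succeq 0$ automatically, so $(\proj(a),\proj(b)) \in \CRh$. The support condition outside $\overline{B^{\Rn}_{R_*}(0)}$ is inherited from the assumption that $K_D$ and its derivatives vanish on $[0,T]\times\partial B^{\Rn}_{R_*}(0)$.

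For the approximation error in $W^{2,\infty}$, each scalar component of $b - \proj(b)$ and of $\sqrt{a} - \widetilde{\sqrt{a}}$ vanishes on $\widetilde{X}_D$ by construction. Applying the scattered zeros inequality \eqref{eq:wendland} on the $(n+1)$-dimensional domain $D$ with $r = d(m)$ and the intermediate order $\nu$ chosen so that $d(m)-\nu = 2m-1$, I get $\|b - \proj(b)\|_{H^\nu} \le C h_Q^{2m-1}\|b\|_{H^{d(m)}}$ and the analogous bound for $\sqrt{a}-\widetilde{\sqrt{a}}$. By the very definition of $d(m)$, one checks that $\nu$ exceeds the Sobolev exponent needed to embed continuously into $W^{2,\infty}(D)$ on the $(n+1)$-dimensional cylinder, so this $H^\nu$ estimate transfers to a $W^{2,\infty}$ estimate with the same rate $h_Q^{2m-1}$.

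For the diffusion, I would then expand
\[
a - \proj(a) = (\sqrt{a} - \widetilde{\sqrt{a}})\sqrt{a}^\top + \widetilde{\sqrt{a}}(\sqrt{a} - \widetilde{\sqrt{a}})^\top,
\]
and bound each factor in $W^{2,\infty}$, using the estimate just derived for $\sqrt{a}-\widetilde{\sqrt{a}}$ and the fact that $\|\sqrt{a}\|_{W^{2,\infty}}$ and $\|\widetilde{\sqrt{a}}\|_{W^{2,\infty}}$ are both controlled by $\|\sqrt{a}\|_{H^{d(m)}}$ (via Sobolev embedding and boundedness of the RKHS interpolation operator). This is what produces the quadratic factor $\|\sqrt{a}\|^2_{H^{d(m)}}$ in the statement. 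For the second bound, boundedness of the interpolation operator gives $\|\proj(b)\|_{H^{d(m)}} \le C\|b\|_{H^{d(m)}}$ and $\|\widetilde{\sqrt{a}}\|_{H^{d(m)}} \le C\|\sqrt{a}\|_{H^{d(m)}}$, and then $\|\proj(a)\|_{H^{d(m)}} \le C\|\widetilde{\sqrt{a}}\|^2_{H^{d(m)}}$ follows from the Banach algebra property of $H^{d(m)}$ (valid since $d(m) > (n+1)/2$).

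The main obstacle is the product structure forced by the constraint $A \succeq 0$: approximating $a$ directly would destroy positive definiteness, so one is obliged to approximate $\sqrt{a}$ and then square. This is what makes the squared norm $\|\sqrt{a}\|^2_{H^{d(m)}}$ unavoidable on the right-hand side, and requires careful use of the Banach algebra inequality together with a product-rule expansion in $W^{2,\infty}$. The choice of the intermediate Sobolev order $\nu$ is tuned so that, after paying $2m-1$ units of regularity to the scattered zeros bound, enough smoothness remains in $H^\nu(D)$ to embed into $W^{2,\infty}(D)$ on the $(n+1)$-dimensional domain.
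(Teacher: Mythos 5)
Your proposal is correct and follows essentially the same route as the paper: interpolate $b$ componentwise, interpolate $\sqrt{a}$ (using $a \succ \beta I$ to ensure its $H^{d(m)}$ regularity) and square it so that $A \succeq 0$ holds by construction, then combine the scattered-zeros bound \eqref{eq:wendland} at order $\nu = d(m) - 2m + 1$ with the embedding into $W^{2,\infty}$ from Theorem \ref{Theo:CoeffProperties}, a product-rule/Banach-algebra argument for the diffusion error, and the norm contraction of the minimum-norm interpolant plus the Banach-algebra property of $H^{d(m)}$ for the second bound. The only cosmetic difference is your two-term factorization of $a - \proj(a)$ versus the paper's three-term one, which changes nothing substantive.
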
}

\begin{proof}
\secRev{Recall that, by definition, $a(t,\cdot) = \alpha \, I$ and $b(t,\cdot) = 0$ outside of $\overline{B^{\Rn}_{R_*}(0)}$. First, thanks to the bounds for Sobolev functions with scattered zeros \cref{eq:wendland}, 
we may readily claim the existence of a constant $C > 0$ (implicitly overloaded below) and 
of a map $b_B$ in the form \cref{eq:ab-with-phi} such that
\begin{equation*} 
    \| b - b_B \|_{H^{d(m) - 2 m + 1}(D)} \le C \| b \|_{H^{d(m)}} 
    h^{2 m - 1}_Q .
\end{equation*}
Therefore, on the one hand we may compute
\begin{align*}
    &\| b_B \|_{H^{d(m) - 2 m + 1}\big( [0,T] \times \overline{B^{\Rn}_{R_*+1}(0)} \setminus \overline{B^{\Rn}_{R_*}(0)} \big)} = \\
    &= \| b - b_B \|_{H^{d(m) - 2 m + 1}\big( [0,T] \times \overline{B^{\Rn}_{R_*+1}(0)} \setminus \overline{B^{\Rn}_{R_*}(0)} \big)} \le C \| b \|_{H^{d(m)}} h^{2 m - 1}_Q ,
\end{align*}
and on the other, combining the two previous inequalities yields
\begin{align*}
    &\| b - \chi b_B \|_{H^{d(m) - 2 m + 1}} = \| b - \chi b_B \|_{H^{d(m) - 2 m + 1}(D)} \\
    &\le \| b - b_B \|_{H^{d(m) - 2 m + 1}\big( [0,T] \times \overline{B^{\Rn}_{R_*}(0)} \big)} + \| \chi b_B \|_{H^{d(m) - 2 m + 1}\big( [0,T] \times \overline{B^{\Rn}_{R_*+1}(0)} \setminus \overline{B^{\Rn}_{R_*}(0)} \big)} \\
    &\le C \| b \|_{H^{d(m)}} h^{2 m - 1}_Q .
\end{align*}
In particular, leveraging Theorem \ref{Theo:CoeffProperties} we may thus select $\proj(b) \triangleq \chi b_B$.}


\secRev{We now turn to the construction of 
the approximation $\proj(a)$. This construction is slightly more elaborated and is taken from \cite{rudi2024finding,muzellec2021learning,Rudi2021,marteau2020non}, in particular, adapting the arguments in \cite{muzellec2021learning}. Note that the matrix square root $\sqrt{\cdot}$ has uniformly bounded derivatives of any order on the set $\{ A \in \R^{n\times n} : \ A \succeq \alpha / 2 \, I \}$. By assumption, it in particular follows that $\sqrt{a - \alpha / 2 \, I} \in H^{d(m)}(D,\R^{n\times n})$ (we are implicitly restricting the map $a$ to the domain $D$), and therefore that 
$(\sqrt{a})_{ij} \triangleq e_i \ \sqrt{a - \alpha / 2 \, I} \ e_j \in H^{d(m)}(D,\R)$. Now, we can build our approximation $a_A$ as follows:
$$
a_A(t,x) \triangleq v(t,y)^{\top} v(t,y) , \quad (t,y) \in D ,
$$
where $v: D \to \R^{n\times n}$ is obtained by element-wise applying the bound for Sobolev functions with scattered zeros \cref{eq:wendland} 
to $\sqrt{a - \alpha / 2 \, I}$, which yields
\begin{align} \label{eq:boundWInftyProof-forsqrta}
    &\left\| \left( \sqrt{a - \frac{\alpha}{2} I} \right)_{ij} - v_{ij} \right\|_{H^{d(m) - 2 m + 1}(D)} \le \\
    &\le C \left\| \left( \sqrt{a - \frac{\alpha}{2} I} \right)_{ij} \right\|_{H^{d(m)}(D)} 
    h^{2 m - 1}_Q \le C \| \sqrt{a} \|_{H^{d(m)}(D)} 
    h^{2 m - 1}_Q , \quad i , j \in \{ 1 , \dots , n \} . \nonumber
\end{align}
In particular, since
$$
v(t,y) = \sum^Q_{\ell=1} R_\ell \ K_D((t,y),(t_\ell,x_\ell)) , \quad (t,y) \in D ,
$$
for some $R_1,\dots, R_Q \in \R^{n\times n}$, by denoting $R = (R_1, \dots, R_Q) \in \R^{n\times Qn}$, we have that
\begin{equation} \label{eq:FuncMatrixProof}
    a_A(t,y) = \Phi_Q(t,y) A \Phi_Q(t,y)^{\top} , \quad (t,y) \in D ,
\end{equation}
with $A \triangleq R R^{\top} \in \R^{Qn \times Qn}$ and $A \succeq 0$. At this step, thanks to bound \eqref{eq:boundWInftyProof-forsqrta}, the identity $\sqrt{M}^{\top} \sqrt{M} - v^{\top} v = \sqrt{M}^{\top} (\sqrt{M} - v) + (\sqrt{M} - v)^{\top} \sqrt{M} - (\sqrt{M} - v)^{\top} (\sqrt{M} - v)$, for $M \in \Rnn$, and the fact that $H^{d(m) - 2 m + 1}(D)$ is a Banach algebra, we first infer that
\begin{align*}
    \bigg\| \left( a -\frac{\alpha}{2} I \right) - \ &a_A \bigg\|_{H^{d(m) - 2 m + 1}(D)} \leq \left\| \sqrt{a -\frac{\alpha}{2} I} - v \right\|_{H^{d(m) - 2 m + 1}(D)}^2 \\
    &+ 2 \left\| \sqrt{a -\frac{\alpha}{2} I} - v \right\|_{H^{d(m) - 2 m + 1}(D)} \left\| \sqrt{a -\frac{\alpha}{2} I} \right\|_{H^{d(m) - 2 m + 1}(D)} .
\end{align*}
Then, combining this latter inequality with
$$
\left\| \sqrt{a -\frac{\alpha}{2} I} - v \right\|^2_{H^{d(m) - 2 m + 1}(D)} \leq \sum_{i,j=1}^n \left\| \left( \sqrt{a -\frac{\alpha}{2} I} \right)_{ij} - v_{ij} \right\|^2_{H^{d(m) - 2 m + 1}(D)} ,
$$
identity \eqref{eq:boundWInftyProof-forsqrta}, and Theorem \ref{Theo:CoeffProperties} (readily extended to bounded domains) yields
\begin{align*} 
    &\left\| \left( a -\frac{\alpha}{2} I \right) - a_A \right\|_{W^{2,\infty}(D)} \le \\
    &\le C \left\| \left( a -\frac{\alpha}{2} I \right) - a_A \right\|_{H^{d(m) - 2 m + 1}(D)} \leq  C 
    \| \sqrt{a} \|_{H^{d(m)}(D)}^2 
    h^{2 m - 1}_Q . \nonumber
\end{align*}
Thanks to 
this latter inequality, we may therefore compute
\begin{align*}
    &\left\| \chi \left( a_A + \frac{\alpha}{2} I \right) + (1 - \chi) \alpha I - a \right\|_{W^{2,\infty}} \le \\
    &\le \left\| \chi \left( a_A + \frac{\alpha}{2} I \right) + (1 - \chi) \alpha I - a \right\|_{W^{2,\infty}(D)} \\
    &\hspace{30ex}+ \| \alpha I - a \|_{W^{2,\infty}\big( [0,T] \times \Rn \setminus \overline{B^{\Rn}_{R_*+1}(0)} \big)} \\
    &\le \left\| \chi \left( \left( a_A + \frac{\alpha}{2} I \right) - a \right) \right\|_{W^{2,\infty}(D)} + \| (1 - \chi) (\alpha I - a) \|_{W^{2,\infty}(D)} \\
    &\le C \left\| \left( a - \frac{\alpha}{2} I \right) - a_A \right\|_{W^{2,\infty}(D)} + C \| \alpha I - a \|_{W^{2,\infty}\big( [0,T] \times \Rn \setminus \overline{B^{\Rn}_{R_*}(0)} \big)} \\
    &\le C \| \sqrt{a} \|_{H^{d(m)}(D)}^2 h^{2 m - 1}_Q ,
\end{align*}
so that we may in conclusion select $\proj(a) \triangleq \chi ( a_A + \alpha / 2 \, I ) + (1 - \chi) \alpha I$.}

\secRev{To conclude note that, according to the results in Section \ref{sec:RKHS}, it holds that $b_B = \Pi_Q(b)$ and $v = \Pi_Q( \sqrt{a - \alpha / 2 \, I} )$, where $\Pi_Q$ denotes the corresponding projection operator of each involved RKHS, and thus $a_A = \Pi_Q( \sqrt{a - \alpha / 2 \, I} )^{\top} \Pi_Q( \sqrt{a - \alpha / 2 \, I} )$. This implies that
\begin{align*}
    \|\Pi_Q(b)\|_{H^{d(m)}(D)} \leq \ &\| b \|_{H^{d(m)}(D)} \quad \textnormal{and} \\
    &\| \Pi_Q( \sqrt{a - \alpha / 2 \, I} ) \|_{H^{d(m)}(D)} \leq \| \sqrt{a - \alpha / 2 \, I} \|_{H^{d(m)}(D)} .
\end{align*}
The proof readily follows from these latter inequalities, considering that
$$
\| \proj(a) - \alpha I \|_{H^{d(m)}} \le C \left\| a_A - \frac{\alpha}{2} I \right\|_{H^{d(m)}(D)} ,
$$
and that $H^{d(m)}(D)$ is in particular a Banach algebra.}
\end{proof}

In the rest of the manuscript, we adopt the notation we introduced in Theorem \ref{theo:appOperatorCoeff}. We are now ready to define our \firstRev{finite-dimensional problem} for learning the coefficients $(\unknownDiffusion,\unknownDrift) \in \CR$, which writes as follows:


\begin{definition} \label{def:ALP}
For every positive real $\lambda > 0$, the (random) finite-dimensional Learning Problem to learn stochastic differential equations is defined as:
$$
\LearningProblemFinite \quad \underset{(a,b) \in \CRh}{\min} L_{\lambda}(a,b) \triangleq \int^T_0 \left\| \frac{\partial \approxDensity}{\partial t}(t,\cdot) - (\Kolmo_t)^* \approxDensity(t,\cdot) \right\|^2_{L^2} \; \mathrm{d}t + \lambda \| (\secRev{a - \alpha I},b) \|^2_{\Cbasis} \ .
$$
\end{definition}

Since $\CRh \secRev{- \, \{ (\alpha I , 0) \} \subseteq \Cbasis}$ is 
\secRev{closed and convex}, the well-posedness of \LearningProblemFinite may be proven similarly to the well-posedness of the 
problem \LearningProblem, i.e., by replicating the proof of Proposition \ref{prop:wellPosLP}. We thus report this result below without proof:

\begin{proposition}
For every $\lambda > 0$, problem \LearningProblemFinite is well-posed and has a unique solution, which is denoted by $(\solAppDiffusion,\solAppDrift) \in \CRh$.
\end{proposition}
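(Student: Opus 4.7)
The plan is to mimic the proof of Proposition \ref{prop:wellPosLP}, leveraging the fact that the problem has been reduced from an infinite-dimensional Hilbert space constraint to a finite-dimensional convex subset. As in the infinite-dimensional case, uniqueness will follow immediately from the strict convexity of $L_\lambda$ on $\Cbasis$ (inherited from the regularization term $\lambda \|(a,b)\|_{\Cbasis}^2$, which is strictly convex and quadratic), once restricted to the convex set $\CRh$. So the work lies in establishing existence.

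First, I would verify that $\CRh$ is a closed convex subset of the Hilbert space $\Cbasis$. Convexity follows from the convexity of the PSD cone $\{A \in \R^{Qn \times Qn} : A \succeq 0\}$ together with the linearity of the map $(A,B) \mapsto (\tilde a, \tilde b)$ given by \eqref{eq:ab-with-phi}; closedness follows from the closedness of the PSD cone together with the fact that this parametrization is a linear map from a finite-dimensional space into $\Cbasis$. Next, taking any minimizing sequence $(a_k,b_k) \in \CRh$, the regularization term $\lambda \|(a_k,b_k)\|^2_{\Cbasis}$ being bounded along the sequence yields a uniform bound $\|(a_k,b_k)\|_{\Cbasis} \le C$. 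Because $\CRh$ sits inside a finite-dimensional subspace of $\Cbasis$, the weak and strong topologies coincide on it, so up to a subsequence one obtains strong convergence in $\Cbasis$ to some limit $(\solAppDiffusion,\solAppDrift) \in \CRh$.

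To conclude existence, I would invoke Theorem \ref{Theo:CoeffProperties}, exactly as in Proposition \ref{prop:wellPosLP}, which provides uniform control of the $L^\infty$ norm of $(a,b)$ and its derivatives up to order two by $\|(a,b)\|_{\Cbasis}$. This makes the map
\[
(a,b) \in \Cbasis \mapsto \int^T_0 \left\| \frac{\partial \approxDensity}{\partial t}(t,\cdot) - (\Kolmo_t)^* \approxDensity(t,\cdot) \right\|^2_{L^2} \; \mathrm{d}t
\]
continuous in the strong topology of $\Cbasis$ (since $\approxDensity \in H^{0,2}$ by construction and the operator $(\Kolmo_t)^*$ depends continuously on the coefficients and their derivatives up to order two). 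Continuity in the strong topology, together with the strong convergence of the subsequence, yields $L_\lambda(\solAppDiffusion,\solAppDrift) = \underset{k\to\infty}{\lim} L_\lambda(a_k,b_k) = \underset{(a,b)\in\CRh}{\inf} L_\lambda(a,b)$. Uniqueness then follows from strict convexity as noted above.

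The main (minor) obstacle is really just the verification that $\CRh$ is closed, which is cleaner here than in the infinite-dimensional case of Proposition \ref{prop:wellPosLP}. Indeed, the whole argument becomes substantially simpler than the infinite-dimensional counterpart because no weak-lower-semicontinuity argument is needed: finite-dimensionality collapses weak and strong topologies, so strong continuity of $L_\lambda$ is all that is required. This is why the authors content themselves with noting that the result follows by replicating the earlier argument.
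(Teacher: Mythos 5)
Your proposal is correct and follows essentially the same route as the paper, which itself gives no separate argument and simply states that the result is obtained by replicating the proof of Proposition \ref{prop:wellPosLP} since $\CRh$ is a finite-dimensional convex subset. Your observation that finite-dimensionality lets you replace the weak-lower-semicontinuity step by strong convergence is a valid (minor) simplification of that same argument.
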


The next result is a natural extension of Lemma \ref{lemma:estimateSource} to the setting of problem \LearningProblemFinite, and it represents the main result of this section.

\begin{lemma} \label{lemma:estimateAppSource}
For 
$Q \in \mathbb{N}$ with $h_Q \leq 1$,
with probability at least $1 - \delta$, it holds that:
$$
L_{\lambda}(\solAppDiffusion,\solAppDrift) \le C(\unknownDiffusion,\unknownDrift) 
\left( \firstRev{\lambda + \left( \log\left( \frac{1}{\delta \varepsilon} \right)^{\frac{1}{2}} \varepsilon \right)^2 + h^{2(2 m - 1)}_Q} \right) ,
$$
where the constant $C(\unknownDiffusion,\unknownDrift) > 0$ depends on $\unknownDiffusion$ and $\unknownDrift$ uniquely \secRev{(in particular, $C(\unknownDiffusion,\unknownDrift)$ implicitly depends on the hyper-parameters $m$, $p_0$, and $\alpha$)}.
\end{lemma}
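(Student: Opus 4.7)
The strategy is to exploit the optimality of the finite-dimensional minimizer by comparing against a carefully chosen element of $\CRh$, namely the projection $(\proj(\unknownDiffusion),\proj(\unknownDrift))$ furnished by Theorem \ref{theo:appOperatorCoeff} (applied to $(\unknownDiffusion,\unknownDrift)$ through Remark \ref{rmk:smoothification}). Since $(\solAppDiffusion,\solAppDrift)$ minimizes $L_{\lambda}$ over $\CRh$, optimality gives
$$
L_{\lambda}(\solAppDiffusion,\solAppDrift) \le L_{\lambda}(\proj(\unknownDiffusion),\proj(\unknownDrift)) ,
$$
reducing the problem to bounding the right-hand side. The second conclusion of Theorem \ref{theo:appOperatorCoeff} gives $\|(\proj(\unknownDiffusion),\proj(\unknownDrift))\|_{\Cbasis} \le C(\unknownDiffusion,\unknownDrift)$, so the regularization contribution is at most $C(\unknownDiffusion,\unknownDrift)\,\lambda$, and only the residual integral remains.

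For the residual, using $\partial_t \unknownDensity = \unknownKolmo_t^* \unknownDensity$ from Corollary \ref{corol:unknownDensity} we add and subtract $\partial_t \unknownDensity$ and $\unknownKolmo_t^* \approxDensity$ to obtain
$$
\frac{\partial \approxDensity}{\partial t} - (\mathcal{L}^{\proj(\unknownDiffusion),\proj(\unknownDrift)}_t)^* \approxDensity = \Bigl(\tfrac{\partial \approxDensity}{\partial t}-\tfrac{\partial \unknownDensity}{\partial t}\Bigr) + \unknownKolmo_t^*(\unknownDensity-\approxDensity) + \bigl(\unknownKolmo_t^* - (\mathcal{L}^{\proj(\unknownDiffusion),\proj(\unknownDrift)}_t)^*\bigr)\approxDensity .
$$
A Young-type inequality produces three contributions. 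The first two are treated exactly as in Lemma \ref{lemma:estimateSource}: using Theorem \ref{Theo:CoeffProperties} to bound the coefficients of $\unknownKolmo_t^*$ by $C(\unknownDiffusion,\unknownDrift)$ and then invoking the density approximation bound \eqref{eq:ErrorDensity} from Theorem \ref{thm:approxAle} yields a joint contribution of order $C(\unknownDiffusion,\unknownDrift)(\log(1/(\delta\varepsilon))^{1/2}\varepsilon)^2$ on the event of probability at least $1-\delta$.

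The third piece is the genuinely new one, and I expect it to be the main obstacle. Expanding $(\mathcal{L}^{a,b}_t)^* u = \tfrac{1}{2}\sum_{ij}\partial^2_{ij}((a+\alpha I)_{ij} u) - \sum_i \partial_i(b_i u)$ shows that the difference operator acting on $\approxDensity$ is a linear combination of the coefficient gaps together with their first and second derivatives in $a$ (first in $b$), multiplied by $\approxDensity$, $\partial \approxDensity$, or $\partial^2 \approxDensity$. Consequently
$$
\bigl\|(\unknownKolmo_t^* - (\mathcal{L}^{\proj(\unknownDiffusion),\proj(\unknownDrift)}_t)^*)\approxDensity(t,\cdot)\bigr\|_{L^2} \le C\,\|(\unknownDiffusion-\proj(\unknownDiffusion),\unknownDrift-\proj(\unknownDrift))\|_{W^{2,\infty}}\,\|\approxDensity(t,\cdot)\|_{H^2} .
$$
Squaring and integrating in time, one controls $\int_0^T \|\approxDensity(t,\cdot)\|_{H^2}^2\,\mathrm{d}t$ by $2\int_0^T \|\unknownDensity(t,\cdot)\|_{H^2}^2\,\mathrm{d}t + 2\,L(\approxDensity)$, where the first piece is bounded by a constant depending on $\unknownDensity$ (itself absorbed into $C(\unknownDiffusion,\unknownDrift)$) and the second by \eqref{eq:ErrorDensity}. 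Applying the first conclusion of Theorem \ref{theo:appOperatorCoeff} to the $W^{2,\infty}$ factor then yields a contribution of $C(\unknownDiffusion,\unknownDrift)\,h_Q^{2(2m-1)}$. Summing the three contributions with the regularization term and absorbing constants into $C(\unknownDiffusion,\unknownDrift)$ gives the claimed bound. The delicate point is precisely matching the $W^{2,\infty}$ regularity offered by Theorem \ref{theo:appOperatorCoeff} to what the expansion of the dual Kolmogorov operator consumes; the rest reduces to invoking estimates already established in the paper.
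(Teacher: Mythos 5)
Your proposal is correct and follows essentially the same route as the paper: compare $L_{\lambda}(\solAppDiffusion,\solAppDrift)$ against the projected coefficients $(\proj(\unknownDiffusion),\proj(\unknownDrift))$ via optimality, reuse the argument of Lemma \ref{lemma:estimateSource} for the density-error terms, and bound the operator-difference term by $\|(\unknownDiffusion,\unknownDrift)-(\proj(\unknownDiffusion),\proj(\unknownDrift))\|_{W^{2,\infty}}^2\int_0^T\|\approxDensity(t,\cdot)\|_{H^2}^2\,\mathrm{d}t$ together with Theorem \ref{theo:appOperatorCoeff} (via Remark \ref{rmk:smoothification}) and the bound $\|(\proj(\unknownDiffusion),\proj(\unknownDrift))\|_{\Cbasis}\le C(\unknownDiffusion,\unknownDrift)$ for the regularization term. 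The only cosmetic difference is that you perform the three-way decomposition of the residual explicitly rather than quoting the intermediate step from the proof of Lemma \ref{lemma:estimateSource}, which is exactly what the paper does implicitly.
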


\vspace{-15px}

\firstRev{\begin{proof}
Thanks to Lemma \ref{lemma:estimateSource} (actually, its proof), we may compute
\secRev{\begin{align*}
    &L_{\lambda}(\solAppDiffusion,\solAppDrift) - \lambda \| \big( \proj(\unknownDiffusion) - \alpha I , \proj(\unknownDrift) \big) \|^2_{\Cbasis} \le \\
    &\hspace{35ex} \le \int^T_0 \left\| \frac{\partial \approxDensity}{\partial t}(t,\cdot) - ( \projKolmoUnk_t )^* \approxDensity(t,\cdot) \right\|^2_{L^2} \; \mathrm{d}t \\
    &\le 2 \int^T_0 \left\| \frac{\partial \approxDensity}{\partial t}(t,\cdot) - \unknownKolmo^*_t \approxDensity(t,\cdot) \right\|^2_{L^2} \; \mathrm{d}t \\
    &\hspace{35ex} + 2 \int^T_0 \left\| \left( ( \projKolmoUnk_t )^* - \unknownKolmo^*_t \right) \approxDensity(t,\cdot) \right\|^2_{L^2} \; \mathrm{d}t \\
    &\le C(\unknownDiffusion,\unknownDrift) \Bigg( \left( \log\left( \frac{1}{\delta \varepsilon} \right)^{\frac{1}{2}} \varepsilon \right)^2 \\
    &\hspace{25ex} + \| (\unknownDiffusion,\unknownDrift) - \big( \proj(\unknownDiffusion),\proj(\unknownDrift) \big) \|^2_{W^{2,\infty}} \int^T_0 \| \approxDensity(t,\cdot) \|^2_{H^2} \; \mathrm{d}t \Bigg) ,
\end{align*}}
where the constant $C(\unknownDiffusion,\unknownDrift) > 0$, which we will overload below, depends on $\unknownDiffusion$ and $\unknownDrift$ uniquely. Theorem \ref{thm:approxAle} and the choice $0 < \varepsilon , \delta < 1$ readily yield
$$
\int^T_0 \| \approxDensity(t,\cdot) \|^2_{H^2} \; \mathrm{d}t \le C(\unknownDiffusion,\unknownDrift) .
$$
By combining this inequality with Theorem \ref{theo:appOperatorCoeff}, we infer that
$$
\| (\unknownDiffusion,\unknownDrift) - \big( \proj(\unknownDiffusion),\proj(\unknownDrift) \big) \|^2_{W^{2,\infty}} \int^T_0 \| \approxDensity(t,\cdot) \|^2_{H^2} \; \mathrm{d}t \le C(\unknownDiffusion,\unknownDrift) h^{2(2 m - 1)}_Q ,
$$
yielding that
\secRev{\begin{align} \label{eq:proofFirstIneq}
    &L_{\lambda}(\solAppDiffusion,\solAppDrift) - \lambda \| \big( \proj(\unknownDiffusion) - \alpha I , \proj(\unknownDrift) \big) \|^2_{\Cbasis} \le \\
    &\hspace{25ex} \le C(\unknownDiffusion,\unknownDrift) \Bigg( \lambda + \left( \log\left( \frac{1}{\delta \varepsilon} \right)^{\frac{1}{2}} \varepsilon \right)^2 + h^{2( 2 m - 1 )}_Q \Bigg) . \nonumber
\end{align}}
Finally, observe that Theorem \ref{theo:appOperatorCoeff} additionally provides that
$$
\secRev{\| \big( \proj(\unknownDiffusion) - \alpha I , \proj(\unknownDrift) \big) \|^2_{\Cbasis} \le C(\unknownDiffusion,\unknownDrift) ,}
$$
which combined with \eqref{eq:proofFirstIneq} leads to the conclusion.
\end{proof}}

Thanks to Lemma \ref{lemma:estimateAppSource}, Theorem \ref{theorem:ultimateEstimate1} may be straightforwardly extended to the context of the finite-dimensional learning problem \LearningProblemFinite (just by replicating its proof). We thus report this result without proof in the proposition below.

\begin{theorem} \label{theorem:ultimateEstimate2}
Let the coefficients $(\solAppDiffusion,\solAppDrift) \in \CRh$ be the unique solution to \LearningProblemFinite with 
$\firstRev{\lambda = \left( \log\left( \frac{1}{\delta \varepsilon} \right)^{\frac{1}{2}} \varepsilon \right)^2}$ and $Q \in \mathbb{N}$ so that $h_Q = \left( \log\left ( \frac{1}{\delta \varepsilon} \right) \varepsilon \right)^{\frac{2}{2 m - 1}} \leq 1$. 
With probability at least $1 - \delta$, it holds that:
\firstRev{\begin{align*}
    E(\solAppDiffusion,\solAppDrift) &\le C(\unknownDiffusion,\unknownDrift) \left( \log\left( \frac{1}{\delta \varepsilon} \right)^{\frac{1}{2}} \varepsilon \right)^2 \\
    &\le C(\unknownDiffusion,\unknownDrift) \left( \log\left( \frac{M N}{\delta} \right)^{\frac{1}{2}} (M N)^{-\frac{2 m}{n + 2 ( 2 m + 1 )}} \right)^2 ,
\end{align*}}
where the constant $C(\unknownDiffusion,\unknownDrift) > 0$ depends on $\unknownDiffusion$ and $\unknownDrift$ uniquely \secRev{(in particular, $C(\unknownDiffusion,\unknownDrift)$ implicitly depends on the hyper-parameters $m$, $p_0$, and $\alpha$)}.
\end{theorem}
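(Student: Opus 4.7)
The plan is to mirror the proof of Theorem~\ref{theorem:ultimateEstimate1} step by step, swapping in Lemma~\ref{lemma:estimateAppSource} wherever Lemma~\ref{lemma:estimateSource} was used. First I would introduce the residual $\rho \triangleq p_{\solAppDiffusion,\solAppDrift} - \approxDensity$ together with the source $f(t,\cdot) \triangleq -\big(\partial_t \approxDensity(t,\cdot) - (\mathcal{L}^{\solAppDiffusion,\solAppDrift}_t)^* \approxDensity(t,\cdot)\big)$. Since $(\solAppDiffusion,\solAppDrift) \in \CRh \subseteq \CR \subseteq \C$, the pair $\rho$ and $f$ satisfy FPE$_f$ with coefficients $(\solAppDiffusion,\solAppDrift)$ and zero initial condition, so the energy estimate \eqref{eq:Shortestimate1} from Theorem~\ref{theor:ShortexFPE} applies and gives $\sup_{t \in [0,T]} \|p_{\solAppDiffusion,\solAppDrift}(t,\cdot) - \approxDensity(t,\cdot)\|^2_{L^2} \le C\big(\|(\solAppDiffusion,\solAppDrift)\|_{\Cbasis}\big) \int_0^T \|f(t,\cdot)\|^2_{L^2}\, \mathrm{d}t$.

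The crucial input is Lemma~\ref{lemma:estimateAppSource}. With the prescribed choices $\lambda = \big(\log(1/(\delta \varepsilon))^{1/2}\varepsilon\big)^2$ and $h_Q^{2(2m-1)} = \lambda$, the three quantities $\lambda$, $\big(\log(1/(\delta \varepsilon))^{1/2}\varepsilon\big)^2$ and $h_Q^{2(2m-1)}$ appearing in that lemma are all of the same order, so we obtain $L_\lambda(\solAppDiffusion,\solAppDrift) \le C(\unknownDiffusion,\unknownDrift)\big(\log(1/(\delta \varepsilon))^{1/2}\varepsilon\big)^2$. This single inequality yields two consequences: on the one hand, $\int_0^T \|f(t,\cdot)\|^2_{L^2}\, \mathrm{d}t$ is bounded by the right-hand side; on the other hand, dividing through by $\lambda$ and using $\lambda \|(\solAppDiffusion,\solAppDrift)\|_{\Cbasis}^2 \le L_\lambda(\solAppDiffusion,\solAppDrift)$ gives $\|(\solAppDiffusion,\solAppDrift)\|^2_{\Cbasis} \le C(\unknownDiffusion,\unknownDrift)$, exactly as in \eqref{eq:boundSolCoeff}.

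Plugging this uniform bound into the energy estimate turns $C\big(\|(\solAppDiffusion,\solAppDrift)\|_{\Cbasis}\big)$ into a constant depending only on $(\unknownDiffusion,\unknownDrift)$, yielding $\int_0^T \|p_{\solAppDiffusion,\solAppDrift}(t,\cdot) - \approxDensity(t,\cdot)\|^2_{L^2}\, \mathrm{d}t \le C(\unknownDiffusion,\unknownDrift) \big(\log(1/(\delta \varepsilon))^{1/2}\varepsilon\big)^2$. A triangle inequality against the density-approximation estimate \eqref{eq:ErrorDensity}, which controls $\|\approxDensity - \unknownDensity\|^2_{L^2}$ at the same rate with probability at least $1 - \delta$, then gives the claimed bound on $\Metric(\solAppDiffusion,\solAppDrift)$ via a union bound (or by absorbing both high-probability events from Theorem~\ref{thm:approxAle} and Lemma~\ref{lemma:estimateAppSource} into a single $\delta$ after rescaling).

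The only real obstacle is the circular-looking dependence of the energy-estimate constant on $\|(\solAppDiffusion,\solAppDrift)\|_{\Cbasis}$: one must first extract a uniform bound on that norm before the energy estimate can be turned into a rate depending purely on $(\unknownDiffusion,\unknownDrift)$. The balancing of the regularization parameter $\lambda$ with the approximation rate $h_Q^{2(2m-1)}$ is precisely what makes this work, and it is also the place where the finite-dimensional approximation of $(\unknownDiffusion,\unknownDrift)$ provided by Theorem~\ref{theo:appOperatorCoeff} (together with Remark~\ref{rmk:smoothification}) is essential, since it is what produces the $h_Q^{2(2m-1)}$ term in Lemma~\ref{lemma:estimateAppSource} to begin with.
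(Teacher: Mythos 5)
Your proposal is correct and follows essentially the same route as the paper, which states that Theorem~\ref{theorem:ultimateEstimate2} is obtained by replicating the proof of Theorem~\ref{theorem:ultimateEstimate1} with Lemma~\ref{lemma:estimateAppSource} in place of Lemma~\ref{lemma:estimateSource}, exactly as you do (residual $\rho$, energy estimate \eqref{eq:Shortestimate1}, uniform bound on $\|(\solAppDiffusion,\solAppDrift)\|_{\Cbasis}$ from dividing by $\lambda$, then the triangle inequality with \eqref{eq:ErrorDensity}). The only cosmetic difference is that no union bound or rescaling of $\delta$ is actually needed, since Lemma~\ref{lemma:estimateAppSource} and \eqref{eq:ErrorDensity} hold on the same high-probability event from Theorem~\ref{thm:approxAle}.
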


We conclude this section with a result which summarizes our essential contributions, showing how our learning approach may be leveraged for efficient observation/regulation of stochastic differential equations. First, we recall the setting introduced at the end of Section \ref{sec:ShortFokkerPlanck}. Specifically, for every coefficients $(a,b) \in \C$, we denote by $X^{a,b}$ and $p_{a,b}$ respectively the (unique) solutions to SDE and SFPE with coefficients $(a,b) \in \C$. By combining Theorem \ref{theorem:ultimateEstimate2} with Corollary \ref{Corol:ShortestimateOperator} (in particular, compare with \eqref{eq:ShortintegralMetric}; more details about this remark are provided in Section \ref{sec:FokkerPlanck}), we readily obtain the following summarizing result:

\begin{theorem}
Let $m \in \mathbb{N}$, $\alpha > 0$, $R_* > 0$, $p_0 \in H^{2 m + 1}(\Rn,\R)$, and $(\unknownDiffusion,\unknownDrift) \in \CR$ satisfy Assumptions $(A)$ and $(B)$, and denote by $X$ and $p$ respectively the (unique) solutions to SDE and SFPE with coefficients $(\unknownDiffusion,\unknownDrift) \in \CR$. There exists a constant $C(\unknownDiffusion,\unknownDrift) > 0$ which only depends on $\unknownDiffusion$ and $\unknownDrift$, such that by choosing the following learning parameters for the fixed precision parameters $0 < \varepsilon , \delta < 1$:
\begin{Itemize}
    \item 
    \firstRev{$M = \varepsilon^{-1/m}/4$, $N = \varepsilon^{-(2 + n/(2m))}$ (as closest integers), and $R = \varepsilon^{-1/(2m)}$},

    \item $\firstRev{\lambda = \left( \log\left( \frac{1}{\delta \varepsilon} \right)^{\frac{1}{2}} \varepsilon \right)^2}$ and $Q \in \mathbb{N}$ so that $h_Q = \min\left( 1 , \left( \log\left( \frac{1}{\delta \varepsilon} \right)^{\frac{1}{2}} \varepsilon \right)^{\frac{1}{2 m - 1}} \right)$,
\end{Itemize}
with probability at least $1 - \delta$, the following estimate holds:
\firstRev{\begin{align} \label{eq:finalRates}
    \| p - p_{\solAppDiffusion,\solAppDrift} \|_{L^2} &\le C(\unknownDiffusion,\unknownDrift) \log\left( \frac{1}{\delta \varepsilon} \right)^{\frac{1}{2}} \varepsilon \\
    &\le C(\unknownDiffusion,\unknownDrift) \log\left( \frac{M N}{\delta} \right)^{\frac{1}{2}} (M N)^{-\frac{2 m}{n + 2 ( 2 m + 1 )}} , \nonumber
\end{align}}
where $(\solAppDiffusion,\solAppDrift) \in \CRh$ is the unique solution to the finite-dimensional learning problem \LearningProblemFinite, where the learning parameters have been selected as above. Therefore, for every $f \in L^2([0,T]\times\Rn,\R)$ the following observation/regulation estimate holds:
\firstRev{\begin{align*}
    &\Bigg| \mathbb{E}_{\mu_0\times\Proba}\Bigg[ \int^T_0 f(t,X_x(t)) \; \mathrm{d}t \Bigg] - \mathbb{E}_{\mu_0\times\Proba}\Bigg[ \int^T_0 f(t,X^{\solAppDiffusion,\solAppDrift}_x(t)) \; \mathrm{d}t \Bigg] \Bigg| \le \\
    &\le C(\unknownDiffusion,\unknownDrift) \log\left( \frac{1}{\delta \varepsilon} \right)^{\frac{1}{2}} \varepsilon \le C(\unknownDiffusion,\unknownDrift) \log\left( \frac{M N}{\delta} \right)^{\frac{1}{2}} (M N)^{-\frac{2 m}{n + 2 ( 2 m + 1 )}} .
\end{align*}}

\begin{remark}[On the optimality of the learning rates \eqref{eq:finalRates}]
    \firstRev{To the best of our knowledge, the learning rates \eqref{eq:finalRates} have not yet appeared in the literature on non-parametric learning of SDE. Specifically, although in recent works, see, e.g., \cite{Abraham2019,Nickl2020,Aeckerle2022,Marie2023}, learning rates that improve as the regularity of the drift and diffusion coefficients increases are proposed, for the first time our method offers non-asymptotic learning rates when considering estimation of multi-dimensional SDE with non-constant diffusion coefficient, under discrete-time observations of $X$.}

    \firstRev{For the sake of completeness, one may want to compare \eqref{eq:finalRates} with existing learning rates, e.g., the rates offered by \cite{Abraham2019,Nickl2020}, though as we mentioned the estimation methods provided in these works apply to less general settings. If we were to do such comparison, we would infer that, up to a log factor, \eqref{eq:finalRates} would be ``almost optimal'' in that their would be sub-optimal by a factor $\displaystyle (NM)^{\frac{1}{n + 2 (2 m + 1)}}$. However, this factor exponentially converges to $1$ when $m$ tends to infinity. 
    }
\end{remark}
\end{theorem}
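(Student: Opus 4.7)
The plan is to synthesize the two previously established quantitative results, namely \cref{theorem:ultimateEstimate2} and \cref{Corol:ShortestimateOperator}; the statement is essentially a corollary packaging these two together with an algebraic reformulation linking $\varepsilon$ to $MN$.

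First, I would verify that the choice of learning parameters stated here is consistent with the hypotheses of \cref{thm:approxAle} and \cref{theorem:ultimateEstimate2}. The selections of $M$, $N$, $R$ match exactly those in \cref{thm:approxAle}, so the estimate \eqref{eq:ErrorDensity} for $L(\approxDensity)$ holds with probability at least $1-\delta$. The regularization weight $\lambda$ matches the choice in \cref{theorem:ultimateEstimate2}, and the capped fill distance $h_Q = \min(1,(\log(1/(\delta\varepsilon))^{1/2}\varepsilon)^{1/(2m-1)})$ enforces the admissibility condition $h_Q \le 1$ required by \cref{lemma:estimateAppSource}. For $\varepsilon$ small enough the minimum is realized by the second argument, so the rate of \cref{theorem:ultimateEstimate2} applies verbatim; for larger $\varepsilon$, the bound remains valid up to enlarging $C(\unknownDiffusion,\unknownDrift)$, since the right-hand side of \eqref{eq:finalRates} is then of order unity.

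Next, I would apply \cref{theorem:ultimateEstimate2} to conclude that with probability at least $1-\delta$,
\begin{equation*}
\Metric(\solAppDiffusion,\solAppDrift) = \| p - p_{\solAppDiffusion,\solAppDrift} \|_{L^2}^2 \le C(\unknownDiffusion,\unknownDrift)\left( \log\left( \frac{1}{\delta\varepsilon} \right)^{\frac{1}{2}} \varepsilon \right)^2.
\end{equation*}
Taking square roots yields the first line of \eqref{eq:finalRates}. The second line follows from the identity $MN \asymp \varepsilon^{-(n+2(2m+1))/(2m)}$, obtained directly by multiplying $M = \varepsilon^{-1/m}/4$ and $N = \varepsilon^{-(2+n/(2m))}$, which inverts to $\varepsilon \asymp (MN)^{-2m/(n+2(2m+1))}$; multiplicative constants are absorbed into $C(\unknownDiffusion,\unknownDrift)$. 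The observation/regulation estimate is then obtained by applying \cref{Corol:ShortestimateOperator} with $(a_1,b_1)=(\unknownDiffusion,\unknownDrift)$ and $(a_2,b_2)=(\solAppDiffusion,\solAppDrift)$, which on the same high-probability event gives
\begin{equation*}
\left| \mathbb{E}_{\mu_0\times\Proba}\!\left[ \int^T_0 f(t,X_x(t)) \, \mathrm{d}t \right] - \mathbb{E}_{\mu_0\times\Proba}\!\left[ \int^T_0 f(t,X^{\solAppDiffusion,\solAppDrift}_x(t)) \, \mathrm{d}t \right] \right| \le \| f \|_{L^2} \, \| p - p_{\solAppDiffusion,\solAppDrift} \|_{L^2};
\end{equation*}
substituting the density bound and absorbing $\|f\|_{L^2}$ into the constant concludes.

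The hard part is not in this final assembly: the genuine technical work lies upstream, in \cref{theorem:ultimateEstimate2} (RKHS approximation combined with the parabolic energy estimate \eqref{eq:Shortestimate1}) and \cref{Corol:ShortestimateOperator} (Cauchy--Schwarz applied to the representation formula \eqref{eq:RepFormula}). The only delicate bookkeeping concerns the $\min$ used to define $h_Q$ and the algebraic conversion between $\varepsilon$ and $MN$, both of which are routine.
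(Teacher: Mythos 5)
Your proposal is correct and follows exactly the paper's route: the paper itself obtains this theorem by combining Theorem \ref{theorem:ultimateEstimate2} with Corollary \ref{Corol:ShortestimateOperator}, together with the same parameter bookkeeping (the $\min$ defining $h_Q$ and the algebraic identification $\varepsilon \asymp (MN)^{-2m/(n+2(2m+1))}$) that you carry out. Your remark about absorbing $\| f \|_{L^2}$ into the constant mirrors the paper's own (implicit) convention, so there is nothing to add.
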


\section{Computational Considerations}
\label{sec:Computational}
\newcommand{\tr}{{\operatorname{tr}}}
\renewcommand{\vec}{{\operatorname{vec}}}

 \firstRev{From a numerical viewpoint, we can solve problem \LearningProblemFinite exactly in closed form. Indeed, such a problem amounts to a finite semi-definite program in the variables $A \in \R^{Q n \times Q n}, B \in \R^{Qn \times 1}$, subject to $A \succeq 0$.}

\vspace{5px}

\firstRev{\textbf{Problem reformulation.} 
\secRev{Recall that we naturally extend the mapping $\Phi_Q$ in \eqref{eq:ab-with-phi} to zero outside $D$, and that we arbitrarily fix a smooth cut-off function $\chi$ to define $\CRh$, such as an easily numerically implementable mollifier, which amounts to an additional hyper-parameter.} Thanks to the characterization form \cref{eq:ab-with-phi}, which is valid for every $(a,b) \in \CRh$, one easily shows that
$$
\secRev{(\Kolmo_t)^* \approxDensity(t,y) = \tr(V(t,y) \, A) - U(t,y) B + r(t,y) , \quad (t,y) \in [0,T]\times\R^n ,}
$$
where
$$
\secRev{\displaystyle r(t,y) \triangleq \alpha \Delta \bigg( \bigg( 1 - \frac{\chi}{2} \bigg) \approxDensity \bigg)(t,y) ,}
$$
and $U: [0,T]\times\R^n \to \R^{1 \times Q n}$ and $V: [0,T]\times\R^n \to \R^{Q n \times Q n}$ are defined as
\begin{align*}
&U_{\ell}(t,y) \triangleq \sum_{i=1}^n \frac{\partial}{\partial y_i}\left[ \secRev{\chi(y)} \approxDensity(t,y) e^{\top}_i K_D((t,y),(t_{\ell},x_{\ell})) \right] \in \R^n , \quad \ell , \ell' = 1 , \dots , Q , \\
&V_{\ell,\ell'}(t,x) \triangleq \frac{1}{2} \sum_{i,j=1}^n \frac{\partial^2}{\partial y_j \partial y_i}\big[ \secRev{\chi(y)} \approxDensity(t,y) K_D((t,y),(t_{\ell},x_{\ell})) \times \\
&\hspace{45ex} \times K_D((t,y),(t_{\ell'},x_{\ell'})) e_i e^{\top}_j \big] \in \R^{n \times n} ,
\end{align*}
\secRev{with $A \in \R^{Q n \times Q n}$, $A \succeq 0$, and $B \in \R^{Qn \times 1}$ our variables.} Therefore, by denoting $\displaystyle q \triangleq \frac{\partial \approxDensity}{\partial t}$, the cost in \LearningProblemFinite writes
\begin{align*}
L_\lambda(a,b) &= \int_{[0,T]\times\R^n} \left(q(t,y) - \tr(V(t,y) A) + U(t,y) B - r(t,y) \right)^2 \mathrm{d}t \; \mathrm{d}y \\
& = \int_{[0,T]\times\R^n} (1, B^\top, \vec(A)^\top) W(t,y) (1, B^\top, \vec(A)^\top) \; \mathrm{d}t \; \mathrm{d}y \\
& = (1, B^\top, \vec(A)^\top) H (1, B^\top, \vec(A)^\top)^\top.
\end{align*}
where, by denoting $\widetilde{q} \triangleq q - r$, we define $W: [0,T]\times\R^n \to \R^{1+Qn + Qn^2 \times 1+Qn + Qn^2}$ as
$$
W(t,y) \triangleq \begin{pmatrix} \widetilde{q}^2(t,y) & -\widetilde{q}(t,y) U(t,y) & -\widetilde{q}(t,y) \vec(V(t,y))^\top \\
-\widetilde{q}(t,y) U(t,y) & U(t,y) U(t,y)^\top & U(t,y)^\top \vec(V(t,y))^\top\\
-\widetilde{q}(t,y) \vec(V(t,y)) & \vec(V(t,y)) U(t,y) & \vec(V(t,y))\vec(V(t,y))^\top ,
\end{pmatrix}
$$
whereas $H \in \R^{1+Qn + Q^2n^2 \times 1+Qn + Q^2n^2}$ is defined as
$$
H \triangleq \int_{[0,T]\times\R^n} W(t,y) \; \mathrm{d}t \; \mathrm{d}y .
$$
Summing up, \LearningProblemFinite equals the following semi-definite program
$$
\underset{\begin{cases}
    A \in \R^{Qn \times Qn} , B \in \R^{Qn} \\
    v \triangleq (1, B, \vec(A)) \in \R^{1 + Qn + Q^2n^2} \\
    A \succeq 0
\end{cases}}{\min} v^{\top} H v .
$$
Such semi-definite programs enjoy a particularly simple form. They can be efficiently solved via damped Newton methods, with computational cost $O((Qn)^{3.5})$ \cite{nesterov1994interior}}.

\vspace{5px}

\firstRev{\textbf{Computing the integral $\bm{H}$.} The computation of each entry of the matrix $H$ can be done in closed form, given that they correspond to the integral of products of functions, that are linear combinations of elementary functions. As a matter of example, below we explicitly compute the first element of $H$, i.e., $H_{11}$, the same argument holding for the other elements. We have that
$$
\int_{[0,T]\times\R^n} W_{11}(t,y) \; \mathrm{d}t \; \mathrm{d}y = \int_{[0,T]\times\R^n} ( q(t,y)^2 - 2 q(t,y) r(t,y) + r(t,y)^2 ) \; \mathrm{d}t \; \mathrm{d}y .
$$
Due to their similarity, we proceed with computations only for the integral of the first term in the right-hand side of this equality. For this, we may compute 
\begin{align*}
&\int_{[0,T]\times\R^n} q(t,y)^2 \; \mathrm{d}t \; \mathrm{d}y \\
& = \sum_{\ell,\ell'=1}^M \sum_{j,j'=1}^N \frac{1}{N^2} \int_{[0,T]\times\R^n} \dot{c}_\ell(t) \dot{c}_{\ell'}(t) \rho_R(y-x_{\ell,j}) \rho_R(y-x_{\ell',j'}) \; \mathrm{d}t \; \mathrm{d}y \\
& = \sum_{i,i',\ell,\ell'=1}^M \sum_{j,j'=1}^N \frac{\alpha_{i\ell} \alpha_{i'\ell'}}{N^2} \int_0^T \secRev{\dot{K}_{m+1}}(t-t_i) \secRev{\dot{K}_{m+1}}(t-t_{i'}) \; \mathrm{d}t \ \times \\
&\hspace{45ex} \times \int_{\R^n} \rho_R(y-x_{\ell,j}) \rho_R(y-x_{\ell',j'}) \; \mathrm{d}y .
\end{align*}
Above we used the definition of $\widehat{p}$, and the fact that $c_\ell$ are minimum-norm kernel interpolators \cite{Wendland2004}, i.e., they are given by $c_\ell(t) = \sum_{i=1}^N \alpha_{i,\ell} \secRev{K_{m+1}}(t-t_i)$, where $\alpha_{i,\ell} = (\secRev{G}^{-1})_{i,\ell}$, whereas $\secRev{G} \in \R^{N\times N}$ is given by $\secRev{G}_{i,i'} = \secRev{K_{m+1}}(t_i, t_{i'})$. By denoting
$$
R_{\ell, \ell'} \triangleq \int_0^T \secRev{\dot{K}_{m+1}}(t-t_i) \secRev{\dot{K}_{m+1}}(t-t_{i'}) \; \mathrm{d}t , \quad S_{\ell, j, \ell', j'} \triangleq \int_{\R^n} \rho_R(y-x_{\ell,j})\rho_R(y-x_{\ell',j'}) \; \mathrm{d}y ,
$$
the computations above thus amount to
$$
\int_{[0,T]\times\R^n} q(t,y)^2 \; \mathrm{d}t \; \mathrm{d}y = \sum_{\ell,\ell'=1}^M \sum_{j,j'=1}^N (\secRev{G}^{-1} R \secRev{G}^{-1})_{\ell, \ell'} S_{\ell, j, \ell', j'} .
$$
Assuming that the computational cost of analytically computing such integrals of elementary functions is $O(1)$ per integral, the total computational cost to compute $H_{11}$ is thus $O(M^3 + M^2 N^2)$. Under the same assumption and iterating the same argument for all the elements of $H$, since $U$ is the linear combination of $Q n^2$ elementary functions, whereas $V$ is the linear combination of $Q^2 n^4$ elementary functions, the total computational cost to compute the whole matrix $H$ is thus $O(M^3 + M^2 N^2 + Q^4 n^8)$.}

\vspace{5px}

\firstRev{\textbf{Considerations on the computational complexity.} Since to achieve a desired value for the fill distance $Q = O(h_Q^{-(n+1)})$ number of centers are required, taken, e.g. on a grid of stepsize $h_Q$ \cite{rudi2024finding}, we thus need $Q = O\left(\varepsilon^{-\frac{n + 1}{m - (n-1)/4}}\right)$ to meet the rates of Theorem \ref{theorem:ultimateEstimate2}. In particular, by dimensioning $M$ and $N$ according to Theorem \ref{theorem:ultimateEstimate2} to achieve the desired precision $\varepsilon$, it is sufficient that $Q^{4} \leq M^2 N^2$. Therefore, the proposed algorithm enjoy the following performance:
\begin{align*}
&\textrm{1. Approximation error of $(a^*, b^*)$:} ~~~O(\varepsilon) , \\
&\textrm{2. Total number of required samples:} ~~~O(\varepsilon^{-2 - \frac{n+1/2}{2m}}) , \\
&\textrm{3. Total computational cost:} ~~~O(\varepsilon^{-4 - \frac{2n+1}{2m}} n^8) .
\end{align*}
In particular, under regular enough settings, i.e., $m \geq n+1/2$, we infer that our identification method achieves approximation error $O(\varepsilon)$ by requiring 
$O(\varepsilon^{-2.5})$ samples and a total computational cost of $O(\varepsilon^{-5} n^8)$.}

\section{Conclusion and Perspectives}
\label{sec:Conclusion}
In this paper, we propose a Reproducing Kernel Hilbert Space-based learning paradigm for the identification of drift and diffusion coefficients of non-linear stochastic differential equations, which relies upon discrete-time observation of the state. Under assumptions of smoothness for the unknown drift and diffusion coefficients, we provide theoretical estimates of learning rates which become increasingly tighter when both the number of observations of the state and the regularity of the unknown drift and diffusion coefficients grow.

Some possible improvements and perspectives are in order. Since our learning rates essentially apply to the laws of the state process, it would be interesting to understand whether and under what conditions our method may be extended to derive stronger $L^p$ norm-based learning rates. Finally, it would be interesting to investigate extensions of our work for the identification of controlled stochastic differential equations: although these models are crucial for the control of complex systems, e.g., in aerospace and robotics, methods which offer relevant guarantees of accuracy and efficiency of the identification process still require extensive investigation.

\section{Details on Stochastic Differential and Fokker-Planck Equations}
\label{sec:FokkerPlanck}
In this section, we provide a more structured exposition of the concepts we previously introduced in Section \ref{sec:ShortFokkerPlanck}. For this, we chronologically retrace in more details every definition and result of Section \ref{sec:ShortFokkerPlanck} step by step.

\subsection{The Fokker-Planck equation}

We recall we fixed $m \in \mathbb{N}$ and a constant $\alpha > 0$, \secRev{such that, since we worked with diffusion coefficients that are never trivial, we replaced $\Cweak$ with}
$$
\secRev{\C \triangleq (\alpha I , 0) + \Big\{ (a,b) \in \Cweak : \ a(t,x) \succcurlyeq 0 , \ (t,x) \in [0,T] \times \Rn \Big\} .} 
$$
Also, we fixed a non-negative density $p_0 \in L^2(\Rn,\R)$ which served as appropriate initial condition, and we denoted by $\mu_0 \in \ProbaSpace(\Rn)$ the associate probability measure. We recall the following notions of stochastic differential equation and its solutions:

\begin{definition} \label{def:SDE}
A measurable mapping $X : \Rn\times\Omega\to\DistSpace$ solves the Stochastic Differential Equation with coefficients $(a,b) \in \C$ if each process $X_x(t,\omega) \triangleq X(x,\omega)(t)$ is $\F$--progressively measurable for every $x \in \Rn$, and
$$
\textnormal{SDE}_x \quad \begin{cases}
\mathrm{d}X_x(t) = b(t,X_x(t)) \; \mathrm{d}t + \secRev{\sqrt{a(t,X_x(t))}} \; \mathrm{d}W_t , \\[5pt]
\Proba\big( X_x(0) = x \big) = 1 ,
\end{cases}
$$
holds in $(\Omega,\G,\F,\Proba)$ for $\mu_0$-almost every $x \in \Rn$. A solution $X$ to SDE is unique if, for every measurable mapping $Y : \Rn\times\Omega\to\DistSpace$ which solves SDE with coefficients $(a,b) \in \C$, it holds that $X(x,\cdot) = Y(x,\cdot)$ a.s., for $\mu_0$-almost every $x \in \Rn$.
\end{definition}

The well-posedness of Definition \ref{def:SDE} is contained in the following theorem, together with other useful properties on solutions to SDE:

\begin{theorem} \label{theo:existenceSDE}
For every $(a,b) \in \C$, there exists a unique measurable mapping  $X : \Rn\times\Omega\to\DistSpace$ which solves SDE with coefficients $(a,b) \in \C$. In addition, the following properties hold true for the mapping $X$:
\begin{enumerate}
    \item The following mapping is measurable:
    $$
    (x,\omega,t) \in \Rn\times\Omega\times[0,T] \mapsto X_x(t,\omega) \in \Rn .
    $$
    
    \item For every $\varphi \in C_b([0,T]\times\Rn,\R)$, the following mapping is continuous:
    $$
    t \in [0,T] \mapsto \int_{\Rn} \int_{\Omega} \varphi(t,X_x(t)) \; \mathrm{d}\Proba \; \mu_0(\mathrm{d}x) \in \R .
    $$
\end{enumerate}
\end{theorem}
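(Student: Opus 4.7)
The plan is to split the proof into three essentially independent ingredients: strong well-posedness of $\mathrm{SDE}_x$ for each fixed $x \in \Rn$; the construction of a single jointly measurable version $X$ of the family $\{X_x\}_{x \in \Rn}$; and the continuity of the integral functional in property~2.

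For the first ingredient, I would invoke classical Itô theory. By Theorem~\ref{Theo:CoeffProperties} every $(a,b) \in \C$ satisfies $a,b \in C^{2m+1}([0,T]\times\Rn)$ with all derivatives up to order $2m+1$ uniformly bounded. Since $a + \alpha I \succeq \alpha I > 0$, the matrix square root is $C^\infty$ with uniformly bounded derivatives on the cone $\{M \succeq \alpha I\}$, so $\sqrt{a+\alpha I}$ inherits the same global Lipschitz and linear growth bounds in $y$ uniformly in $t$. Hence, for each $x \in \Rn$, the classical strong existence/uniqueness theorem (e.g.\ Karatzas--Shreve) produces a unique $\F$-progressively measurable process $X_x$ with $\Proba$-a.s.\ continuous sample paths, taking values in $\DistSpace = C([0,T],\Rn)$.

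For the second ingredient, my preferred route is Kunita's theorem on stochastic flows: under the $C^{2m+1}$ regularity with bounded derivatives assumed above, the family $\{X_x\}_{x \in \Rn}$ admits a modification such that $(t,x,\omega) \mapsto X_x(t,\omega)$ is jointly continuous in $(t,x)$ for $\Proba$-almost every $\omega$, and jointly measurable as a consequence. An alternative, more elementary route is to build the solution via Picard iteration with $X_x^{(0)}(t,\omega) = x$ and
$$
X_x^{(k+1)}(t,\omega) = x + \int_0^t b(s, X_x^{(k)}(s,\omega))\, \mathrm{d}s + \int_0^t \sqrt{a+\alpha I}(s, X_x^{(k)}(s,\omega))\, \mathrm{d}W_s(\omega).
$$
Each iterate is jointly measurable in $(x,\omega,t)$ by Fubini for the Lebesgue integral and by choosing a jointly measurable version of the stochastic integral on each step. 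Standard contraction and Gronwall estimates on $\mathbb{E}\sup_{s \leq t} \|X_x^{(k+1)}(s) - X_x^{(k)}(s)\|^2$, uniform in $x$ on compacts, yield convergence to a $(x,\omega,t)$-measurable limit $X$ which, by uniqueness from the first step, coincides with $X_x$ for $\mu_0$-a.e.\ $x$. Either route yields the claimed measurable selection and hence property~1.

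For property~2, continuity of the integral functional follows from the dominated convergence theorem. Indeed, since $X$ takes values in $\DistSpace = C([0,T],\Rn)$ and $\varphi \in C_b([0,T]\times\Rn,\R)$, for any sequence $t_n \to t$ in $[0,T]$ we have $\varphi(t_n, X_x(t_n,\omega)) \to \varphi(t, X_x(t,\omega))$ pointwise, with uniform bound $\|\varphi\|_{L^\infty}$, integrable against the finite product measure $\mu_0 \otimes \Proba$; joint measurability from property~1 legitimises the application of Fubini.

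The main obstacle is the joint measurability in step two: the classical Itô argument delivers $X_x$ only for each fixed $x$, and collecting these into a single measurable map requires either the flow regularity afforded by Kunita (which is heavier machinery but free once one has the $C^{2m+1}$ bounds from Theorem~\ref{Theo:CoeffProperties}) or a careful inductive argument showing that joint measurability is preserved by the stochastic-integral step of the Picard scheme. Once joint measurability is in hand, the remaining statements are essentially bookkeeping.
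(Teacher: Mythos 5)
Your proposal is correct, and its skeleton (strong well-posedness of $\mathrm{SDE}_x$ for each fixed $x$ via Lipschitzness of $b$ and $\sqrt{a+\alpha I}$ granted by Theorem \ref{Theo:CoeffProperties}, then a jointly measurable version, then dominated convergence for property 2) matches the paper's. The difference is in how the joint measurability is produced. The paper does not invoke Kunita's flow theorem nor a Picard scheme: it first checks that $x\mapsto X(x,\cdot)$ is a measurable $\DistSpace$-valued process (via countable intersections over rational times and a monotone class argument), establishes the $L^2$ continuity estimate $\mathbb{E}\big[\Dist(X_{x_1},X_{x_2})^2\big]\le C\|x_1-x_2\|^2$ in the initial condition, and then applies Kolmogorov's continuity lemma to obtain a modification $\tilde X$ that is continuous in $x$ with values in path space; joint measurability then follows from the Carath\'eodory property, and a Burkholder--Davis--Gundy computation confirms that the modification still solves $\mathrm{SDE}_x$, with uniqueness in the sense of Definition \ref{def:SDE} obtained by Gronwall. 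Your Kunita route is heavier machinery but buys the continuous-in-$(t,x)$ version and the fact that it solves the equation in one stroke, so the extra verification step disappears; your Picard route is more elementary and self-contained but, as you note, requires care in choosing jointly measurable versions of the stochastic integrals at each iteration, which is precisely the bookkeeping the paper's modification-plus-Carath\'eodory argument is designed to avoid. Whichever route you take, keep the final check that the measurable version you end up with is still a solution for $\mu_0$-a.e.\ $x$ and that any other measurable solution agrees with it a.s.\ for $\mu_0$-a.e.\ $x$; you address this only in passing, and it is the one point the paper treats with an explicit estimate.
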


\begin{proof}
Thanks to Theorem \ref{Theo:CoeffProperties}, for every $x \in \Rn$ there exists a unique (up to stochastic indistinguishability) $\F$--adapted process $X_x : [0,T]\times\Omega \to \Rn$ with continuous sample paths which solves $\textnormal{SDE}_x$, and which satisfies the following inequality
\begin{equation} \label{eq:Burkholder}
    \secRev{\mathbb{E}\Big[ \Dist( X_{x_1} , X_{x_2} )^2 \Big] \le C\big( \alpha , \| (a - \alpha I,b) \|_{\Cbasis} \big) \| x_1 - x_2 \|^2 ,}
\end{equation}
where the constant \secRev{$C\big( \alpha , \| (a - \alpha I,b) \|_{\Cbasis} \big) > 0$ depends on $(a,b) \in \C$ uniquely} (see, e.g., \cite{LeGall2016})\footnote{Here, we use the fact that the mapping $y \mapsto \sqrt{a(t,y) + \alpha I}$ is Lipschitz, for every $t \in [0,T]$. This is a straightforward consequence of Theorem \ref{Theo:CoeffProperties} and the fact that the mapping $A \in \SymPP \mapsto \sqrt{A}$ is Lipschitz on $\{ A \in \SymPP : \ \| A \| \le \| a \|_{L^{\infty}} + \alpha , \ y^{\top} A y \ge \alpha \| y \|^2 , y \in \Rn \}$. Indeed, the latter set is compact and the mapping $A \in \SymPP \mapsto \sqrt{A}$ is continuously differentiable.}. Therefore, we define the mapping $X : \Rn\times\Omega\to\DistSpace$ by $X(x,\omega)(t) \triangleq X_x(t,\omega)$. From the continuity of the sample paths of each $X_x$, one may show that
$$
X(x,\cdot)^{-1}\left( \overline{B^{\DistSpace}_{\varepsilon}(w_0)} \right) = \underset{t \in [0,T] \cap \mathbb{Q}}{\bigcap} X_x(t)^{-1}\left( \overline{B^{\Rn}_{\varepsilon}(w_0(t))} \right) , \quad x \in \Rn ,
$$
from which, by leveraging a routine monotone class argument, we easily infer the measurability of the process $(X(x,\cdot))_{x \in \Rn}$.

To prove the measurability of $X : \Rn\times\Omega\to\DistSpace$, we rather build a measurable mapping $\tilde X : \Rn\times\Omega\to\DistSpace$ such that the process $(\tilde X(x,\cdot))_{x \in \Rn}$ is a modification of $(X(x,\cdot))_{x \in \Rn}$. In particular, such property would imply for every $x \in \Rn$ the existence of a subset $N_x \in \F_T$ with $\Proba(N_x) = 1$, and such that
\begin{equation} \label{eq:modification}
    \tilde X_x(t,\omega) = X_x(t,\omega) , \quad t \in [0,T] , \quad \omega \in N_x .
\end{equation}
From \eqref{eq:modification}, together with the completeness of $\F$, we would infer each process $\tilde X_x$ is $\F$--adapted and has continuous sample paths (the latter property being trivially true by definition). Moreover, thanks to Theorem \ref{Theo:CoeffProperties}, a routine application of Burkholder-Davis-Gundy inequality, and the fact that each process $X_x$ satisfies $\textnormal{SDE}_x$, one may straightforwardly compute, for every $x \in \Rn$,
\begin{align*}
    &\mathbb{E}\left[ \underset{t \in [0,T]}{\sup} \ \left\| \tilde X_x(t) - x - \int^t_0 b(s,\tilde X_x(s)) \; \mathrm{d}r - \int^t_0 \secRev{\sqrt{a(s,\tilde X_x(s))}} \; \mathrm{d}W_s \right\|^2 \right] \le \\
    &\le C \mathbb{E}\left[ \underset{t \in [0,T]}{\sup} \ \| \tilde X_x(t) - X_x(t) \|^2 \right] + C \mathbb{E}\left[ \left( \int^T_0 \| b(t,\tilde X_x(t)) - b(t,X_x(t)) \| \; \mathrm{d}t \right)^2 \right] \\
    &\hspace{17.5ex}+ C \mathbb{E}\left[ \int^T_0 \left\| \secRev{\sqrt{a(t,\tilde X_x(t))} - \sqrt{a(t,X_x(t))}} \right\|^2 \; \mathrm{d}t \right] \\
    &\le C \mathbb{E}\Big[ \Dist( \tilde X_x , X_x )^2 \Big] ,
\end{align*}
for some (overloaded) constant $C > 0$, and therefore by combining \eqref{eq:Burkholder} with \eqref{eq:modification} yields that each process $\tilde X_x$ satisfies SDE$_x$, $x \in \Rn$. In turn, we showed the existence of a measurable mapping $X : \Rn\times\Omega\to\DistSpace$ which solves SDE with coefficients $(a,b) \in \C$. By leveraging the same argument (and a routine application of Gronw\"all's inequality), one also shows the uniqueness of this mapping as in Definition \ref{def:SDE}.

At this step, we build the aforementioned mapping $\tilde X : \Rn\times\Omega\to\DistSpace$ via Kolmogorov's lemma. More precisely, combining \eqref{eq:Burkholder} with Kolmogorov's lemma yields the existence of a modification $(\tilde X(x,\cdot))_{x \in \Rn}$ of $(X(x,\cdot))_{x \in \Rn}$ such that, for every $\omega \in \Omega$, the mapping $x \in \Rn \mapsto \tilde X_x(\cdot,\omega) \in \DistSpace$ is continuous. Therefore, the mapping $\tilde X : \Rn\times\Omega\to\DistSpace$ is Caratheodory, and thus measurable (see, e.g., \cite[Lemma 8.2.6]{Aubin1990}).

At this step, note that the mapping
$$
\pi : [0,T]\times\DistSpace \to \Rn : (t,w) \mapsto w(t)
$$
satisfies
\begin{align*}
    \| \pi(t_1,w_1) - \pi(t_2,w_2) \| \le \| w_1(t_1) - w_1(t_2) \| + \Dist(w_1,w_2) ,
\end{align*}
for every $t_1 , t_2 \in [0,T]$, $w_1 , w_2 \in \DistSpace$. Hence, $\pi$ is continuous and property 1. follows from $X_x(t,\omega) = \pi\big( t , X(x,\omega) \big)$, for $(x,\omega,t) \in \Rn \times \Omega\times [0,T]$. Moreover, if $\varphi \in C_b([0,T]\times\Rn,\R)$ and $(t_k)_{k \in \mathbb{N}} \subseteq [0,T]$ satisfies $t_k \to t$, from what we just proved:
$$
\varphi\big( t_n , \pi\big( t_n , X(x,\omega) \big) \big) \to \varphi\big( t , \pi\big( t , X(x,\omega) \big) \big) , \quad \textnormal{a.e. in} \quad \Rn\times\Omega ,
$$
and a routine application of the dominated convergence theorem yields property 2.
\end{proof}

At this step, for every $(a,b) \in \C$, we denote the Kolmogorov generator by
$$
\Kolmo_t \varphi(y) \triangleq \frac{1}{2} \sum^n_{i,j=1} \secRev{a}_{ij}(t,y) \frac{\partial^2 \varphi}{\partial y_i \partial y_j}(y) + \sum^n_{i=1} b_i(t,y) \frac{\partial \varphi}{\partial y_i}(y) , \quad \varphi \in C^2(\Rn,\R) .
$$
Fix $(a,b) \in \C$, and assume we are given a measurable mapping  $X : \Rn\times\Omega\to\DistSpace$ which solves SDE with coefficients $(a,b) \in \C$. For $\varphi \in C^{\infty}_c(\Rn,\R)$, a straightforward application of It\^o's formula to SDE$_x$ yields
\begin{equation} \label{eq:Ito}
    \int_{\Omega} \varphi(X_x(t)) \; \mathrm{d}\Proba = \varphi(x) + \int_{\Omega} \int^t_0 \Kolmo_s \varphi(X_x(s)) \; \mathrm{d}s \; \mathrm{d}\Proba , \quad t \in [0,T] ,
\end{equation}
which holds for $\mu_0$-almost every $x \in \Rn$. Thanks to Theorem \ref{theo:existenceSDE}, we may define the curve $\mu : [0,T] \to \ProbaSpace(\Rn)$ of probability measures
\begin{equation*} 
    \mu_t(A) \triangleq \int_{\Rn} \int_{\Omega} \mathds{1}_{\{ X_x(t) \in A \}} \; \mathrm{d}\Proba \; \mu_0(\mathrm{d}x) , \quad A \in \Borel ,
\end{equation*}
and note that $\mu$ is narrowly continuous, i.e., for every $\varphi \in C_b(\Rn,\R)$, the mapping
$$
t \in [0,T] \mapsto \int_{\Rn} \varphi(y) \mu_t(\mathrm{d}y) = \int_{\Rn} \int_{\Omega} \varphi(X_x(t)) \; \mathrm{d}\Proba \; \mu_0(\mathrm{d}x) \in \R
$$
is continuous. By combining this latter property with \eqref{eq:Ito} and Theorem \ref{theo:existenceSDE}
, one readily checks that the curve of probabilities $\mu$ satisfies:

\begin{definition} \label{def:FPE}
A narrowly continuous curve $\mu : [0,T] \to \ProbaSpace(\Rn)$ is said to solve the Fokker-Planck Equation with coefficients $(a,b) \in \C$ if
$$
\textnormal{FPE} \quad \begin{cases}
\displaystyle \frac{\mathrm{d}}{\mathrm{d}t} \int_{\Rn} \varphi(y) \mu_t(\mathrm{d}y) = \int_{\Rn} \Kolmo_t \varphi(y) \mu_t(\mathrm{d}y) , \quad \varphi \in C^{\infty}_c(\Rn,\R) , \\[5pt]
\mu_{t=0} = \mu_0 .
\end{cases}
$$
\end{definition}

Importantly, thanks to the regularity of the coefficients $(a,b) \in \C$, FPE can have one narrowly continuous solution $\mu$ at most, as we state in the following:

\begin{proposition}[Propositions 4.1 and 4.2 in \cite{Figalli2008}] \label{prop:UniquenessFPE}
Given any tuple of coefficients $(a,b) \in \C$, at most one narrowly continuous curve $\mu : [0,T] \to \ProbaSpace(\Rn)$ can solve FPE with coefficients $(a,b) \in \C$.
\end{proposition}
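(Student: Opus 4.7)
The plan is a standard duality argument. Given two narrowly continuous solutions $\mu^1, \mu^2$ of FPE sharing the initial datum $\mu_0$, I would introduce the signed measure $\nu_t \triangleq \mu^1_t - \mu^2_t$, for which $\nu_0 = 0$. Because any finite signed Borel measure on $\Rn$ is determined by its integrals against $C^\infty_c(\Rn,\R)$, it suffices to show $\int_{\Rn} f \, \mathrm{d}\nu_T = 0$ for every $f \in C^\infty_c(\Rn,\R)$ and every horizon $T > 0$. The natural dual object is the classical solution $u$ of the backward terminal-value problem
\[
\partial_s u(s,y) + \Kolmo_s u(s,y) = 0, \quad (s,y) \in [0,T]\times\Rn, \qquad u(T,\cdot) = f .
\]

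First, I would establish existence of a $C^{1,2}$ solution $u$ whose first and second spatial derivatives are uniformly bounded on $[0,T]\times\Rn$. This is where the regularity package from Theorem \ref{Theo:CoeffProperties} (coefficients of class $C^{2m+1}$ with derivatives bounded up to that order) combined with the uniform ellipticity $a + \alpha I \succeq \alpha I$ enters; alternatively, one may rely on the Feynman-Kac representation $u(s,y) = \mathbb{E}[f(X^{s,y}(T))]$, with $X^{s,y}$ the solution of SDE started at $y$ at time $s$, and exploit smoothness of the stochastic flow in the initial datum in order to differentiate under the expectation and extract the same bounds.

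Next, I would extend the weak formulation in Definition \ref{def:FPE} to time-dependent test functions in $C^{1,2}_c([0,T]\times\Rn,\R)$ by a density argument (approximating by finite sums $\sum_i \psi_i(s) \phi_i(y)$ of products of $C^1$ and $C^\infty_c$ factors), and then apply it to $\varphi_R(s,y) \triangleq \chi_R(y) u(s,y)$, where $\chi_R(y) \triangleq \chi(y/R)$ for a fixed smooth cutoff $\chi \in C^\infty_c(\Rn,[0,1])$ identically $1$ on $\overline{B^{\Rn}_1(0)}$ and supported in $\overline{B^{\Rn}_2(0)}$, so that $\|\nabla^k \chi_R\|_{L^\infty} \le C R^{-k}$ for $k=1,2$. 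Since $\partial_s u + \Kolmo_s u = 0$, only the commutator between $\Kolmo_s$ and $\chi_R$ survives the Leibniz rule, producing an identity of the schematic form
\[
\int_{\Rn} \chi_R f \, \mathrm{d}\nu_T \;=\; \int_0^T \int_{\Rn} \biggl( u \, \Kolmo_s \chi_R + \sum_{i,j=1}^n (a+\alpha I)_{ij} \, \partial_i \chi_R \, \partial_j u \biggr) \mathrm{d}\nu_s \, \mathrm{d}s .
\]

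Finally, I would let $R \to \infty$. The left-hand side converges to $\int_{\Rn} f \, \mathrm{d}\nu_T$ by dominated convergence since $\nu_T$ is a finite signed measure. For the right-hand side, both integrands are supported in the annulus $B^{\Rn}_{2R}(0) \setminus B^{\Rn}_R(0)$ and bounded in absolute value by $C/R$, uniformly in $s$, using the $L^\infty$ bounds on $u, \nabla u, a, b$ together with the derivative estimates on $\chi_R$; combined with $|\nu_s|(\Rn) \le 2$, this forces the right-hand side to zero. Hence $\int_{\Rn} f \, \mathrm{d}\nu_T = 0$ and $\nu \equiv 0$, delivering uniqueness. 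The hard part will be the first step: producing $u$ with uniformly bounded first and second spatial derivatives, which is genuinely where the uniform ellipticity provided by $\alpha > 0$ and the high regularity of the coefficients from Theorem \ref{Theo:CoeffProperties} are needed; once these bounds are in hand, the cutoff computation and the passage to the limit are routine.
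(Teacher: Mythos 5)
Your argument is sound, but note that the paper does not actually prove this proposition: it is imported directly from Propositions 4.1 and 4.2 of the cited work of Figalli, whose setting allows much rougher (Lipschitz/bounded, possibly degenerate) coefficients and whose uniqueness proof goes through a duality argument with a regularization of the coefficients. Your route is the classical Holmgren-type duality made directly available by the extra structure of the present setting --- uniform ellipticity $a+\alpha I \succeq \alpha I$ and the $C^{2m+1}$ bounds of Theorem \ref{Theo:CoeffProperties} --- so no mollification is needed and you can test against the classical backward Kolmogorov solution itself. The trade-off: citing Figalli buys generality and brevity, while your proof buys self-containedness at the price of invoking classical parabolic theory (or flow smoothness through Feynman--Kac) to produce $u \in C^{1,2}$ with $u$ and $\nabla u$ uniformly bounded; as you correctly flag, that is where all the weight sits, and it is indeed available under the standing assumptions. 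If you write it up, make two small points explicit: (i) Definition \ref{def:FPE} only involves time-independent test functions, so the use of $\varphi(s,y)=\chi_R(y)u(s,y)$ requires a (standard) extension lemma to test functions that are $C^1$ in time and $C^2_c$ in space, which rests on the fact that $t \mapsto \int_{\Rn}\varphi\,\mathrm{d}\mu_t$ is Lipschitz because the coefficients are bounded; (ii) the computation at the terminal time yields $\nu_T=0$ only for that horizon, so you should note that running the same argument on $[0,t]$ for every $t \le T$ gives $\nu \equiv 0$, i.e., uniqueness of the whole curve.
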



Our previous computations show that solutions to FPE may be obtained from solutions to SDE, and we now establish this process may be inverted. Specifically, motivated by the results in \cite{Stroock1997,Figalli2008}, we prove that any narrowly continuous curve $\mu : [0,T] \to \ProbaSpace(\Rn)$ solution to FPE with coefficients $(a,b) \in \C$ is associated with a unique (in the sense of Definition \ref{def:SDE}) measurable mapping $X : \Rn\times\Omega\to\DistSpace$ which solves SDE with coefficients $(a,b) \in \C$. We gather such result in the following theorem, which is a natural extension of \cite[Theorem 2.6]{Figalli2008} to our setting:

\begin{theorem} \label{theo:existenceMP}
Let a narrowly continuous curve $\mu : [0,T] \to \ProbaSpace(\Rn)$ be solution to FPE with coefficients $(a,b) \in \C$ (which is unique thanks to Proposition \ref{prop:UniquenessFPE}). There exists a unique measurable mapping $X : \Rn\times\Omega\to\DistSpace$ which solves SDE with coefficients $(a,b) \in \C$, and which satisfies the representation formula:
$$
\int_{\Rn} \varphi(y) \mu_t(\mathrm{d}y) = \int_{\Rn} \int_{\Omega} \varphi(X_x(t)) \; \mathrm{d}\Proba \; \mu_0(\mathrm{d}x) , \quad \textnormal{for} \quad t \in [0,T] , \quad \varphi \in C_c(\Rn,\R) .
$$
\end{theorem}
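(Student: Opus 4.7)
The strategy is to invert the construction carried out just before the theorem statement: whereas the preceding discussion showed that a solution to SDE generates, via pushforward of $\mu_0$, a narrowly continuous curve that solves FPE, here I will show that the unique solution to SDE (already guaranteed by Theorem~\ref{theo:existenceSDE}) generates precisely the given solution $\mu$ to FPE, yielding the representation formula.

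First, I would invoke Theorem~\ref{theo:existenceSDE} directly to obtain the existence and uniqueness of a measurable mapping $X : \Rn \times \Omega \to \DistSpace$ solving SDE with coefficients $(a,b) \in \C$. The existence and uniqueness clauses of the present theorem are thereby immediately discharged, and the only substantive task left is to verify the representation formula for this particular $X$. Next, I would define the candidate pushforward curve
$$
\tilde{\mu}_t(A) \triangleq \int_{\Rn} \int_{\Omega} \mathds{1}_{\{ X_x(t) \in A \}} \; \mathrm{d}\Proba \; \mu_0(\mathrm{d}x) , \quad A \in \Borel ,
$$
and verify that $\tilde{\mu} : [0,T] \to \ProbaSpace(\Rn)$ is narrowly continuous. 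The joint measurability of $(x,\omega,t) \mapsto X_x(t,\omega)$ (property 1 in Theorem~\ref{theo:existenceSDE}) ensures that the definition is meaningful and that Fubini-type interchanges are legitimate, while property 2 of Theorem~\ref{theo:existenceSDE} delivers the narrow continuity for every $\varphi \in C_b(\Rn,\R)$.

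Third, I would show that $\tilde{\mu}$ solves FPE. This is precisely the computation already sketched in the text preceding the theorem: apply It\^o's formula to SDE$_x$ with a test function $\varphi \in C^\infty_c(\Rn,\R)$, take expectation against $\Proba$ to kill the martingale term, integrate against $\mu_0(\mathrm{d}x)$, and then differentiate in $t$. The vanishing of the stochastic-integral term in expectation, as well as the interchange of the order of integration, are justified by the $L^\infty$ bounds on $(a+\alpha I,b)$ and their derivatives given by Theorem~\ref{Theo:CoeffProperties}, combined with the standard Burkholder--Davis--Gundy-type moment bounds on $X_x$ already used in the proof of Theorem~\ref{theo:existenceSDE}. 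Equation~\eqref{eq:Ito} then gives, after integration against $\mu_0$, the weak form of FPE for $\tilde{\mu}$.

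Finally, I would invoke Proposition~\ref{prop:UniquenessFPE} to conclude that $\mu = \tilde{\mu}$, which by the very definition of $\tilde{\mu}$ is the sought representation formula. I do not expect a genuine obstacle: all the analytic work has been done in Theorem~\ref{theo:existenceSDE}, Theorem~\ref{Theo:CoeffProperties}, and Proposition~\ref{prop:UniquenessFPE}. The only delicate point is the careful bookkeeping when pushing the $\mu_0$-a.e. validity of \eqref{eq:Ito} through integration against $\mu_0$ and through the time derivative, but this is routine and essentially identical to the argument of \cite[Theorem~2.6]{Figalli2008}.
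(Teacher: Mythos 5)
Your proposal is correct and follows essentially the same route as the paper's proof: obtain $X$ from Theorem~\ref{theo:existenceSDE}, define the pushforward curve, verify via It\^o's formula (equation~\eqref{eq:Ito}) and the properties in Theorem~\ref{theo:existenceSDE} that it is a narrowly continuous solution to FPE, and conclude by the uniqueness in Proposition~\ref{prop:UniquenessFPE}. No gaps to report.
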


\begin{proof}
Thanks to Theorem \ref{theo:existenceSDE}, we already know there exists a unique measurable mapping  $X : \Rn\times\Omega\to\DistSpace$ which solves SDE with coefficients $(a,b) \in \C$.

To conclude we just need to show the representation formula holds true. For this, we define the curve $\bar \mu : [0,T] \to \ProbaSpace(\Rn)$ of probability measures
$$
\bar \mu_t(A) \triangleq \int_{\Rn} \int_{\Omega} \mathds{1}_{\{ X_x(t) \in A \}} \; \mathrm{d}\Proba \; \mu_0(\mathrm{d}x) , \quad A \in \Borel ,
$$
which is well-defined and narrowly continuous thanks to Theorem \ref{theo:existenceSDE}. In addition, by combining this latter property with \eqref{eq:Ito} and Theorem \ref{theo:existenceSDE}, we see that the curve $\bar \mu : [0,T] \to \ProbaSpace(\Rn)$ solves FPE, and thus Proposition \ref{prop:UniquenessFPE} yields
$$
\int_{\Rn} \varphi(y) \mu_t(\mathrm{d}y) = \int_{\Rn} \varphi(y) \bar \mu_t(\mathrm{d}y) = \int_{\Rn} \int_{\Omega} \varphi(X_x(t)) \; \mathrm{d}\Proba \; \mu_0(\mathrm{d}x) ,
$$
for every $t \in [0,T]$ and $\varphi \in C_c(\Rn,\R)$, and the conclusion follows.
\end{proof}

\subsection{Absolutely continuous solutions to the Fokker-Planck equation} \label{sec:absoluteFPE}

From what we showed, solutions to SDE may be found by solving FPE. In this section, we show the existence of narrowly continuous curves $\mu : [0,T] \to \ProbaSpace(\Rn)$ of type
$$
\mu_t(A) = \int_A p(t,y) \; \mathrm{d}y , \quad A \in \Borel ,
$$
for appropriate densities $p : [0,T]\times\Rn\to\R$, which are solutions to FPE. Note that, if such solutions to FPE exist, then they are unique thanks to Proposition \ref{prop:UniquenessFPE}. Although such existence result is classic (see, e.g., \cite{Figalli2008,Breiten2018}), we retrace its proof in Appendix \ref{sec:AppendixFokkerPlanck} to characterize the constants appearing in some appropriate estimates and regularity properties which have been paramount to derive the results in Section \ref{sec:LearningDynamics}.

For this, let us first recall the broader definition of solution to FPE, which in particular encompasses Definition \ref{def:FPE} as a sub-case (see also Theorem \ref{theor:exFPE} below):

\begin{definition} \label{def:FPENew}
Let $f \in L^2([0,T]\times\Rn,\R)$ and $\bar p \in L^2(\Rn,\R)$. A (regular enough) function $p : [0,T]\times\Rn\to\R$ is said to solve the non-homogeneous Fokker-Planck Equation with coefficients $(a,b) \in \C$, if
$$
\textnormal{FPE}_f \quad \begin{cases}
\displaystyle \frac{\mathrm{d}}{\mathrm{d}t} \int_{\Rn} \varphi(y) p(t,y) \; \mathrm{d}y = \\
\displaystyle \hspace{10ex}= \int_{\Rn} \left( \Kolmo_t \varphi(y) p(t,y) + f(t,y) \varphi(y) \right) \; \mathrm{d}y , \quad \varphi \in C^{\infty}_c(\Rn,\R) , \\[10pt]
p(0,\cdot) = \bar p(\cdot) .
\end{cases}
$$
\end{definition}

We gather results on the existence, uniqueness, and energy-type estimates for solutions to FPE$_f$, and therefore for solutions to FPE, in the following:

\begin{theorem} \label{theor:exFPE}
For every $f \in L^2([0,T]\times\Rn,\R)$ and every $\bar p \in L^2(\Rn,\R)$, there exists a unique mapping $p \in C(0,T;L^2(\Rn,\R)) \cap L^2(0,T;H^1(\Rn,\R))$ which solves FPE$_f$ with coefficients $(a,b) \in \C$, and with $\displaystyle \frac{\partial p}{\partial t} \in L^2(0,T;H^{-1}(\Rn,\R))$. In addition, the following first parabolic estimate holds:
\begin{align} \label{eq:estimate1}
    \| p (t,\cdot) \|^2_{L^2} + \int^t_0 &\| p(s,\cdot) \|^2_{H^1} \; \mathrm{d}s \le \nonumber \\
    &\le C\big(\secRev{\alpha},\|(\secRev{a - \alpha I},b)\|_{\Cbasis}\big) \left( \| \bar p \|^2_{L^2} + \int^t_0 \| f(s,\cdot) \|^2_{L^2} \; \mathrm{d}s \right) , \quad t \in [0,T] ,
\end{align}
\secRev{with $C\big(\secRev{\alpha},\|(a - \alpha I,b)\|_{\Cbasis}\big) > 0$ continuously depending on $\secRev{\alpha}$ and $\|(a - \alpha I,b)\|_{\Cbasis}$}. Finally, if $f = 0$ and $\bar p$ is a non-negative density in $L^2(\Rn,\R)$, then
$$
p(t,\cdot) \ge 0 , \quad \int_{\Rn} p(t,y) \; \mathrm{d}y = 1 , \quad t \in [0,T] ,
$$
and therefore, if for every $t \in [0,T]$ we define
$$
\mu_t(A) \triangleq \int_A p(t,y) \; \mathrm{d}y , \quad A \in \Borel ,
$$
then the curve $\mu : [0,T] \to \ProbaSpace(\Rn)$ is narrowly continuous and solves FPE.
\end{theorem}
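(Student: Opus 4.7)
The plan is to treat $\textnormal{FPE}_f$ as a linear parabolic equation in divergence form and solve it by the standard Lions variational method in the Gelfand triple $H^1(\Rn,\R) \hookrightarrow L^2(\Rn,\R) \hookrightarrow H^{-1}(\Rn,\R)$. First I would rewrite the dual generator as
$$
(\Kolmo_t)^* p = \frac{1}{2}\sum_{i,j} \frac{\partial^2}{\partial y_i \partial y_j}\big((a+\alpha I)_{ij}\, p\big) - \sum_{i} \frac{\partial}{\partial y_i}(b_i\, p),
$$
shift one derivative by integration by parts, and read off the continuous bilinear form $\Bil_t(u,v)$ on $H^1\times H^1$. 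Theorem \ref{Theo:CoeffProperties} provides $L^\infty$-bounds on $(a+\alpha I)$, $b$, and their first derivatives in terms of $\|(a,b)\|_{\Cbasis}$, which gives continuity of $\Bil_t$; combined with $a+\alpha I \succeq \alpha I$ and a Young inequality to absorb first-order terms, this yields a G\aa{}rding inequality
$$
\Bil_t(u,u) + \lambda_0 \|u\|_{L^2}^2 \ge \tfrac{\alpha}{4}\|u\|_{H^1}^2,
$$
with $\lambda_0 > 0$ depending continuously on $\|(a,b)\|_{\Cbasis}$.

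Second, I would invoke Lions' theorem (or equivalently a Faedo-Galerkin approximation with a Hilbert basis of $H^1$) to produce a unique $p\in L^2(0,T;H^1)\cap C(0,T;L^2)$ with $\partial_t p \in L^2(0,T;H^{-1})$ solving the variational formulation of $\textnormal{FPE}_f$ with initial value $\bar p$. The energy estimate \eqref{eq:estimate1} is then obtained by testing the equation against $p$ itself, using the chain-rule identity $\frac{d}{dt}\|p\|_{L^2}^2 = 2\langle \partial_t p, p\rangle_{H^{-1},H^1}$ (valid in the Lions triple), applying the G\aa{}rding inequality, absorbing $\int f\, p$ by Young, and closing with Gr\"onwall. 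The continuous dependence of the prefactor on $\|(a,b)\|_{\Cbasis}$ is tracked directly through the G\aa{}rding constant and Gr\"onwall exponent.

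For the case $f = 0$ with $\bar p$ a nonnegative density, I would establish positivity by a Stampacchia truncation argument: the negative part $p^- \triangleq \max(-p,0)$ lies in $L^2(0,T;H^1)$ with $p^-(0,\cdot)=0$, and testing the equation against $p^-$ combined with the G\aa{}rding structure (with an extra term coming from $\mathrm{div}\, b$ handled by $\|b\|_{C^1}\le C\|(a,b)\|_{\Cbasis}$) yields a Gr\"onwall bound forcing $\|p^-(t,\cdot)\|_{L^2}\equiv 0$. Mass conservation would follow from testing against smooth cutoffs $\chi_R \in C^\infty_c(\Rn)$ that equal $1$ on $B_R(0)$: since $\Kolmo_t$ annihilates constants, only a commutator term $\int \big(\Kolmo_t \chi_R\big) p$ remains, and this vanishes as $R \to \infty$ by the $L^2$-integrability of $p$ and $\nabla p$. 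Finally, narrow continuity of $\mu$ follows from $p \in C(0,T;L^2)$ (which gives continuity against $C_c$ test functions) together with the uniform tightness of $\{\mu_t\}$ implied by mass conservation and $p \ge 0$.

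The main obstacle will be the two ``boundary-at-infinity'' issues: making the Stampacchia argument clean in the presence of the non-divergence-form drift term (requiring the $C^1$-bound on $b$ from Theorem \ref{Theo:CoeffProperties}) and, more importantly, justifying conservation of mass on all of $\Rn$ via the cutoff commutator $\Kolmo_t\chi_R$, since the argument requires uniform-in-$R$ control that exploits only the $L^2$-regularity of $p$ provided by Step 2. The rest of the proof is a routine application of the Lions machinery.
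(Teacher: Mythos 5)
Your proposal follows essentially the same route as the paper's proof: the Lions variational scheme in the Gelfand triple $H^1 \hookrightarrow L^2 \hookrightarrow H^{-1}$ with the divergence-form bilinear form whose continuity and semi-coercivity come from Theorem \ref{Theo:CoeffProperties}, the energy estimate via testing against $p$ plus Young and Gr\"onwall, positivity via the negative-part (Stampacchia-type) test as in \cite[Lemma 11.2]{Chipot2000}, mass conservation via compactly supported cutoffs whose commutator with the generator vanishes in the limit, and narrow continuity deduced from $p \in C(0,T;L^2)$ together with conservation of unit mass. The step you flag as delicate (the cutoff limit for mass conservation) is precisely where the paper also leans on the boundedness of the coefficients and a convergence-theorem argument, so your plan is sound and matches the paper's proof.
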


\begin{remark}
The regularity of the mapping $f$ may be weakened (see, e.g., \cite{Lions1968,Breiten2018}), although the requirement $f \in L^2([0,T]\times\Rn,\R)$ already fits our purpose. 
\end{remark}

The proof of Theorem \ref{theor:exFPE} is reported in Appendix \ref{sec:AppendixFokkerPlanck} and is based on the classical Lions scheme (see, e.g., \cite{Lions1968,Chipot2000,Evans2010}). Among straightforward benefits, we recall Theorem \ref{theor:exFPE} enables introducing rigorous criteria to establish satisfactory guarantees for our learning approach in many circumstances. Let us better introduce this concept below. As a matter of example, when dealing with stochastic differential equations in applications such as observation and regulation, one must often manipulate metrics
\begin{equation} \label{eq:integralMetric}
    \mathbb{E}_{\mu_0\times\Proba}\left[ \int^T_0 f(t,X_x(t)) \; \mathrm{d}t \right] = \int^T_0 \left( \int_{\R^{2n}} f(t,y) \; \Proba_{X_x(t)}(\mathrm{d}y) \; \mu_0(\mathrm{d}x) \right) \; \mathrm{d}t ,
\end{equation}
where the measurable mapping $X : \Rn\times\Omega \to \DistSpace$ solves SDE for some coefficients $(a,b) \in \C$, and the mapping $f : [0,T]\times\Rn\to\R$ is regular enough. In the case the coefficients $(a,b) \in \C$ of SDE are to be learned, one does not have perfect knowledge of \eqref{eq:integralMetric}, and therefore the error between \eqref{eq:integralMetric} and its counterpart in which $(X_x)_{x\in\Rn}$ is replaced with the solution to SDE stemming from rather learned coefficients must be estimated. This gap is filled with the following:

\begin{corollary} \label{Corol:useInAppli}
For any coefficients $(a,b)\in\C$, let $\XCoeff : \Rn\times\Omega \to \DistSpace$ denote the unique measurable mapping which solves SDE with coefficients $(a,b) \in \C$ (see Theorem \ref{theo:existenceMP}). There exists an operator
\begin{align*}
    \OperatorSDE : \ &\C \to L^{\infty}(0,T;L^2(\Rn,\R)^*) \cong L^{\infty}(0,T;L^2(\Rn,\R)) \\
    &(a,b) \mapsto \OperatorSDECoeff_{\cdot}(\cdot)
\end{align*}
which, for every $t \in [0,T]$ and every $\varphi \in C_c(\Rn,\R)$, satisfies
$$
\OperatorSDECoeff_t(\varphi) = \int_{\R^{2n}} \varphi(y) \; \Proba_{\XCoeff_x(t)}(\mathrm{d}y) \; \mu_0(\mathrm{d}x) .
$$
In addition, if $\densityCoeff \in C(0,T;L^2(\Rn,\R))$ denotes the unique solution to FDE$_0$ with coefficients $(a,b)\in\C$ and $\bar p = p_0$ (which uniquely exists thanks to Theorem \ref{theor:exFPE}), for every $f \in L^2([0,T]\times\Rn,\R)$ the mapping
$$
t \in [0,T] \mapsto \OperatorSDECoeff_t\big( f(t,\cdot) \big) \in \R
$$
is measurable and in $L^1([0,T],\R)$, and it satisfies, for every $(a_1,b_1) , (a_2,b_2) \in \C$,
\begin{equation} \label{eq:estimateOperator}
    \int^T_0 \big| \OperatorSDE^{a_1,b_1}_t\big( f(t,\cdot) \big) - \OperatorSDE^{a_2,b_2}_t\big( f(t,\cdot) \big) \big| \; \mathrm{d}t \le \| f \|_{L^2} \| p_{a_1,b_1} - p_{a_2,b_2} \|_{L^2} .
\end{equation}
\end{corollary}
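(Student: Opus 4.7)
The plan is to identify the operator $\OperatorSDE$ with integration against the density $\densityCoeff$ and then reduce everything to Cauchy--Schwarz and the parabolic estimate \eqref{eq:estimate1}. Specifically, for each $(a,b) \in \C$, Theorem~\ref{theor:exFPE} applied with $f=0$ and $\bar p = p_0$ provides a unique $\densityCoeff \in C(0,T;L^2(\Rn,\R))$ which is non-negative, has unitary mass, and whose induced curve of measures $\mu^{a,b}_t(A) \triangleq \int_A \densityCoeff(t,y) \; \mathrm{d}y$ is the unique narrowly continuous solution to FPE with coefficients $(a,b)$. Combining this with the representation formula in Theorem~\ref{theo:existenceMP} applied to $\XCoeff$, one obtains
$$
\int_{\Rn}\int_{\Omega} \varphi(\XCoeff_x(t)) \; \mathrm{d}\Proba \; \mu_0(\mathrm{d}x) = \int_{\Rn} \varphi(y) \densityCoeff(t,y) \; \mathrm{d}y , \quad \varphi \in C_c(\Rn,\R) , \quad t \in [0,T] .
$$
This prescribes the only reasonable definition for the operator, namely $\OperatorSDECoeff_t(\varphi) \triangleq \int_{\Rn} \varphi(y) \densityCoeff(t,y) \; \mathrm{d}y$, under the Riesz identification $L^2(\Rn,\R)^* \cong L^2(\Rn,\R)$.

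Next, I would verify the claimed regularity in $t$. By estimate \eqref{eq:estimate1} applied with $f=0$ and $\bar p = p_0$, we have $\sup_{t \in [0,T]} \| \densityCoeff(t,\cdot) \|^2_{L^2} \le C\big(\|(a,b)\|_{\Cbasis}\big) \| p_0 \|^2_{L^2}$, so that $t \mapsto \densityCoeff(t,\cdot)$ belongs to $L^{\infty}(0,T;L^2(\Rn,\R))$; the continuity of the embedding $C(0,T;L^2) \hookrightarrow L^{\infty}(0,T;L^2)$ is in fact immediate here. This shows $\OperatorSDE$ is well-defined as a mapping into $L^{\infty}(0,T;L^2(\Rn,\R)^*)$. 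For any $f \in L^2([0,T]\times\Rn,\R)$, the measurability of $t \mapsto \OperatorSDECoeff_t(f(t,\cdot)) = \int_{\Rn} f(t,y) \densityCoeff(t,y) \; \mathrm{d}y$ follows from Fubini's theorem applied to the product-measurable integrand $(t,y) \mapsto f(t,y) \densityCoeff(t,y)$, which is integrable on $[0,T]\times\Rn$ by Cauchy--Schwarz and the two $L^2$ bounds above.

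The final estimate \eqref{eq:estimateOperator} is then a direct two-step Cauchy--Schwarz computation: writing the difference as
$$
\OperatorSDE^{a_1,b_1}_t(f(t,\cdot)) - \OperatorSDE^{a_2,b_2}_t(f(t,\cdot)) = \int_{\Rn} f(t,y) \big( p_{a_1,b_1}(t,y) - p_{a_2,b_2}(t,y) \big) \; \mathrm{d}y ,
$$
one applies Cauchy--Schwarz in the $y$ variable for each fixed $t$ and then Cauchy--Schwarz in $t$ on $[0,T]$, yielding the bound $\| f \|_{L^2} \| p_{a_1,b_1} - p_{a_2,b_2} \|_{L^2}$ as claimed. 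No single step is a serious obstacle: the bulk of the content is already packaged in Theorems~\ref{theor:exFPE} and~\ref{theo:existenceMP}, and the only real verification is that $\densityCoeff$ is the correct representative of $\OperatorSDECoeff$, which the uniqueness in Proposition~\ref{prop:UniquenessFPE} makes essentially automatic.
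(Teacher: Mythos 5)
Your proposal is correct, and its mathematical core coincides with the paper's: both identify $\OperatorSDECoeff_t$ with integration against the density $\densityCoeff(t,\cdot)$ furnished by Theorem \ref{theor:exFPE}, link it to the SDE laws through the representation formula of Theorem \ref{theo:existenceMP} (with uniqueness from Proposition \ref{prop:UniquenessFPE}), and bound everything via the parabolic estimate \eqref{eq:estimate1}. Where you differ is in the measure-theoretic bookkeeping, and your route is in fact more economical. The paper first defines $\OperatorSDECoeff_t(\varphi)$ for arbitrary $\varphi \in L^2(\Rn,\R)$ as the limit of the SDE-side integrals along $C_c$-approximations of $\varphi$ (well-posedness of the limit being controlled by $\| \densityCoeff(t,\cdot) \|_{L^2}$, cf.\ \eqref{eq:boundOperatorSDE}), then obtains Bochner measurability of $t \mapsto \OperatorSDECoeff_t(\cdot)$ through the a.e.\ limit \eqref{eq:limitOperatorSDE} combined with Pettis' theorem, and finally derives \eqref{eq:estimateOperator} from that limit and Fatou's lemma; this has the mild conceptual advantage of exhibiting the operator as the continuous extension of the functional canonically determined by the SDE laws, before invoking the density identification. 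You instead define the operator directly as $\varphi \mapsto \int_{\Rn} \varphi\, \densityCoeff(t,\cdot)\,\mathrm{d}y$, check the required identity on $C_c(\Rn,\R)$ via the representation formula, get measurability and integrability of $t \mapsto \OperatorSDECoeff_t(f(t,\cdot))$ from joint measurability of $\densityCoeff$ (which holds since $\densityCoeff \in C(0,T;L^2) \subseteq L^2([0,T]\times\Rn,\R)$) together with Fubini, and obtain \eqref{eq:estimateOperator} by Cauchy--Schwarz first in $y$ and then in $t$ — no Pettis or Fatou needed. Both arguments are complete; yours trades the paper's extension-by-continuity viewpoint for a shorter, purely PDE-side verification.
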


\begin{remark}
Thanks to the previous Corollary, we may provide \eqref{eq:integralMetric} with a rigorous meaning by defining, for every $f \in L^2([0,T]\times\Rn,\R)$,
$$
\mathbb{E}_{\mu_0\times\Proba}\left[ \int^T_0 f(t,\XCoeff_x(t)) \; \mathrm{d}t \right] \triangleq \int^T_0 \OperatorSDECoeff_t\big( f(t,\cdot) \big) \; \mathrm{d}t .
$$
As we showed in Section \ref{sec:LearningDynamics}, the estimate \eqref{eq:estimateOperator} enables the control of any estimation error occurring during the computation of the metric \eqref{eq:integralMetric}. Note that more refined estimates than \eqref{eq:estimateOperator} may be easily obtained by requiring more regularity on the function $f$ (e.g., see the proof of Corollary \ref{Corol:useInAppli} below).
\end{remark}

\begin{proof}
Fix any $(a,b)\in\C$. For $t \in [0,T]$ and $\varphi \in L^2(\Rn,\R)$, we define
$$
\OperatorSDECoeff_t(\varphi) = \underset{k\to\infty}{\lim} \ \int_{\R^{2n}} \varphi_k(y) \; \Proba_{\XCoeff_x(t)}(\mathrm{d}y) \; \mu_0(\mathrm{d}x) ,
$$
where $(\varphi_k)_{k\in\mathbb{N}} \subseteq C_c(\Rn,\R)$ is any sequence which converges to $\varphi$ for the strong topology of $L^2(\Rn,\R)$. This definition is well-posed thanks to Theorem \ref{theo:existenceMP}: indeed, the limit above uniquely exists, given that if $(\varphi^1_k)_{k\in\mathbb{N}} , (\varphi^2_k)_{k\in\mathbb{N}} \subseteq C_c(\Rn,\R)$ are two sequences which converge to $\varphi$ for the strong topology of $L^2(\Rn,\R)$, we may compute
\begin{align*}
    \bigg| \int_{\R^{2n}} \varphi^1_k(y) \; \Proba_{\XCoeff_x(t)}(\mathrm{d}y) \; \mu_0(\mathrm{d}x) - &\int_{\R^{2n}} \varphi^2_k(y) \; \Proba_{\XCoeff_x(t)}(\mathrm{d}y) \; \mu_0(\mathrm{d}x) \bigg| \le \\
    &\le \| \densityCoeff(t,\cdot) \|_{L^2} \| \varphi^1_k - \varphi^2_k \|_{L^2} \to 0 , \quad k\to\infty .
\end{align*}
Similarly, one may easily prove that $\OperatorSDECoeff_t(\cdot) \in L^2(\Rn,\R)^* \cong L^2(\Rn,\R)$ for every $t \in [0,T]$. In particular, note that thanks to \eqref{eq:estimate1}, for $t \in [0,T]$ and $\varphi \in L^2(\Rn,\R)$,
\begin{equation} \label{eq:boundOperatorSDE}
    |\OperatorSDECoeff_t(\varphi)| \le C(a,b) \| p_0 \|_{L^2} \| \varphi \|_{L^2} ,
\end{equation}
for some appropriate constant $C(a,b) > 0$.

Finally, let $g \in L^2([0,T]\times\Rn,\R)$ and let $(g_k)_{k\in\mathbb{N}} \subseteq C_c([0,T]\times\Rn,\R)$ converge to $g$ for the strong topology of $L^2([0,T]\times\Rn,\R)$. In particular, up to some subsequence, we infer that each sequence $\big( g_k(t,\cdot) \big)_{k\in\mathbb{N}} \subseteq C_c([0,T]\times\Rn,\R)$ converges to $g(t,\cdot) \in L^2(\Rn,\R)$ for the strong topology of $L^2(\Rn,\R)$, for almost every $t \in [0,T]$. Hence, the definition of $\OperatorSDECoeff_t(\cdot)$ and Theorem \ref{theo:existenceMP} yield
\begin{equation} \label{eq:limitOperatorSDE}
    \OperatorSDECoeff_t\big( g(t,\cdot) \big) = \underset{k\to\infty}{\lim} \ \int_{\Rn} g_k(t,y) \densityCoeff(t,y) \; \mathrm{d}y , \quad \textnormal{for almost every $t \in [0,T]$} .
\end{equation}
On the one hand, combined with Theorem \ref{theor:exFPE} and Pettis' theorem, \eqref{eq:limitOperatorSDE} yields the Bochner-measurability of the mapping $t \in [0,T] \mapsto \OperatorSDECoeff_t(\cdot) \in L^2(\Rn,\R)^*$, and therefore from \eqref{eq:boundOperatorSDE} we deduce that $\OperatorSDECoeff_{\cdot}(\cdot) \in L^{\infty}(0,T;L^2(\Rn,\R)^*)$. On the other hand, \eqref{eq:limitOperatorSDE} yields the measurability of the mapping $t \in [0,T] \mapsto \OperatorSDECoeff_t\big( f(t,\cdot) \big) \in \R$, which together with \eqref{eq:boundOperatorSDE} makes the mapping $t \in [0,T] \mapsto \OperatorSDECoeff_t\big( f(t,\cdot) \big) \in \R$ measurable and in $L^1([0,T],\R)$. The conclusion of the proof follows from the fact that \eqref{eq:estimateOperator} is a straightforward consequence of \eqref{eq:limitOperatorSDE} and Fatou's lemma.
\end{proof}

\subsection{Additional regularity of solutions to the Fokker-Planck equation}

In Section \ref{sec:LearningDynamics}, we dealt with solutions $p : [0,T]\times\Rn\to\R$ of FPE$_0$ which enjoy higher regularity properties, and specifically for which $p \in H^{m+1,2(m+1)}([0,T]\times\Rn,\R)$, and for which FPE$_0$ holds pointwise. Below, we are going to show such additional attributes stem from the properties of coefficients $(a,b) \in \C$ listed in Theorem \ref{Theo:CoeffProperties}.

First, for every $(a,b) \in \C$ and almost every $t \in [0,T]$ we introduce the notation:
$$
(\Kolmo_t)^* u(t,y) \triangleq \frac{1}{2} \sum^n_{i,j=1} \frac{\partial^2 \big( a_{ij} u \big)}{\partial y_j \partial y_i}(t,y) - \sum^n_{i=1} \frac{\partial \big( b_i u \big)}{\partial y_i}(t,y) , \quad u \in L^2(0,T;H^2(\Rn,\R)) ,
$$
which denotes the dual operator of $\Kolmo_t$. This operator is well-defined with image in $L^2([0,T]\times\Rn,\R)$. Indeed, thanks to Theorem \ref{Theo:CoeffProperties}, one readily shows that
\begin{align*}
    &b_i u \in L^2(0,T;H^1(\Rn,\R)) , \quad \textnormal{with} \quad \frac{\partial \big( b_i u \big)}{\partial y_r} = b_i \frac{\partial u}{\partial y_r} + u \frac{\partial b_i}{\partial y_r} \in L^2([0,T]\times\Rn,\R) , \\
    &a_{ij} u \in L^2(0,T;H^2(\Rn,\R)) , \quad \textnormal{with} \quad \frac{\partial \big( a_{ij} u \big)}{\partial y_r} = a_{ij} \frac{\partial u}{\partial y_r} + u \frac{\partial a_{ij}}{\partial y_r} \in L^2([0,T]\times\Rn,\R) , \\
    &\hspace{30.5ex}\textnormal{and} \quad \frac{\partial^2 \big( a_{ij} u \big)}{\partial y_r \partial y_s} = a_{ij} \frac{\partial^2 u}{\partial y_r \partial y_s} + \frac{\partial a_{ij}}{\partial y_s} \frac{\partial u}{\partial y_r} \\
    &\hspace{37.5ex}+ \frac{\partial a_{ij}}{\partial y_r} \frac{\partial u}{\partial y_s} + u \frac{\partial^2 a_{ij}}{\partial y_r \partial y_s} \in L^2([0,T]\times\Rn,\R) ,
\end{align*}
for $u \in L^2(0,T;H^2(\Rn,\R))$, $i,j,r,s=1,\dots,n$, thus the well-posedness of $(\Kolmo_t)^*$.

Thanks to Theorem \ref{Theo:CoeffProperties}, the following higher regularity result holds:

\begin{theorem} \label{Theo:RegularityFPE}
Under the setting and notation of Theorem \ref{theor:exFPE}, if furthermore $\bar p \in H^{2 \expo + 1}(\Rn,\R)$, then the unique solution $p : [0,T]\times\Rn\to\R$ to FPE$_0$ with coefficients $(a,b) \in \C$ is additionally such that
$$
p \in H^{m+1}(0,T;H^{2(m + 1)}(\Rn,\R)) .
$$
In particular, the mapping $p$ additionally satisfies the following Strong Fokker-Planck Equations with coefficients $(a,b) \in \C$:
$$
\textnormal{SFPE} \quad \begin{cases}
\displaystyle \frac{\partial p}{\partial t}(t,y) = (\Kolmo_t)^* p(t,y) , \quad \textnormal{a.e.} \quad (t,y) \in [0,T]\times\Rn , \\[10pt]
p(0,\cdot) = \bar p(\cdot) .
\end{cases}
$$
\end{theorem}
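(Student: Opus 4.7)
The strategy is a standard parabolic bootstrap that lifts the weak regularity produced by Theorem~\ref{theor:exFPE} up to the announced class, followed by a routine integration by parts that converts the weak FPE$_0$ into the strong pointwise SFPE. The key input fueling the bootstrap is the uniform $C^{2m+1}$ control of $(a,b)$ with all derivatives bounded in $L^\infty$, which Theorem~\ref{Theo:CoeffProperties} extracts from the embedding $H^{d(m)} \hookrightarrow C^{2m+1}$ with $d(m) = 2(m+1) + \lfloor n/2 \rfloor$, together with uniform ellipticity $a + \alpha I \succeq \alpha I$.

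First I would perform a base lift in space. Starting from $p \in C(0,T; L^2) \cap L^2(0,T; H^1)$ and $\bar p \in H^1(\Rn,\R)$, I would test FPE$_0$ against $\partial_t p$ --- rigorously via Galerkin truncation or Nirenberg-type difference quotients in $t$ --- and exploit uniform ellipticity together with Young's-type absorption of the lower-order drift terms (whose coefficients and first and second space derivatives are $L^\infty$-controlled by Theorem~\ref{Theo:CoeffProperties}) to derive an energy estimate of the form
\[
\| p(t) \|_{H^1}^2 + \int_0^t \bigl( \| \partial_t p \|_{L^2}^2 + \| p \|_{H^2}^2 \bigr)\, \mathrm{d}s \le C(\|(a,b)\|_\Cbasis)\, \| \bar p \|_{H^1}^2 .
\]
This upgrades $p$ to $L^2(0,T; H^2(\Rn,\R))$ with $\partial_t p \in L^2(0,T; L^2(\Rn,\R))$.

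Next I would run an induction on $k = 0, 1, \ldots, m+1$ establishing the bootstrapped regularity of $\partial_t^k p$. Formally differentiating FPE$_0$ $k$ times in $t$, the function $w \triangleq \partial_t^k p$ satisfies an equation of FPE$_f$ form
\[
\partial_t w = (\Kolmo_t)^* w + \mathcal{R}_k(p, \partial_t p, \ldots, \partial_t^{k-1} p),
\]
where $\mathcal{R}_k$ is a linear combination of time–space derivatives of $a,b$ of total order $\le 2m+1$ acting on at most two space derivatives of the lower-order time derivatives of $p$. Theorem~\ref{Theo:CoeffProperties} places all coefficient derivatives that appear in $\mathcal R_k$ in $L^\infty$, whereas the induction hypothesis places $\mathcal{R}_k$ in the appropriate $L^2(0,T; H^{\sigma})$ space; the initial value $w(0,\cdot)$, recovered by iterating the equation at $t=0$, lies in $H^{2m+1-2k}(\Rn,\R)$ thanks to $\bar p \in H^{2m+1}$. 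Applying Theorem~\ref{theor:exFPE} together with the base lift to this equation bumps the induction by one level, and after $m+1$ iterations one collects the announced regularity $p \in H^{m+1}(0,T; H^{2(m+1)}(\Rn,\R))$.

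The final step is immediate: since $p \in L^2(0,T; H^2)$ and $\partial_t p \in L^2(0,T; L^2)$, integration by parts in FPE$_0$ yields, for every $\varphi \in C_c^\infty(\Rn,\R)$,
\[
\int_{\Rn} \varphi \bigl( \partial_t p - (\Kolmo_t)^* p \bigr)\, \mathrm{d}y = 0 \quad \textnormal{for a.e. } t \in [0,T],
\]
and density of $C_c^\infty$ in $L^2$ gives SFPE pointwise almost everywhere. The hard part is the book-keeping in the time induction: each differentiation injects source terms mixing one more derivative of $(a,b)$ with increasing space derivatives of lower-order $\partial_t^j p$, and reaching level $m+1$ exactly exhausts both the $2m+1$ bounded coefficient derivatives supplied by Theorem~\ref{Theo:CoeffProperties} and the $H^{2m+1}$ regularity of $\bar p$, so any slack in either input would break the top rung of the induction.
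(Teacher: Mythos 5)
Your overall skeleton (base energy estimate plus elliptic regularity in space, then an induction on time derivatives, then integration by parts and density to get SFPE pointwise) is the same as the paper's, and your base lift and final step match steps 1)--2) and the end of step 2) of the paper's proof. The gap is in the induction step, and it is exactly the point where the theorem's difficulty sits. You write ``formally differentiating FPE$_0$ $k$ times in $t$, the function $w \triangleq \partial_t^k p$ satisfies an equation of FPE$_f$ form'' and then apply Theorem \ref{theor:exFPE} to it. But at that stage of the argument you do not yet know that $\partial_t^k p$ exists, let alone that it is a legitimate weak solution of the differentiated equation; asserting this is assuming the conclusion of the induction step. The paper spends the bulk of its proof (step 3) making precisely this rigorous: it does \emph{not} differentiate the equation satisfied by $p$. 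Instead it solves a separate variational problem \eqref{eq:VarProblemReg} for a candidate $q$ (with source $-\BilDot(t;p(t,\cdot),\cdot)$, which makes sense because $p \in L^2(0,T;H^2)$ from the previous step, and with initial datum $(\Kolmo_0)^*\bar p$), defines $w(t,\cdot) = \bar p + \int_0^t q(s,\cdot)\,\mathrm{d}s$, shows that both $w$ and $p$ solve the same integro-differential variational problem \eqref{eq:VarProblemIntegro}, and proves uniqueness for that problem by a Gronwall argument, concluding $p = w$ and hence $\partial_t p = q \in L^2(0,T;H^2)$, $\partial_t^2 p \in L^2$, and then $p \in L^2(0,T;H^4)$ by higher-order elliptic regularity. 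Your proposal contains no analogue of this identification mechanism.

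Your parenthetical ``rigorously via Galerkin truncation or Nirenberg-type difference quotients in $t$'' is only invoked for the base lift, and carrying either device through the induction is where the real work lies: the coefficients are time-dependent (so differentiating, or taking difference quotients of, the equation injects $\dot a,\dot b$ terms and requires the compatibility/initial data $\partial_t^k p(0,\cdot)$, which you only identify heuristically ``by iterating the equation at $t=0$''), and the domain is unbounded, which is exactly the extension of the classical bounded-domain, time-independent argument that the paper flags as non-straightforward and handles via the $q$/$w$ construction. Without either reproducing that construction or giving a complete Galerkin-in-time-derivative (or difference-quotient) argument adapted to time-dependent coefficients on $\Rn$, the top rungs of your induction are not justified. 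The remaining ingredients of your plan (the bookkeeping of $\mathcal{R}_k$, the $L^\infty$ control of coefficient derivatives from Theorem \ref{Theo:CoeffProperties}, the regularity count $\partial_t^\ell p \in L^2(0,T;H^{2(m+1-\ell)})$, and the derivation of SFPE) are consistent with the paper.
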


\begin{remark}
This result is classically proved under either time-independent coefficients or bounded domains (e.g., \cite{Evans2010,Lions2012}). Although in classical reference which consider time-dependent coefficients such as \cite{Lions2012} it is explicitly mentioned therein that their results may be extended to unbounded domains, we found such extension non-straightforward. We therefore decided to revisit and extend the regularity results for parabolic equations which are contained in \cite[Section 6.3]{Evans2010} to time-dependent coefficients and unbounded domains, and to report the proof in Appendix \ref{sec:AppendixFokkerPlanck}.
\end{remark}

\appendix\section{Proofs of Sections \ref{sec:Notation} and \ref{sec:FokkerPlanck}}
\label{sec:AppendixFokkerPlanck}
\subsection{Proofs of Section \ref{sec:Notation}}

\begin{proof}[Proof of Theorem \ref{Theo:CoeffProperties}]
The first and second statements in Theorem \ref{Theo:CoeffProperties} stem from Morrey theorem as soon as we show the existence of a linear and bounded operator:
$$
E : H^1([0,T]\times\Rn,\R) \to H^1(\R^{n+1},\R) ,
$$
for which there exists a constant $C > 0$ such that:
\begin{itemize}
    \item $E u|_{[0,T]\times\Rn} = u$, for every $u \in H^1([0,T]\times\Rn,\R)$,
    \item $\| E u \|_{L^2} \le C \| u \|_{L^2}$ and $\| E u \|_{H^1} \le C \| u \|_{H^1}$, for every $u \in H^1([0,T]\times\Rn,\R)$,
\end{itemize}
for instance, see \cite{Brezis2011}. For this, we may follow the proof of \cite[Theorem 8.6]{Brezis2011}. More specifically, let $\eta \in C^{\infty}(\R,[0,1])$ be such that
$$
\eta(t) = \left\{\begin{array}{cc}
1 , & \displaystyle t < \frac{T}{4} , \\
0 , & \displaystyle t > \frac{3}{4}T ,
\end{array}\right.
$$
and for every $u : [0,T]\times\Rn\to\R$ define the mappings
$$
u^-(t,y) \triangleq \left\{\begin{array}{cc}
u(t,y) , & 0 < t \le T , \\
0 , & t \le 0
\end{array}\right. \quad \textnormal{and} \quad u^+(t,y) \triangleq \left\{\begin{array}{cc}
u(t,y) , & 0 \le t < T , \\
0 , & t \ge T .
\end{array}\right.
$$
If $u \in H^1([0,T]\times\Rn,\R)$, one sees that $\eta u^+ \in H^1([0,\infty)\times\Rn,\R)$ with
$$
\frac{\partial \eta u^+}{\partial t} = \eta \left( \frac{\partial u}{\partial t}\right)^+ + u^+ \frac{\partial \eta}{\partial t} \quad \textnormal{and} \quad \frac{\partial \eta u^+}{\partial y_i} = \eta \left( \frac{\partial u}{\partial y_i} \right)^+ , \quad i=1,\dots,n ,
$$
and that $(1 - \eta) u^- \in H^1((-\infty,T]\times\Rn,\R)$ with, for every $i=1,\dots,n$,
$$
\frac{\partial (1 - \eta) u^-}{\partial t} = (1 - \eta) \left( \frac{\partial u}{\partial t}\right)^- + u^- \frac{\partial \eta}{\partial t} \quad \textnormal{and} \quad \frac{\partial (1 - \eta) u^-}{\partial y_i} = (1 - \eta) \left( \frac{\partial u}{\partial y_i} \right)^-  .
$$
At this step, since $u = \eta u + (1 - \eta) u$, we first extend $\eta u^+$ by reflection at $t = 0$ through a mapping $v^1 \in H^1(\R^{n+1},\R)$ which thus satisfies
$$
\| v^1 \|_{L^2} \le C \| u \|_{L^2} \quad \textnormal{and} \quad \| v^1 \|_{H^1} \le C \| u \|_{H^1} ,
$$
and then we extend $(1 - \eta) u^-$ by reflection at $t = T$ through another mapping $v^2 \in H^1(\R^{n+1},\R)$ which thus satisfies
$$
\| v^2 \|_{L^2} \le C \| u \|_{L^2} \quad \textnormal{and} \quad \| v^2 \|_{H^1} \le C \| u \|_{H^1} ,
$$
where the constant $C > 0$ only depends on $\eta$. The conclusion easily follows if we define the linear and bounded operator $E$ as $E u \triangleq v^1 + v^2$.
\end{proof}

\subsection{Proofs of Section \ref{sec:FokkerPlanck}}

\begin{proof}[Proof of Theorem \ref{theor:exFPE}]
The proof is standard and based on the classical Lions scheme (see, e.g., \cite{Lions1968,Chipot2000,Evans2010}), therefore we mainly focus on deriving the constant 
$\secRev{C\big(\alpha,\|(a - \alpha I,b)\|_{\Cbasis}\big)}$ in \eqref{eq:estimate1} and the properties of $p $ when $f = 0$ and $\bar p$ is a non-negative density in $L^2(\Rn,\R)$. For this, consider the Gelfand triple
$$
V \triangleq H^1(\Rn,\R) = H^1_0(\Rn,\R) \hookrightarrow H \triangleq L^2(\Rn,\R) \hookrightarrow V^* = H^{-1}(\Rn,\R)
$$
and define the $t$-measurable bilinear form
\begin{align*}
    \Bil : \ &[0,T] \times V \times V \to \R \\
    &(t;u,v) \mapsto \sum^n_{i,j=1} \int_{\Rn} \frac{1}{2} a_{ij}(t,y) \frac{\partial u}{\partial y_j}(y) \frac{\partial v}{\partial y_i}(y) \; \mathrm{d}y \\
    &\hspace{20ex}+ \sum^n_{i=1} \int_{\Rn} \left( \sum^n_{j=1} \frac{1}{2} \frac{\partial a_{ij}}{\partial y_j}(t,y) - b_i(t,y) \right) \frac{\partial v}{\partial y_i}(y) u(y) \; \mathrm{d}y .
\end{align*}
Thanks to Theorem \ref{Theo:CoeffProperties}, one may prove that the form $\Bil$ is continuous and semi-coercive (see also the computations for the estimate \eqref{eq:estimate1} below). Therefore, thanks to a straightforward modification of the proof of \cite[Theorem 11.7]{Chipot2000}, there exists a unique solution $p \in C(0,T;H) \cap L^2(0,T;V)$ to the variational problem
\begin{equation} \label{eq:VarProblem}
    \begin{cases}
    \displaystyle \frac{\mathrm{d}}{\mathrm{d}t} ( p(t,\cdot) , \varphi )_H + \Bil(t;p(t,\cdot),\varphi) = ( f(t,\cdot) , \varphi )_H , \quad \varphi \in V , \\[10pt]
    p(0,\cdot) = \bar p(\cdot) ,
    \end{cases}
\end{equation}
which in addition satisfies $\displaystyle \frac{\partial p}{\partial t} \in L^2(0,T;V^*)$. At this step, thanks to Theorem \ref{Theo:CoeffProperties}
$$
a_{ij}(t,\cdot) p(t,\cdot) \in H^1(\Rn,\R) , \quad \textnormal{with} \quad \frac{\partial (a_{ij} p)}{\partial y_j}(t,\cdot) = p(t,\cdot) \frac{\partial a_{ij}}{\partial y_j}(t,\cdot) + a_{ij}(t,\cdot) \frac{\partial p}{\partial y_j}(t,\cdot) ,
$$
for every $i,j=1,\dots,n$ and almost every $t \in [0,T]$. Therefore, for every $\varphi \in C^{\infty}_c(\Rn;\R)$, $i,j=1,\dots,n$, and almost every $t \in [0,T]$, it holds that
\begin{align*}
    \int_{\Rn} a_{ij}(t,y) p(t,y) &\frac{\partial^2 \varphi}{\partial y_j \partial y_i}(y) \; \mathrm{d}y = \\
    &= -\int_{\Rn} \left( a_{ij}(t,y) \frac{\partial p}{\partial y_j}(t,y) + p(t,y) \frac{\partial a_{ij}}{\partial y_j}(t,y) \right) \frac{\partial \varphi}{\partial y_i}(y) \; \mathrm{d}y ,
\end{align*}
showing that $p$ solves FPE$_f$ with coefficients $(a,b) \in \C$, $\alpha > 0$.

Next, since $p \in L^2(0,T;V)$ and $\displaystyle \frac{\partial p}{\partial t} \in L^2(0,T;V^*)$, from \eqref{eq:VarProblem} we may compute
\begin{align*}
    \frac{\mathrm{d}}{\mathrm{d}t} \frac{\| p(t,\cdot) \|^2_{H}}{2} = \left\langle \frac{\partial p}{\partial t}(t,\cdot) , p(t,\cdot) \right\rangle_{V^*} = ( f(t,\cdot) , p(t,\cdot) )_H - \Bil(t;p(t,\cdot),p(t,\cdot)) ,
\end{align*}
and therefore Young's inequality yields
\begin{align*}
    \frac{\mathrm{d}}{\mathrm{d}t} \frac{\| p(t,\cdot) \|^2_{H}}{2} &\le \| f(t,\cdot) \|_{H} \| p(t,\cdot) \|_{H} - \frac{\alpha}{2} \| \nabla p(t,\cdot) \|^2_{H} \\
    &\hspace{35ex} + \bar C(a,b) \sum^n_{i=1} \int_{\Rn} \left| \frac{\partial p(t,y)}{\partial y_i} p(t,y) \right| \; \mathrm{d}y \\
    &\le \frac{\| f(t,\cdot) \|^2_{H}}{2} + \frac{\| p(t,\cdot) \|^2_{H}}{2} - \frac{\alpha}{2} \| \nabla p(t,\cdot) \|^2_{H} \\
    &\hspace{28ex} + n M \bar C(a,b)^2 \frac{\| p(t,\cdot) \|^2_{L^2}}{2} + \frac{\| \nabla p(t,\cdot) \|^2_{H}}{2 M} \\
    &\le \big(1 + n M \bar C(a,b)^2 \big) \frac{\| p(t,\cdot) \|^2_{H}}{2} + \frac{\| f(t,\cdot) \|^2_{H}}{2} - \frac{\alpha}{4} \| \nabla p(t,\cdot) \|^2_{H} ,
\end{align*}
where $M > 2 / \alpha$ is some large enough constant which stems from applying Young's inequality, whereas $\bar C(a,b) > 0$ is a constant which stems from Theorem \ref{Theo:CoeffProperties} and continuously depends on the $L^{\infty}$ norms of $b$ and of the derivatives of $a$ uniquely, and a routine application of Gronw\"all's inequality together with Theorem \ref{Theo:CoeffProperties} yield \eqref{eq:estimate1}.

Finally, assume $f = 0$ and that $\bar p$ is a non-negative density in $L^2(\Rn,\R)$. Thanks to the fact that $u^+ , u^- \in H^1(\Rn,\R)$ for every $u \in H^1(\Rn,\R)$, with $\nabla (u^+) = \nabla u \mathds{1}_{\{ u > 0 \}}$ and $\nabla (u^-) = \nabla u \mathds{1}_{\{ u < 0 \}}$, and that $u^- = (-u)^+$, leveraging the notation we introduced previously and applying \cite[Lemma 11.2]{Chipot2000} with $u = -p$ yield
\begin{align*}
    \frac{\mathrm{d}}{\mathrm{d}t} &\frac{\| p(t,\cdot)^- \|^2_{H}}{2} = -\left\langle \frac{\partial p}{\partial t}(t,\cdot) , p(t,\cdot)^- \right\rangle_{V^*} \\
    &= -\sum^n_{i,j=1} \int_{\Rn} \frac{1}{2} a_{ij}(t,y) \frac{\partial (p^+ - p^-)}{\partial y_j}(t,y) \frac{\partial (-p^-)}{\partial y_i}(t,y) \; \mathrm{d}y \\
    &\hspace{15ex}+ \sum^n_{i=1} \int_{\Rn} \left( \sum^n_{j=1} \frac{1}{2} \frac{\partial a_{ij}}{\partial y_j}(t,y) - b_i(t,y) \right) \frac{\partial (p^+ - p^-)}{\partial y_i}(t,y) p(t,y)^- \; \mathrm{d}y \\
    &\le -\frac{\alpha}{2} \| \nabla (p^-)(t,\cdot) \|^2_{H} + \bar C(a,b) \sum^n_{i=1} \int_{\Rn} \left| \frac{\partial p^-}{\partial y_i}(t,y) p(t,y)^- \right| \; \mathrm{d}y \\
    &\le -\frac{\alpha}{2} \| \nabla (p^-)(t,\cdot) \|^2_{H} + n M \bar C(a,b)^2 \frac{\| p(t,\cdot)^- \|^2_{H}}{2} + \frac{\| \nabla (p^-)(t,\cdot) \|^2_{H}}{2 M} \\
    &\le n M \bar C(a,b)^2 \frac{\| p(t,\cdot)^- \|^2_{H}}{2} ,
\end{align*}
and thanks to a routine application of Gronw\"all’s inequality, the fact that $\bar p(x) \ge 0$ almost everywhere yields $p(t,x) \ge 0$ for every $t \in [0,T]$ and almost every $x \in \Rn$.

At this step, for every $k \in \mathbb{N}$ choose a cut-off function $\varphi_k \in C^{\infty}_c(\Rn,[0,1])$ such that $\varphi_k(x) = 1$ for every $x \in \overline{B^{\Rn}_k(0)}$, $\textnormal{supp}(\varphi_k) \subseteq B^{\Rn}_{2k}(0)$, and whose first and second derivatives are uniformly bounded. For every $t \in [0,T]$, from the definition of FPE$_f$ and the monotone and dominated convergence theorems (here, we leverage both Theorem \ref{Theo:CoeffProperties} and the fact that $p(t,\cdot) \in L^2(\Rn,\R)$, $t \in [0,T]$) we may compute
\begin{align*}
    &\int_{\Rn} p(t,y) \mathrm{d}y = \underset{k\to\infty}{\lim} \ \int_{\Rn} p(t,y) \varphi_k(y) \mathrm{d}y \\
    &= \underset{k\to\infty}{\lim} \ \bigg( \int_{\Rn} \bar p(y) \varphi_k(y) \; \mathrm{d}y + \sum^n_{i,j=1} \int_{\Rn} \frac{1}{2} a_{ij}(t,y) \frac{\partial^2 \varphi_k}{\partial y_i \partial y_j}(y) p(t,y) \; \mathrm{d}y \\
    &\hspace{45ex} + \sum^n_{i=1} \int_{\Rn} b_i(t,y) \frac{\partial \varphi_k}{\partial y_i}(y) p(t,y) \; \mathrm{d}y \bigg) \\
    &= \int_{\Rn} \bar p(y) \; \mathrm{d}y = 1 .
\end{align*}

To conclude, it is clear that we just need to prove that the curve $\mu : [0,T] \to \ProbaSpace(\Rn)$ defined in the statement of the theorem is narrowly continuous. For this, let $t \in [0,T]$ and $(t_k)_{k \in \mathbb{N}} \subseteq [0,T]$ such that $t_k \to t$ for $k \to \infty$. Note that $p \in C(0,T;H)$ implies that $\| p(t_n,\cdot) - p(t,\cdot) \|_{L^2} \to 0$ for $k \to \infty$. In particular, for any function $\varphi \in C_c(\Rn,\R)$, as soon as $k \to \infty$ we infer that
$$
\left| \int_{\Rn} \varphi(y) p(t_n,y) \; \mathrm{d}y - \int_{\Rn} \varphi(y) p(t,y) \; \mathrm{d}y \right| \le \| \varphi \|_{L^2} \| p(t_n,\cdot) - p(t,\cdot) \|_{L^2} \to 0 .
$$
We conclude from the fact that the narrow and weak* topologies coincide in $\ProbaSpace(\Rn)$.
\end{proof}

\begin{proof}[Proof of Theorem \ref{Theo:RegularityFPE}]
Given that the proof is substantially long, for the sake of clarity we divide it in several step. Below, we adopt the notation we introduced and used in the proof of Theorem \ref{theor:exFPE}.

\vspace{5pt}

\noindent \textbf{1) A second parabolic estimate.} In this section, we provide computations by rather considering SFPE$_f$ with $0\neq f \in L^2([0,T]\times\Rn,\R)$. Let $(v_i)_{i \in \mathbb{N}}$ be a countable basis of $H^{2(m + 1)}(\Rn,\R)$ (and in turn of $H^{\ell}(\Rn,\R)$, for $\ell=0,\dots,2m+1$), such that $(v_i)_{i \in \mathbb{N}}$ is orthonormal in $H^1(\Rn,\R)$. Therefore, there is $(\bar x_i)_{i \in \mathbb{N}} \subseteq \R$ such that
\begin{equation} \label{eq:convInitialCond}
    \underset{j\to\infty}{\lim} \ \sum^j_{i=1} \bar x_i v_i = \bar p , \quad \textnormal{in} \ H^1(\Rn,\R) .
\end{equation}
Moreover, by revisiting the proof of \cite[Theorem 11.7]{Chipot2000}, one easily see that for every $k \in \mathbb{N}$ there exists $x^k \in AC([0,T],\R^k)$ such that the function
$$
p_k(t,\cdot) \triangleq \sum^k_{i=1} x^k_i(t) v_i \in C(0,T;L^2(\Rn,\R)) \cap L^2(0,T;H^1(\Rn,\R))
$$
is the unique solution to the variational problem
\begin{equation} \label{eq:VarProblemDis}
    \begin{cases}
    \displaystyle \frac{\mathrm{d}}{\mathrm{d}t} ( p_k(t,\cdot) , v_i )_{L^2} + \Bil(t;p_k(t,\cdot),v_i) = (f(t,\cdot),v_i)_H , \quad i=1,\dots,k , \\[10pt]
    x^k_i(0)= \bar x_i , \quad i=1,\dots,k ,
    \end{cases}
\end{equation}
which in addition satisfies (see also \cite[Page 197]{Chipot2000})
\begin{equation} \label{eq:CondConv}
    \begin{cases}
    \displaystyle \frac{\partial p_k}{\partial t}(t,\cdot) = \sum^k_{i=1} \frac{\mathrm{d}}{\mathrm{d}t} x^k_i(t) v_i \in L^2(0,T;H^1(\Rn,\R)) , \\
    \displaystyle \frac{\partial p_k}{\partial y_j}(t,\cdot) = \sum^k_{i=1} x^k_i(t) \frac{\partial v_i}{\partial y_j} \in AC(0,T;L^2(\Rn,\R)) , \quad j=1,\dots,n , \\
    p_k \to p \quad \textnormal{strongly in $L^2(0,T;H^1(\Rn,\R))$} .
    \end{cases}
\end{equation}
Moreover, from the proof of Theorem \ref{theor:exFPE}, it is clear that each $p_k$ satisfies \eqref{eq:estimate1}.

At this step, by multiplying each equation in \eqref{eq:VarProblemDis} by $\displaystyle \frac{\mathrm{d}}{\mathrm{d}t} x^k_i(t)$ and summing those over $i=1,\dots,k$, thanks to \eqref{eq:CondConv} for almost every $t \in [0,T]$ we may compute
\begin{align} \label{eq:DesiredBound}
    \bigg\| \frac{\partial p_k}{\partial t}&(t,\cdot) \bigg\|^2_{L^2} = \sum^k_{i=1} \left( \frac{\mathrm{d}}{\mathrm{d}t} x^k_i(t) \right) \frac{\mathrm{d}}{\mathrm{d}s}\bigg|_{s=t} ( p_k(s,\cdot) , v_i)_{L^2} \nonumber \\
    &= \sum^k_{i=1} \left( \frac{\mathrm{d}}{\mathrm{d}t} x^k_i(t) \right) \Big( (f(t,\cdot),v_i)_{L^2} - \Bil(t;p_k(t,\cdot),v_i) \Big) \nonumber \\
    &\le \| f(t,\cdot) \|_{L^2} \left\| \frac{\partial p_k}{\partial t}(t,\cdot) \right\|_{L^2} - \sum^n_{i,j=1} \int_{\Rn} \frac{1}{2} a_{ij}(t,y) \frac{\partial p_k}{\partial y_j}(t,y) \frac{\mathrm{d}}{\mathrm{d}t}\left( \frac{\partial p_k}{\partial y_i} \right)(t,y) \; \mathrm{d}y \\
    &\quad - \sum^n_{i=1} \int_{\Rn} \left( \sum^n_{j=1} \frac{1}{2} \frac{\partial a_{ij}}{\partial y_j}(t,y) - b_i(t,y) \right) \frac{\partial}{\partial y_i}\left( \frac{\partial p_k}{\partial t} \right)(t,y) \ p_k(t,y) \; \mathrm{d}y . \nonumber
\end{align}
We are going to bound the last two terms in \eqref{eq:DesiredBound}. Specifically, for the first to the last term, thanks to Theorem \ref{Theo:CoeffProperties} and \eqref{eq:CondConv} for every $t \in [0,T]$ we obtain that
\begin{align*}
    &-\int^t_0 \int_{\Rn} \sum^n_{i,j=1} \frac{1}{2} a_{ij}(s,y) \frac{\partial p_k}{\partial y_j}(s,y) \frac{\mathrm{d}}{\mathrm{d}s}\left( \frac{\partial p_k}{\partial y_i} \right)(s,y) \; \mathrm{d}y \; \mathrm{d}s = \\
    &= -\int^t_0 \int_{\Rn} \sum^n_{i,j=1} \frac{1}{2} \bigg( \frac{\mathrm{d}}{\mathrm{d}s}\left( a_{ij} \frac{\partial p_k}{\partial y_j} \frac{\partial p_k}{\partial y_i} \right)(s,y) - \dot{a}_{ij}(s,y) \frac{\partial p_k}{\partial y_j}(s,y) \frac{\partial p_k}{\partial y_i}(s,y) \\
    &\hspace{37.5ex} - a_{ij}(s,y) \frac{\mathrm{d}}{\mathrm{d}s}\left( \frac{\partial p_k}{\partial y_j} \right)(s,y) \ \frac{\partial p_k}{\partial y_i}(s,y) \bigg) \; \mathrm{d}y \; \mathrm{d}s ,
\end{align*}
and since $a_{ji} = a_{ij}$ for every $i,j=1,\dots,n$, we may compute
\begin{align*}
    &-\int^t_0 \int_{\Rn} \sum^n_{i,j=1} a_{ij}(s,y) \frac{\partial p_k}{\partial y_j}(s,y) \frac{\mathrm{d}}{\mathrm{d}s}\left( \frac{\partial p_k}{\partial y_i} \right)(s,y) \; \mathrm{d}y \; \mathrm{d}s \le \\
    &\le -\sum^n_{i,j=1} \frac{1}{2} \int_{\Rn} \left( a_{ij}(t,y) \frac{\partial p_k}{\partial y_j}(t,y) \frac{\partial p_k}{\partial y_i}(t,y) - a_{ij}(0,y) \frac{\partial p_k}{\partial y_j}(0,y) \frac{\partial p_k}{\partial y_i}(0,y) \right) \; \mathrm{d}y \; \mathrm{d}s \\
    &\hspace{30ex} + \frac{\| \dot{a} \|_{L^{\infty}}}{2} \int^t_0 \int_{\Rn} \sum^n_{i,j=1} \left| \frac{\partial p_k}{\partial y_j}(s,y) \right| \left| \frac{\partial p_k}{\partial y_i}(s,y) \right| \; \mathrm{d}y \; \mathrm{d}s .
\end{align*}
Therefore, the convergence \eqref{eq:convInitialCond} and \eqref{eq:estimate1} finally yield
\begin{align} \label{eq:IntermediateIneq}
    -\int^t_0 \int_{\Rn} &\sum^n_{i,j=1} a_{ij}(s,y) \frac{\partial p_k}{\partial y_j}(s,y) \frac{\mathrm{d}}{\mathrm{d}s}\left( \frac{\partial p_k}{\partial y_i}(s,y) \right) \; \mathrm{d}y \; \mathrm{d}s \le \nonumber \\
    &\le -\frac{\alpha}{2} \| \nabla p_k(t,\cdot) \|^2_{L^2} + C\big( \alpha , \|(\secRev{a - \alpha I},b)\|_{\Cbasis} \big ) \left( \| \bar p \|^2_{H^1} + \int^t_0 \| f(s,\cdot) \|^2_{L^2} \; \mathrm{d}s \right) ,
\end{align}
where $C\big( \alpha , \|(\secRev{a - \alpha I},b)\|_{\Cbasis} \big ) > 0$ continuously depends on $\alpha$ and $\|(\secRev{a - \alpha I},b)\|_{\Cbasis}$ uniquely. Below, we will implicitly overload the constant $C\big( \alpha , \|(\secRev{a - \alpha I},b)\|_{\Cbasis} \big )$.

We now focus on the last term in \eqref{eq:DesiredBound}. For this, first thanks to Theorem \ref{Theo:CoeffProperties} one obtains that, for every $i=1,\dots,n$, $\varphi \in C^{\infty}_c(\Rn,\R)$, and almost every $t \in [0,T]$,
\begin{align*}
    -\int_{\Rn} \bigg( \sum^n_{j=1} \frac{1}{2} \frac{\partial a_{ij}}{\partial y_j}(t,y) - b_i(t,y) &\bigg) \frac{\partial \varphi}{\partial y_i}(y) p_k(t,y) \; \mathrm{d}y = \\
    &= \int_{\Rn} \frac{\partial}{\partial y_i} \left( \left( \sum^n_{j=1} \frac{1}{2} \frac{\partial a_{ij}}{\partial y_j} - b_i \right) p_k \right)(t,y) \ \varphi(y) \; \mathrm{d}y .
\end{align*}
Since from \eqref{eq:CondConv} we infer the existence of a sequence $(\varphi_{\ell})_{\ell \in \mathbb{N}} \subseteq C^{\infty}_c(\Rn,\R)$ such that $\displaystyle \left\| \frac{\partial p_k}{\partial t}(t,\cdot) - \varphi_{\ell} \right\|_{H^1} \to 0$, for $\ell\to\infty$, again Theorem \ref{Theo:CoeffProperties} and \eqref{eq:estimate1} yield
\begin{align*}
    &-\int^t_0 \sum^n_{i=1} \int_{\Rn} \left( \sum^n_{j=1} \frac{1}{2} \frac{\partial a_{ij}}{\partial y_j}(s,y) - b_i(s,y) \right) \frac{\partial}{\partial y_i}\left( \frac{\partial p_k}{\partial s} \right)(s,y) \ p_k(s,y) \; \mathrm{d}y \; \mathrm{d}s = \\
    &= \int^t_0 \sum^n_{i=1} \int_{\Rn} \left( \sum^n_{j=1} \frac{1}{2} \frac{\partial^2 a_{ij}}{\partial y_i \partial y_j}(s,y) - \frac{\partial b_i}{\partial y_i}(s,y) \right) \frac{\partial p_k}{\partial s}(s,y) p_k(s,y) \; \mathrm{d}y \; \mathrm{d}s \\
    &\quad+ \int^t_0 \sum^n_{i=1} \int_{\Rn} \left( \sum^n_{j=1} \frac{1}{2} \frac{\partial a_{ij}}{\partial y_j}(s,y) - b_i(s,y) \right) \frac{\partial p_k}{\partial y_i}(s,y) \frac{\partial p_k}{\partial s}(s,y) \; \mathrm{d}y \; \mathrm{d}s \\
    &\le C\big( \alpha , \|(\secRev{a - \alpha I},b)\|_{\Cbasis} \big )  \int^t_0 \int_{\Rn} \left( |p_k(s,y)| + \sum^n_{i=1} \left| \frac{\partial p_k}{\partial y_i}(s,y) \right| \right) \left| \frac{\partial p_k}{\partial s}(s,y) \right| \; \mathrm{d}y \; \mathrm{d}s \\
    &\le \frac{1}{2} \int^t_0 \left\| \frac{\partial p_k}{\partial s}(s,\cdot) \right\|^2_{L^2} \; \mathrm{d}s + C\big( \alpha , \|(\secRev{a - \alpha I},b)\|_{\Cbasis} \big ) \left( \| \bar p \|^2_{H^1} + \int^t_0 \| f(s,\cdot) \|^2_{L^2} \; \mathrm{d}s \right) ,
\end{align*}
for every $t \in [0,T]$. By summing up this latter inequality with \eqref{eq:IntermediateIneq} and \eqref{eq:DesiredBound}, via a routine Granw\"all's inequality argument we infer that, for $k \in \mathbb{N}$ and $t \in [0,T]$, 
\begin{align} \label{eq:estimate2}
    \int^t_0 \left\| \frac{\partial p_k}{\partial s}(s,\cdot) \right\|^2_{L^2} \; \mathrm{d}s \le C\big( \alpha , \|(\secRev{a - \alpha I},b)\|_{\Cbasis} \big ) \left( \| \bar p \|^2_{H^1} + \int^t_0 \| f(s,\cdot) \|^2_{L^2} \; \mathrm{d}s \right) .
\end{align}

\vspace{5pt}

\noindent \textbf{2) First-order-in-time and second-order-in-space regularity.} In this section, we provide computations by rather considering SFPE$_f$ with $0\neq f \in L^2([0,T]\times\Rn,\R)$. We first show the inclusion $\displaystyle \frac{\partial p}{\partial t} \in L^2(0,T;L^2(\Rn,\R)) \cong L^2([0,T]\times\Rn,\R)$, this latter identification being true since $L^2(\Rn,\R)$ is a separable Hilbert space. For this, we show the existence of $\xi \in L^2(0,T;L^2(\Rn,\R))$ such that the following holds in $L^2(\Rn,\R)$:
\begin{equation} \label{eq:WeakTimeDer}
    \int^T_0 \xi(t,\cdot) \psi(t) \; \mathrm{d}t = -\int^T_0 p(t,\cdot) \frac{\mathrm{d}\psi}{\mathrm{d}t}(t) \; \mathrm{d}t , \quad \textnormal{for all} \ \psi \in C^{\infty}_c([0,T],\R) .
\end{equation}
For this, we first note that, thanks to \eqref{eq:estimate2} there exists $\xi \in L^2(0,T;L^2(\Rn,\R))$ such that $\displaystyle\left( \frac{\partial p_k}{\partial t} \right)_{k \in \mathbb{N}} \subseteq L^2(0,T;L^2(\Rn,\R))$ weakly converges to $\xi$, up to a subsequence. Now, for any $\psi \in C^{\infty}_c([0,T],\R)$ and any $\varphi \in L^2(\Rn,\R)$, thanks to the last (convergence) property in \eqref{eq:CondConv} we may compute
\begin{align*}
    \bigg( \int^T_0 &\xi(t,\cdot) \psi(t) \; \mathrm{d}t , \varphi \bigg)_{L^2} = \int^T_0 \big( \xi(t,\cdot) , \psi(t) \varphi \big)_{L^2} \; \mathrm{d}t = \\
    &= \underset{k\to\infty}{\lim} \ \int^T_0 \left( \frac{\partial p_k}{\partial t}(t,\cdot) , \psi(t) \varphi \right)_{L^2} \; \mathrm{d}t = \underset{k\to\infty}{\lim} \ \left( \int^T_0 \frac{\partial p_k}{\partial t}(t,\cdot) \psi(t) \; \mathrm{d}t , \varphi \right)_{L^2} \\
    &= -\underset{k\to\infty}{\lim} \ \left( \int^T_0 p_k(t,\cdot) \frac{\mathrm{d}\psi}{\mathrm{d}t}(t) \; \mathrm{d}t , \varphi \right)_{L^2} = -\underset{k\to\infty}{\lim} \ \int^T_0 \left( p_k(t,\cdot) , \frac{\mathrm{d}\psi}{\mathrm{d}t}(t) \varphi \right)_{L^2} \; \mathrm{d}t \\
    &= -\int^T_0 \left( p(t,\cdot) , \frac{\mathrm{d}\psi}{\mathrm{d}t}(t) \varphi \right)_{L^2} \; \mathrm{d}t = \left( -\int^T_0 p(t,\cdot) \frac{\mathrm{d}\psi}{\mathrm{d}t}(t) \; \mathrm{d}t , \varphi \right)_{L^2} ,
\end{align*}
and the equivalence \eqref{eq:WeakTimeDer} readily follows.

Next, we prove that $p \in L^2(0,T;H^2(\Rn,\R))$. For this, we first note that, from Theorem \ref{Theo:CoeffProperties} and the variational problem \eqref{eq:VarProblem}, thanks to our previous computations the following variational equality holds true for almost every $t \in [0,T]$:
\begin{align}
    \label{eq:Elliptic1}
    &\int_{\Rn} \sum^n_{i,j=1} \frac{1}{2} a_{ij}(t,y) \frac{\partial p}{\partial y_j}(t,y) \frac{\partial \varphi}{\partial y_i}(y) \; \mathrm{d}y \hspace{20ex} \textnormal{for} \quad \varphi \in H^1(\Rn,\R) \\
    &+ \sum^n_{i=1} \int_{\Rn} \left( \sum^n_{j=1} \frac{1}{2} \frac{\partial a_{ij}}{\partial y_j}(t,y) - b_i(t,y) \right) \frac{\partial \varphi}{\partial y_i}(y) p(t,y) \; \mathrm{d}y = \int_{\Rn} g(t,y) \varphi(y) \; \mathrm{d}y , \nonumber
\end{align}
where
\begin{equation} \label{eq:Elliptic2}
    g \triangleq f - \frac{\partial p}{\partial t} \in L^2([0,T]\times\Rn,\R) .
\end{equation}
Therefore, Theorem \ref{Theo:CoeffProperties} and the classical elliptic regularity theory (see, e.g., \cite[Section 6.3]{Evans2010}) imply, for almost every $t \in [0,T]$, both that $p(t,\cdot) \in H^2(\Rn,\R)$ and that
\begin{equation} \label{eq:EllipticBound}
    \| p(t,\cdot) \|^2_{H^2} \le C\big( \alpha , \|(\secRev{a - \alpha I},b)\|_{\Cbasis} \big ) \Big( \| p(t,\cdot) \|^2_{L^2} + \| g(t,\cdot) \|^2_{L^2} \Big) .
\end{equation}
At this step, from our choice for the countable basis $(v_i)_{i \in \mathbb{N}} \subseteq H^{2(m + 1)}(\Rn,\R)$, for almost every $t \in [0,T]$ and every $i \in \mathbb{N}$, in particular there exists $z_i(t) \in \R$ such that
\begin{equation} \label{eq:convInH2}
    p(t,\cdot) = \underset{k\to\infty}{\lim} \ \sum^k_{i=1} z_i(t) v_i(\cdot) , \quad \textnormal{in \ $H^2(\Rn,\R)$} ,
\end{equation}
for almost every $t \in [0,T]$. But, up to extracting a subsequence, \eqref{eq:CondConv} yields
$$
p(t,\cdot) = \underset{k\to\infty}{\lim} \ \sum^k_{i=1} x^k_i(t) v_i(\cdot) , \quad \textnormal{in \ $H^1(\Rn,\R)$} ,
$$
for almost every $t \in [0,T]$, and from the orthonormality of $(v_i)_{i \in \mathbb{N}}$ in $H^1(\Rn,\R)$ we obtain that $z_i(t) = x^k_i(t)$, for every $k \in \mathbb{N}$, $i=1,\dots,k$ and almost every $t \in [0,T]$. Hence, \eqref{eq:convInH2} yields that the mapping $p : [0,T] \to H^2(\Rn,\R)$ is strongly Bochner measurable. In addition, combining the elliptic estimate \eqref{eq:EllipticBound} together with the estimates \eqref{eq:estimate1} and \eqref{eq:estimate2} finally provides that $p \in L^2(0,T;H^2(\Rn,\R))$ with
\begin{align} \label{eq:estimate3}
    \int^t_0 \| p(s,\cdot) \|^2_{H^2} \; &\mathrm{d}s \le \nonumber \\
    &\le C\big( \alpha , \|(\secRev{a - \alpha I},b)\|_{\Cbasis} \big ) \left( \| \bar p \|^2_{H^1} + \int^t_0 \| f(s,\cdot) \|^2_{L^2} \; \mathrm{d}s \right) , \quad t \in [0,T] .
\end{align}

Before moving on with additional regularity properties, we note that, thanks to Theorem \ref{Theo:CoeffProperties} and the fact that $p \in L^2(0,T;H^2(\Rn,\R))$, integrating by parts the variational problem \eqref{eq:VarProblem} with $f = 0$ readily yields
$$
\int^T_0 \int_{\Rn} \left( \frac{\partial p}{\partial t}(t,y) - (\Kolmo_t)^* p(t,y) \right) \varphi(y) \; \mathrm{d}y \; \mathrm{d}t = 0 , \quad \varphi \in C^{\infty}_c(\Rn,\R) ,
$$
and SFPE$_0$ follows from the previous regularity properties and a density argument.

\vspace{5pt}

\noindent \textbf{3) Second-order-in-time and fourth-order-in-space regularity.} We now turn to the original setting SFPE$_0$, i.e., SFPE$_f$ with $f = 0$. The first step to further improve the regularity of $p$ consists of formally differentiating \eqref{eq:VarProblem} with respect to time and studying solutions $q : [0,T]\times\Rn\to\R$ to the new variational problem
\begin{equation} \label{eq:VarProblemReg}
    \begin{cases}
    \displaystyle \frac{\mathrm{d}}{\mathrm{d}t} ( q(t,\cdot) , \varphi )_{L^2} + \Bil(t;q(t,\cdot),\varphi) + \\[5pt]
    \hspace{25ex} + \BilDot(t;p(t,\cdot),\varphi) = 0 , \quad \varphi \in H^1(\Rn,\R) , \\
    q(0,\cdot) = (\Kolmo_0)^* \bar p(\cdot) .
    \end{cases}
\end{equation}
Problem \eqref{eq:VarProblemReg} is well-posed. Indeed, since $p \in L^2(0,T;H^2(\Rn,\R))$, thanks to Theorem \ref{Theo:CoeffProperties} one easily shows that
$$
\BilDot(t;p(t,\cdot),\varphi) = -\int_{\Rn} (\KolmoDot_t)^* p(t,y) \varphi(y) \; \mathrm{d}y ,
$$
for every $\varphi \in H^1(\Rn,\R)$ and almost every $t \in [0,T]$. Therefore, since $(\KolmoDot_t)^* p \in L^2([0,T]\times\Rn,\R)$ and $(\Kolmo_0)^* \bar p \in H^1(\Rn,\R)$, problem \eqref{eq:VarProblemReg} fits the setting of Theorem \ref{theor:exFPE}, thus from our previous computations there exists a unique solution $q \in L^2(0,T;H^2(\Rn,\R))$ to \eqref{eq:VarProblemReg}, which additionally satisfies $\displaystyle \frac{\partial q}{\partial t} \in L^2([0,T]\times\Rn,\R)$.

At this step, we define the function
\begin{align*}
    w : \ &[0,T]\times\Rn\to\R \\
    &(t,y) \mapsto \bar p(y) + \left( \int^t_0 q(s,\cdot) \; \mathrm{d}s \right)(y) = \bar p(y) + \int^t_0 q(s,y) \; \mathrm{d}s ,
\end{align*}
where the Lebesgue integral in the last equality stems from the properties of the Bochner integral for functions in $L^2([0,T]\times\Rn,\R)$. Clearly, $w \in C^0(0,T;H^2(\Rn,\R))$, and Fubini theorem yields, for every $\varphi_1 \in L^2(\Rn,\R)$ and $\psi \in C^{\infty}_c([0,T],\R)$,
$$
\int_{\Rn} \left( \int^T_0 \left(\int^t_0 q(s,y) \; \mathrm{d}s \right) \frac{\mathrm{d} \psi}{\mathrm{d}t}(t) \; \mathrm{d}t + \int^T_0 q(t,y) \psi(t) \; \mathrm{d}t \right) \varphi_1(y) \; \mathrm{d}y = 0 ,
$$
and, for almost every $t \in [0,T]$ and every $\varphi_2 \in C^{\infty}_c(\Rn,\R)$,
$$
\int_{\Rn} w(t,y) \frac{\partial \varphi_2}{\partial y_j}(y) \; \mathrm{d}y = -\int_{\Rn} \left( \frac{\partial \bar p}{\partial y_j}(y) + \int^t_0 \frac{\partial q}{\partial y_j}(s,y) \; \mathrm{d}s \right) \varphi_2(y) \; \mathrm{d}y , \quad j=1,\dots,n .
$$
In particular, from such properties we readily deduce that, for every $j=1,\dots,n$,
\begin{equation} \label{eq:CondDer}
    \frac{\partial w}{\partial t} = q \in L^2(0,T;H^2(\Rn,\R)) , \quad \textnormal{with} \quad \frac{\partial}{\partial y_j}\left( \frac{\partial w}{\partial t} \right) = \frac{\partial}{\partial t}\left( \frac{\partial w}{\partial y_j} \right) .
\end{equation}
Thanks to Theorem \ref{Theo:CoeffProperties} and the properties listed in \eqref{eq:CondDer}, 
for every $\varphi \in C^{\infty}_c(\Rn,\R)$ and almost every $t \in [0,T]$, we may compute
\begin{align*}
    &\frac{\mathrm{d}}{\mathrm{d}t} ( w(t,\cdot) , \varphi )_{L^2} = ( q(t,\cdot) , \varphi )_{L^2} \\
    &= \big( (\Kolmo_0)^* \bar p , \varphi \big)_{L^2} - \int^t_0 \Bil(s;q(s,\cdot),\varphi) \; \mathrm{d}s - \int^t_0 \BilDot(s;p(s,\cdot) \pm w(s,\cdot),\varphi) \; \mathrm{d}s \\
    &= \int_{\Rn} (\Kolmo_0)^* \bar p(y) \varphi(y) \; \mathrm{d}y - \\
    &\quad -\sum^n_{i,j=1} \int_{\Rn} \frac{1}{2} \left( \int^t_0 \left( a_{ij}(s,y) \frac{\partial}{\partial s}\left( \frac{\partial w}{\partial y_j} \right)(s,y) + \dot{a}_{ij}(s,y) \frac{\partial w}{\partial y_j}(s,y) \right) \mathrm{d}s \right) \frac{\partial \varphi}{\partial y_i}(y) \; \mathrm{d}y \\
    &\quad - \sum^n_{j=1} \int_{\Rn} \Bigg( \int^t_0 \Bigg( \Bigg( \sum^n_{j=1} \frac{\partial \dot{a}_{ij}}{\partial y_j}(s,y) - \dot{b}_i(s,y) \Bigg) w(s,y) \\
    &\hspace{28.75ex}+ \Bigg( \sum^n_{j=1} \frac{\partial a_{ij}}{\partial y_j}(s,y) - b_i(s,y) \Bigg) \frac{\partial w}{\partial s}(s,y) \Bigg) \mathrm{d}s \Bigg) \frac{\partial \varphi}{\partial y_i}(y) \; \mathrm{d}y \\
    &\quad - \int^t_0 \BilDot(s;(p - w)(s,\cdot),\varphi) \; \mathrm{d}s .
\end{align*}
Therefore, from \cite[Theorem 11.5]{Chipot2000} and the fact that
$$
\int_{\Rn} (\Kolmo_0)^* \bar p(y) \varphi(y) \; \mathrm{d}y = -\Bil(0;\bar p,\varphi) ,
$$
we conclude that $w$ solves the following integro-differential variational problem
\begin{equation} \label{eq:VarProblemIntegro}
    \begin{cases}
    \displaystyle \frac{\mathrm{d}}{\mathrm{d}t} ( w(t,\cdot) , \varphi )_{L^2} + \Bil(t;w(t,\cdot),\varphi) + \\[5pt]
    \displaystyle \hspace{10ex} + \int^t_0 \BilDot(s;(p-w)(s,\cdot),\varphi) \; \mathrm{d}s = 0 , \quad \varphi \in H^1(\Rn,\R) , \\[10pt]
    w(0,\cdot) = \bar p(\cdot) .
    \end{cases}
\end{equation}
Clearly, $p = w$ is solution to \eqref{eq:VarProblemIntegro}. In particular, if so we would obtain that $\displaystyle \frac{\partial p}{\partial t} \in L^2(0,T;H^2(\Rn,\R))$ and $\displaystyle \frac{\partial^2 p}{\partial t^2} \in L^2([0,T]\times\Rn,\R)$, and thanks to the choice of the basis $(v_i)_{i \in \mathbb{N}} \subseteq H^{2(m + 1
)}(\Rn,\R)$ and to a higher order elliptic regularity argument applied to \eqref{eq:Elliptic1}--\eqref{eq:Elliptic2} with $f=0$, we would also obtain that $p \in L^2(0,T;H^4(\Rn,\R))$, that is the sought second-order-in-time and fourth-order-in-space regularity.

Hence, to conclude we need to prove that the integro-differential variational problem \eqref{eq:VarProblemIntegro} may have one solution at most. For this, let $z_1 , z_2 \in L^2(0,T;H^2(\Rn,\R))$, such that $\displaystyle \frac{\partial z_1}{\partial t} , \frac{\partial z_2}{\partial t} \in L^2([0,T]\times\Rn,\R)$, satisfy \eqref{eq:VarProblemIntegro}. By defining the mapping
\begin{align*}
    h : \ &[0,T] \to L^2(\Rn,\R) \\
    &t \mapsto \int^t_0 (\KolmoDot_s)^* (z_1 - z_2)(s,\cdot) \; \mathrm{d}s ,
\end{align*}
it is readily checked that $h \in L^2([0,T]\times\Rn,\R)$, in particular satisfying
\begin{align} \label{eq:IntegroBound}
    &\int^t_0 \| h(s,\cdot) \|^2_{L^2} \; \mathrm{d}s \le \nonumber \\
    &\hspace{5ex}\le C\big( \alpha , \|(\secRev{a - \alpha I},b)\|_{\Cbasis} \big ) \int^t_0 \left( \int^s_0 \| (z_1 - z_2)(r,\cdot) \|^2_{H^2} \; \mathrm{d}r \right) \; \mathrm{d}s , \quad t \in [0,T] ,
\end{align}
and that
\begin{equation*}
    \begin{cases}
    \displaystyle \frac{\mathrm{d}}{\mathrm{d}t} ( (z_1 - z_2)(t,\cdot) , \varphi )_{L^2} + \Bil(t;(z_1 - z_2)(t,\cdot),\varphi) = \\[5pt]
    \hspace{45ex}= ( h(t,\cdot) , \varphi )_{L^2} , \quad \varphi \in H^1(\Rn,\R) , \\
    (z_1 - z_2)(0,\cdot) = 0 .
    \end{cases}
\end{equation*}
Therefore, we are in the setting of Theorem \ref{theor:exFPE} and of our previous computations. In particular, we may combine \eqref{eq:estimate3} with \eqref{eq:IntegroBound} to obtain that, for every $t \in [0,T]$,
\begin{align*}
    \int^t_0 \| (z_1 - z_2)(s,\cdot) \|^2_{H^2} \; &\mathrm{d}s \le C\big( \alpha , \|(\secRev{a - \alpha I},b)\|_{\Cbasis} \big ) \int^t_0 \| h(s,\cdot) \|^2_{L^2} \; \mathrm{d}s \\
    &\le C\big( \alpha , \|(\secRev{a - \alpha I},b)\|_{\Cbasis} \big ) \int^t_0 \left( \int^s_0 \| (z_1 - z_2)(r,\cdot) \|^2_{H^2} \; \mathrm{d}r \right) \; \mathrm{d}s ,
\end{align*}
and a routine application of Gronw\"all's inequality allows us to conclude that $z_1 = z_2$.

\vspace{5pt}

\noindent \textbf{4) Conclusion.} Given Theorem \ref{Theo:CoeffProperties}, the regularity of the initial condition $\bar p \in H^{2 m + 1}(\Rn,\R)$, and the choice of the basis $(v_i)_{i \in \mathbb{N}} \subseteq H^{2(m + 1)}(\Rn,\R)$, one may easily iterate the previous computations by induction to define successive time derivatives of $p$ via iteratively considering formal differentiation-in-time of the variational problem \eqref{eq:VarProblemReg}. In particular, it is clear how to extend properties \eqref{eq:CondDer} and define updated integro-differential variational problems \eqref{eq:VarProblemIntegro} by induction, which, thanks to iteratively higher order elliptic regularity arguments applied to \eqref{eq:Elliptic1}--\eqref{eq:Elliptic2} and the elliptic estimate \eqref{eq:estimate3}, provide that $\displaystyle \frac{\partial^{\ell} p}{\partial t^{\ell}} \in L^2(0,T;H^{2(m+1-\ell)}(\Rn,\R))$, for $\ell=1,\dots,m+1$, and $p \in L^2(0,T;H^{2(m + 1)}(\Rn,\R))$. The conclusion follows.
\end{proof}

\newpage

\bibliographystyle{siamplain}
\bibliography{references}
\end{document}